\newtheorem*{remark}{Remark}
\DeclareMathOperator*{\argmax}{arg\,max}
\DeclareMathOperator*{\argmin}{arg\,min}
\newcommand{\sign}{\text{sign}}
\newtheorem{theorem}{Theorem}
\crefname{section}{Sec.}{Secs.}
\Crefname{section}{Section}{Sections}
\Crefname{table}{Table}{Tables}
\crefname{table}{Tab.}{Tabs.}
\def\thanks#1{\protected@xdef\@thanks{\@thanks
        \protect\footnotetext{#1}}}
\begin{document}

\title{Robust Generalization against Photon-Limited Corruptions \\ via Worst-Case Sharpness Minimization}

\vspace{-5mm}
\author{
	Zhuo Huang$^{1,\dagger}$,\  
	Miaoxi Zhu$^{2,\dagger}$\thanks{\scriptsize{$^{\dagger}$Equal contribution. $^1$Syndey AI Centre, The University of Sydney; $^2$National Engineering Research Center for Multimedia Software, Institute of Artificial Intelligence, School of Computer Science and Hubei Key Laboratory of Multimedia and Network Communication Engineering, Wuhan University; $^3$JD Explore Academy; $^4$Department of Automation, University of Science and Technology of China; $^5$Key Laboratory of Intelligent Perception and Systems for High-Dimensional Information of Ministry of Education, School of Computer Science and Engineering, Nanjing University of Science and Technology; $^6$Department of Computer Science, Hong Kong Baptist University}. Correspondence to Jun Yu (\texttt{harryjun@ustc.edu.cn}). Our code is available at \href{https://github.com/zhuohuangai/SharpDRO}{https://github.com/zhuohuangai/SharpDRO}},\  
	Xiaobo Xia$^1$,\ 
	Li Shen$^3$,\ 
	Jun Yu$^{4, }\textsuperscript{\Letter}$,\\ 
	Chen Gong$^5$,\ 
	Bo Han$^6$,\ 
	Bo Du$^2$,\ 
	Tongliang Liu$^1$ \\[1ex]
}
\vspace{-3mm}

\maketitle

\vspace{-5mm}
%
\begin{abstract}
Robust generalization aims to tackle the most challenging data distributions which are rare in the training set and contain severe noises, i.e., photon-limited corruptions. Common solutions such as distributionally robust optimization (DRO) focus on the worst-case empirical risk to ensure low training error on the uncommon noisy distributions. However, due to the over-parameterized model being optimized on scarce worst-case data, DRO fails to produce a smooth loss landscape, thus struggling on generalizing well to the test set. Therefore, instead of focusing on the worst-case risk minimization, we propose SharpDRO by penalizing the sharpness of the worst-case distribution, which measures the loss changes around the neighbor of learning parameters. Through worst-case sharpness minimization, the proposed method successfully produces a flat loss curve on the corrupted distributions, thus achieving robust generalization. Moreover, by considering whether the distribution annotation is available, we apply SharpDRO to two problem settings and design a worst-case selection process for robust generalization. Theoretically, we show that SharpDRO has a great convergence guarantee. Experimentally, we simulate photon-limited corruptions using CIFAR10/100 and ImageNet30 datasets and show that SharpDRO exhibits a strong generalization ability against severe corruptions and exceeds well-known baseline methods with large performance gains.
\end{abstract}

\vspace{-8mm}
\section{Introduction}
\vspace{-1mm}
\label{sec:introduction}
Learning against corruptions has been a vital challenge in the practical deployment of computer vision models, as learning models are much more fragile to subtle noises than human perception systems~\cite{goodfellow2014explaining, hendrycks2016baseline, liu2021probabilistic}. During the training, the encountered corruptions are essentially perceived as distribution shift, which would significantly hinder the prediction results~\cite{liang2017enhancing, long2015learning, tzeng2017adversarial, xia2019anchor, xia2020part,xia2023moderate}. Therefore, to mitigate the performance degradation, enhancing generalization to corrupted data distributions has drawn lots of attention~\cite{arjovsky2019invariant,sagawa2019distributionally}.

\begin{figure}
	\centering
	\includegraphics[width=0.7\linewidth]{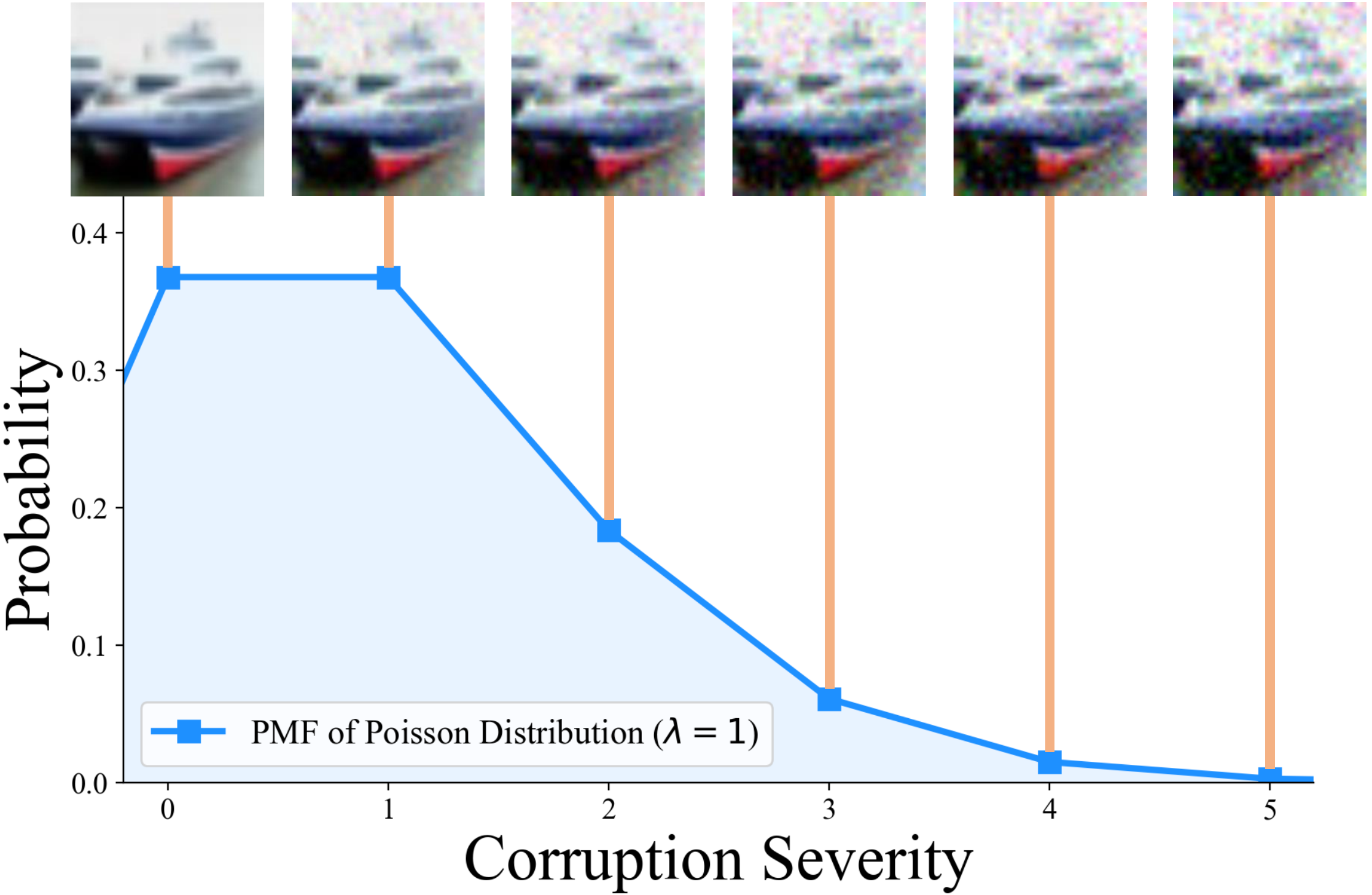}
	\vspace{-2mm}
	\caption{Illustration of photon-limited corruptions.}
	\vspace{-6mm}
	\label{fig:poisson}
\end{figure}

In the real world, noise corruptions are commonly known as photon-limited imaging problems~\cite{ingle2021passive, li2021photon, luisier2010image, timmermann1999multiscale} which arises due to numerous small corruption photons arriving to a image sensor. Consequently, different numbers of captured photons would form different levels of corruptions severity, further producing multiple data distributions and imposing varied impacts on learning models~\cite{hendrycks2019benchmarking}. Specifically, the encountered photon-limited corruption $\mathscr{E}$ is a composition of multiple noise photon $u$, which is triggered by some discrete factors with a certain probability during a time interval. For example, a photon $u$ can be triggered by each platform changing, re-distribution, transmission, etc. More photons are captured, and severer corruption would be applied to the image. Therefore, the severity $s$ of the photon-limited corruptions $\mathscr{E}$ can be modeled by Poisson distribution, \textit{i.e.}, $s\sim P\left( s; \lambda \right) = \frac{{e^{-\lambda } \lambda ^s }}{{s!}}$, which is illustrated in Figure~\ref{fig:poisson}. As a result, the real-world training set is not completely composed of clean data, but contains corrupted data with a smaller proportion as the severity goes stronger.

\begin{figure*}[t]
	\vspace{-10mm}
	\centering
	\includegraphics[width=0.9\linewidth]{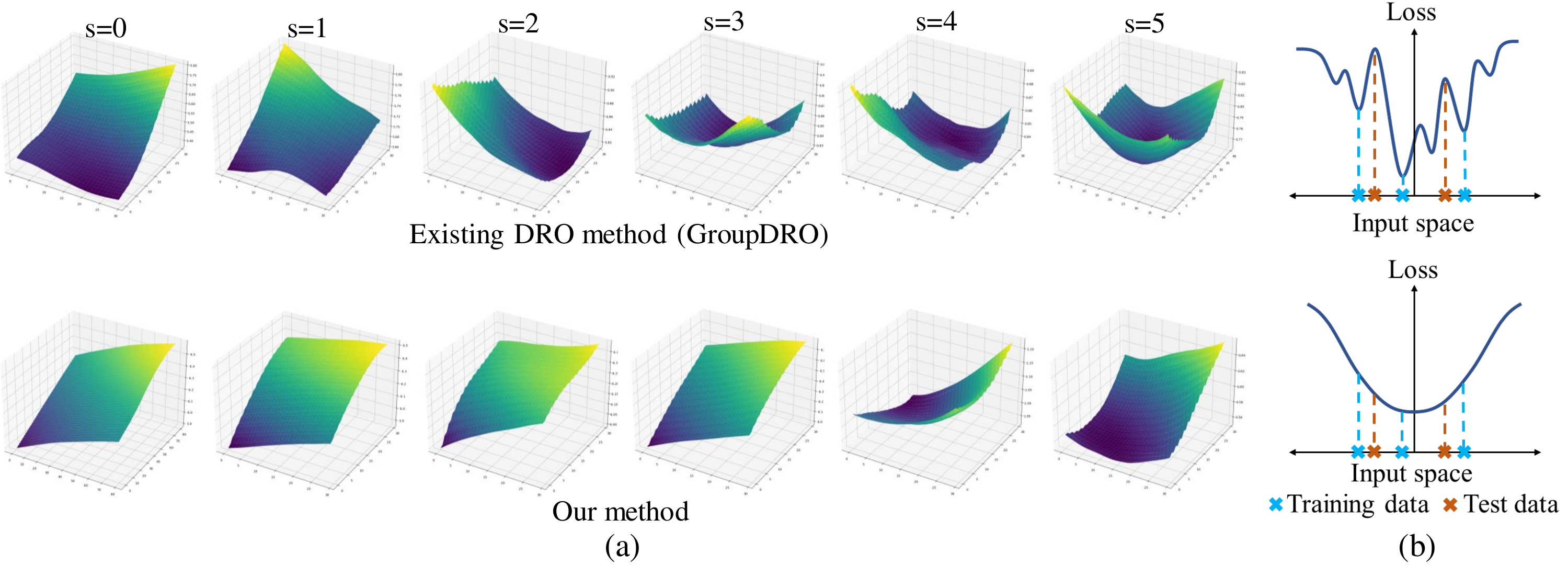}
	\vspace{-3mm}
	\caption{Illustration of our motivation. (a) Loss surface visualization of GroupDRO and the proposed SharpDRO. The columns from left to right stand for corrupted distributions with severity $s=0$ to $5$. (b) Illustration of why a sharp loss surface hinders generalization to test data.}
	\vspace{-5mm}
	\label{fig:visualization}
\end{figure*}

Dealing with such a realistic problem by vanilla empirical risk minimization can achieve satisfactory averaged accuracy on the whole training set. However, due to the extremely limited number of severely-corrupted data, the learning model would produce large training errors on the corrupted distributions, further hindering the robust performance under challenging real-world situations. A popular approach to achieve low error on the scarce corrupted data is distributionally robust optimization (DRO)~\cite{namkoong2016stochastic,sagawa2019distributionally,zhai2021doro, piratla2021focus, wang2022meta, wang2022improving}, which commonly optimizes the model parameter $\theta$ by optimizing:
\vspace{-1mm}
\begin{equation}
	\min_{\theta \in \Theta} \sup_{Q \in \mathcal{Q}} \mathbb{E}_{(x, y)\sim Q}\left[\mathcal{L}(\theta; (x, y))\right],
	\label{eq:general_dro}
	\vspace{-1mm}
\end{equation}
where $\mathcal{Q}$ denotes the uncertainty set that is utilized to estimate the possible test distribution. Intuitively, DRO assumes that $\mathcal{Q}$ consists of multiple sub-distributions, among which exists a worst-case distribution $Q$. By concentrating on the risk minimization of the worst-case distribution, DRO hopes to train a robust model which can deal with the potential distribution shift during the test phase. However, existing DRO methods usually leverage over-parameterized models to focus on a small portion of worst-case training data. Therefore, the worst-case data contaminated with severe corruption is highly possible to get stuck into sharp minima. As shown in the upper of Figure~\ref{fig:visualization} (a), a stronger corruption would cause the existing method to learn a sharper loss surface. Consequently, optimization via DRO fails to produce a flat loss landscape over the corrupted distributions, which leads to a large generalization gap between training and test set~\cite{keskar2017large,chaudhari2017entropy}.


To remedy this defect, in this paper, we propose SharpDRO method to focus on learning a flat loss landscape of the worst-case data, which can largely mitigate the training-test generalization gap problem of DRO. Specifically, we adopt the largest loss difference formed by applying weight perturbation~\cite{foret2020sharpness,wu2020adversarial} to measure the sharpness of the loss function. Intuitively, a sharp loss landscape is sensitive to noise and cannot generalize well on the test set. On the contrary, a flat loss landscape produces consistent loss values and is robust against perturbations (Figure~\ref{fig:visualization} (b)). By minimizing the sharpness, we can effectively enhance the generalization performance~\cite{keskar2017large, chaudhari2017entropy}. However, directly applying sharpness minimization on multiple distributions would yield poor results~\cite{cha2021swad}, as the computed sharpness could be influenced by the largest data distribution, and thus cannot generalize well to small corrupted data. Therefore, we only focus on worst-case sharpness minimization. In this way, as the lower of Figure~\ref{fig:visualization} (a) shows, SharpDRO successfully produces a flat loss surface, thus achieving robust generalization on the severely corrupted distributions.

In addition, identification of the worst-case distribution requires expensive annotations, which are not always practically feasible~\cite{liu2021just}. In this paper, we apply SharpDRO to solve two problem settings: 1) \textit{Distribution-aware robust generalization} which assumes that distribution indexes are accessible, and 2) \textit{Distribution-agnostic robust generalization} where the distributions are no longer identifiable, making the worst-case data hard to find. Existing approaches such as Just Train Twice (JTT) require two-stage training which is rather inconvenient. To tackle this challenge, we propose a simple (Out-of-distribution) OOD detection~\cite{hendrycks2016baseline,huang2021universal, huang2022they,liang2017enhancing, liu2020energy, wang2022watermarking, wang2023out} process to detect the worst-case data, which can be further leveraged to enable worst-case sharpness minimization. Through constructing training sets according to the Poisson distributed noisy distribution using CIFAR10/100 and ImageNet30, we show that SharpDRO can achieve robust generalization results on both two problem settings, surpassing well-known baseline methods by a large margin.

To sum up, our main contributions are three-fold:
\begin{itemize}
	\item We proposed a sharpness-based DRO method that overcomes the poor worst-case generalization performance of distributionally robust optimization.
	
	\item We apply the proposed SharpDRO method to both distribution-aware and distribution-agnostic settings, which brings a practical capability to our method. Moreover, we propose an OOD detection approach to select worst-case data to enable robust generalization.
	
	\item Theoretically, we show that SharpDRO has a convergence rate of $\mathcal{O}(\frac{\kappa^2}{\sqrt{MT}})$. Empirically, we form a photon-limited corruption dataset that follows Poisson distribution, and conduct extensive experiments to show a strong generalization ability of SharpDRO as well as its superiority to compared baseline methods.
\end{itemize}

In the following, we first briefly introduce the background and discuss the problem setting in section.~\ref{sec:background}. Then, we specify our SharpDRO over two problem settings in Section~\ref{sec:method}. Moreover, we give a detailed optimization process and provide convergence analysis in Section~\ref{sec:optimization}. Further, we conduct extensive experiments to validate our SharpDRO in Section~\ref{sec:experiments}. At last, we conclude this paper in Section~\ref{sec:conclusion}.
\section{Robust Generalization Methods}
\label{sec:background}
Due to the practical significance of robust generalization, various approaches have been proposed to deal with distribution shift. Here we briefly introduce three typical baseline methods, namely Invariant Risk Minimization, Risk Extrapolation, and GroupDRO.

\textbf{Invariant Risk Minimization (IRM)}~\cite{arjovsky2019invariant,chang2020invariant,creager2021environment} aims to extract the invariant feature across different distributions (also denoted as environments). Specifically, the learning model is separated into a feature extractor $G$ and a classifier $C$. IRM assumes an invariant model $C\circ G$ over various environments can be achieved if the classifier $C$ constantly stays optimal. Then, the learning objective is formulated as:
\vspace{-2mm}
\begin{equation}
	\small
	\begin{aligned}
		&\min_{C^*\circ G} \big\{\mathcal{L}_{\text{IRM}}:=\sum_{e\in \mathcal{E}}\mathcal{L}^e(C^*\circ G)\big\} \\
		&\text{s. t.}\  C^*\in\argmin_{G}\mathcal{L}^e(C\circ G), \text{for all}\  e \in \mathcal{E},
	\end{aligned}
	\label{eq:irm}
\vspace{-2mm}
\end{equation}
where $C^*$ stands for the optimal classifier, and $e$ denotes a specific environment from a given environmental set $\mathcal{E}$. By solving Eq.~\ref{eq:irm}, the feature extractor $G$ can successfully learn invariant information without being influenced by the distribution shift between different environments.

\textbf{Risk Extrapolation (REx)}~\cite{krueger2021out} targets at generalization to out-of-distribution (OOD) environments. Inspired by the discovery that penalizing the loss variance across distributions helps achieve improved performance on OOD generalization, REx proposes to optimizing via:
\vspace{-2mm}
\begin{equation}
	\small
	\min_{\theta\in\Theta}\big\{ \mathcal{L}_{\text{REx}}:=\sum_{e\in \mathcal{E}}\mathcal{L}^e(\theta) + \beta Var(\mathcal{L}^e,..., \mathcal{L}^m)\big\},
	\label{eq:rex}
\vspace{-2mm}
\end{equation}
where $\beta$ controls the penalization level. Intuitively, REx seeks to achieve risk fairness among all $m$ training environments, so as to increase the similarity of different learning tasks. As a result, the training model can capture the invariant information that helps generalize to unseen distributions.

\textbf{GroupDRO}~\cite{sagawa2019distributionally, hashimoto2018fairness, piratla2021focus} deal with the situation when the correlation between class label $y$ and unknown attribute $a$ differs in the training and test set. Such a difference is called spurious correlation which could seriously misguide the model prediction. As a solution, GroupDRO considers each combination of class and attribute as a group $g$. By conducting risk minimization through:
\vspace{-2mm}
\begin{equation}
	\small
	\min_{\theta \in \Theta} \big\{\mathcal{L}_{\text{GroupDRO}}:=\max_{g} \mathbb{E}_{(x, y)\sim P_g}\left[\mathcal{L}(\theta; (x, y))\right]\big\},
	\label{eq:group_dro}
\vspace{-2mm}
\end{equation}
the worst-case group from distribution $P_g$ which commonly holds spurious correlation is emphasized, thus breaking the spurious correlation.

\noindent
\textbf{Discussion:} IRM and REx both focus on learning invariant knowledge across various environments. However, when the training set contains extremely imbalanced noisy distributions, as shown in Figure~\ref{fig:poisson}, the invariant learning results would be greatly misled by the most dominating distribution. Thus, the extracted invariant feature would be questionable for generalization against distribution shift. Although emphasizing the risk minimization of worst-case data via GroupDRO can alleviate the imbalance problem, its generalization performance is still sub-optimal when facing novel test data. However, SharpDRO can not only focus on the uncommon corrupted data but also effectively improve the generalization performance on the test set by leveraging worst-case sharpness minimization.

Our investigated problem is closely related to OOD generalization which is a broad field that contains many popular research topics, such as \textbf{Domain Generalization}~\cite{carlucci2019domain, peng2019moment, qiao2020learning, shu2021open, mahajan2021domain, muandet2013domain, huang2023harnessing, zhang2022towards}, \textbf{Causal Invariant Learning}~\cite{arjovsky2019invariant, krueger2021out, li2018deep, yang2021causalvae, scholkopf2021toward, yue2021transporting}. Generally, existing works mainly studies two types of research problem: 1) mitigating domain shift between training and test dataset; and 2) breaking the spurious correlation between causal factors. However, as generalization against corruptions \textbf{does not introduce any domain shift or spurious correlation}, such a problem cannot be naively solved by domain generalization methods or causal representation learning techniques. Therefore, in this paper, we focus on complementing this rarely-explored field and propose SharpDRO to enforce robust generalization against corruption. In the next section, we elaborate on the methodology of SharpDRO.
\begin{figure*}
	\vspace{-8mm}
	\begin{minipage}[t]{0.32\textwidth}
		\centering
		\includegraphics[width=\linewidth]{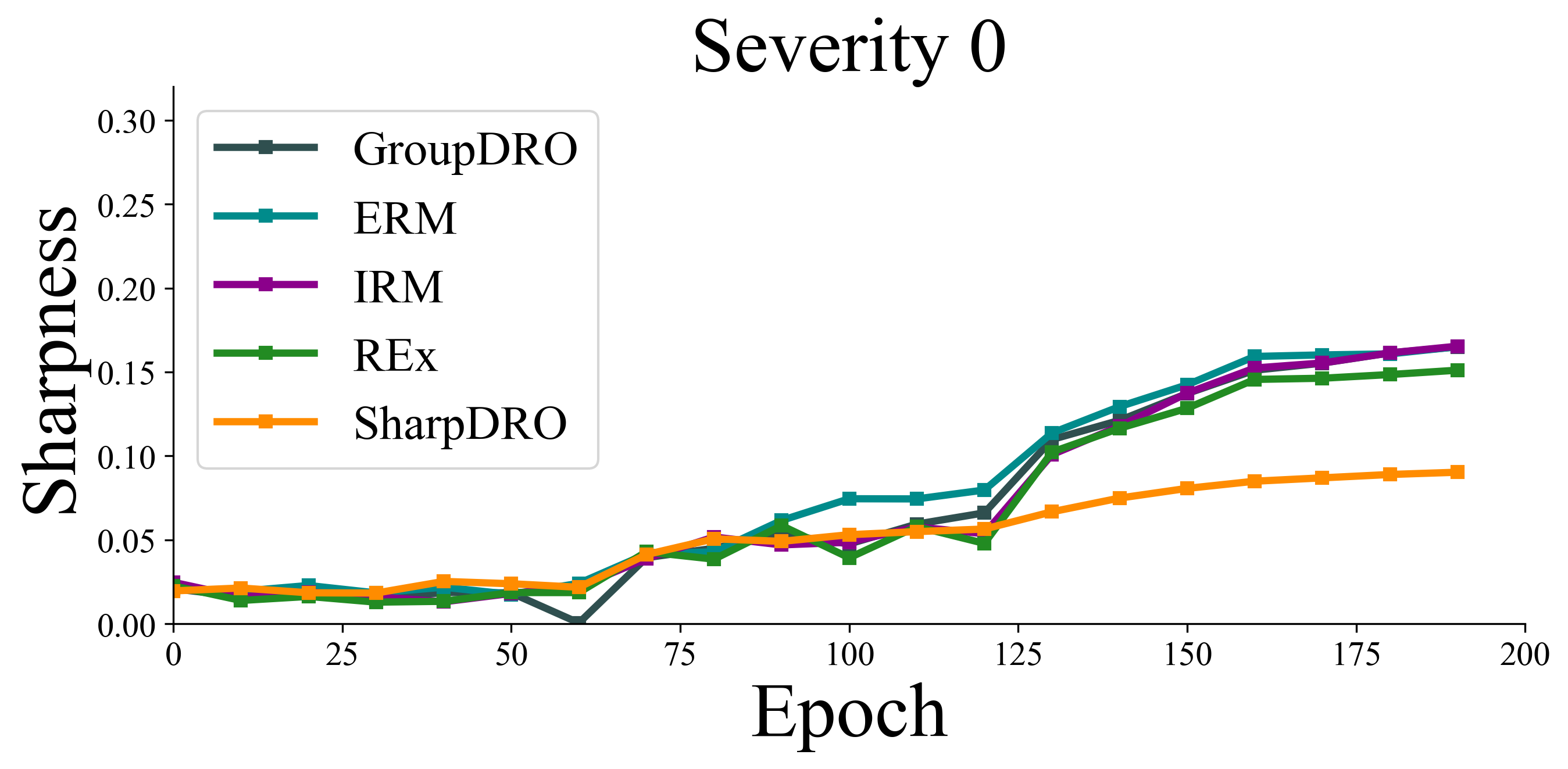}
	\end{minipage}
	\begin{minipage}[t]{0.32\textwidth}
		\centering
		\includegraphics[width=\linewidth]{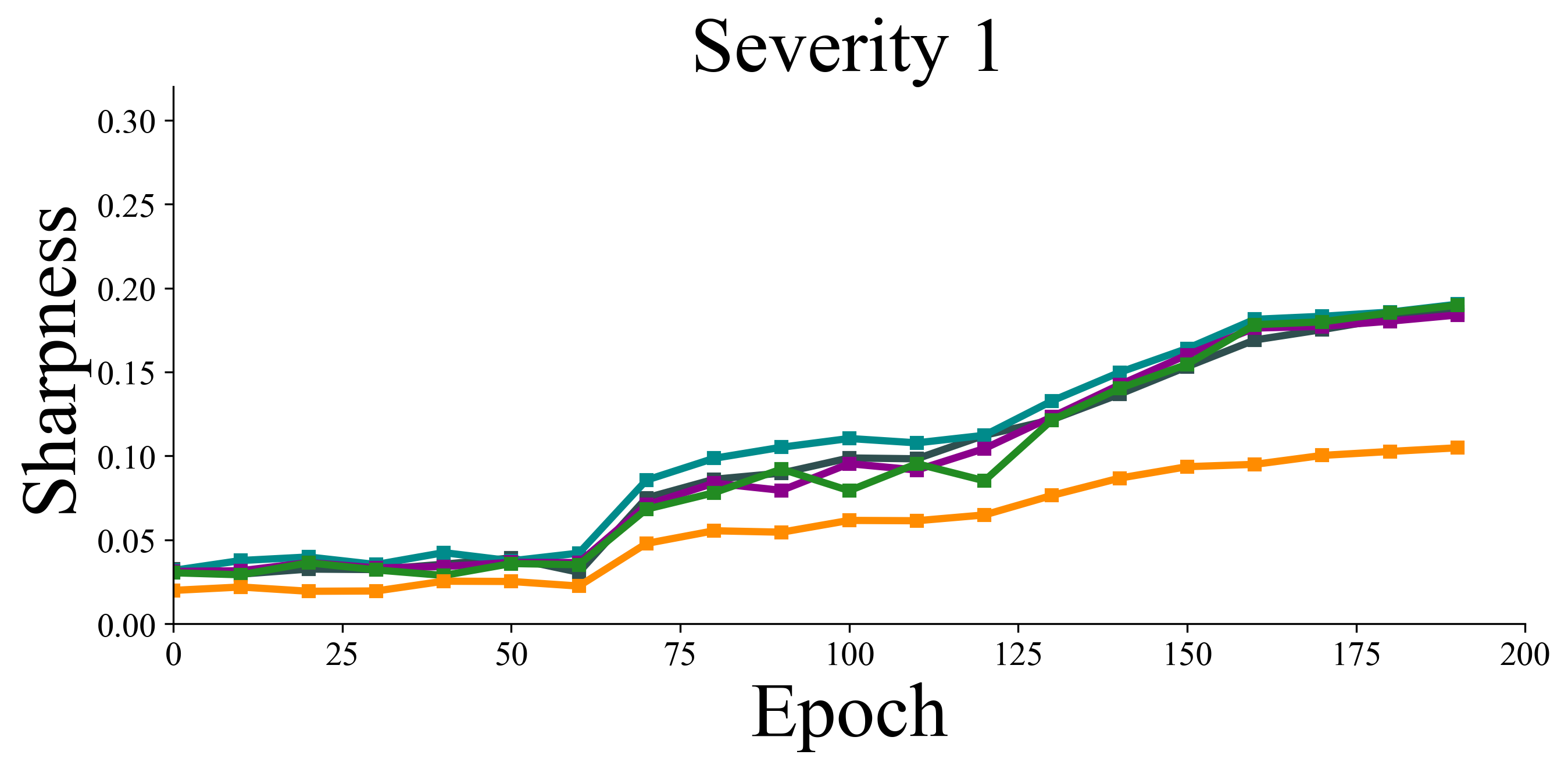}
	\end{minipage}
	\begin{minipage}[t]{0.32\textwidth}
		\centering
		\includegraphics[width=\linewidth]{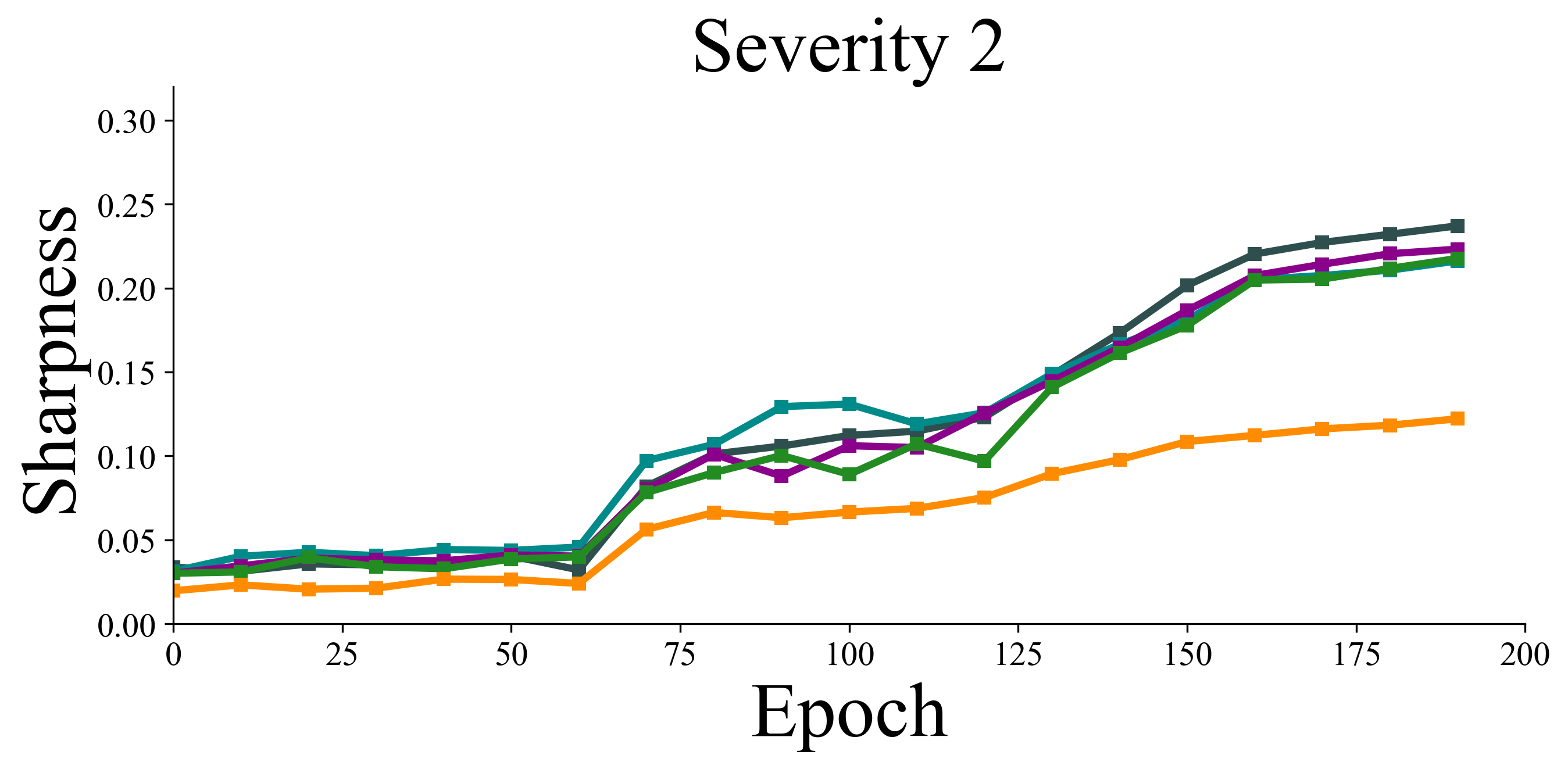}
	\end{minipage}\\
	\begin{minipage}[t]{0.32\textwidth}
		\centering
		\includegraphics[width=\linewidth]{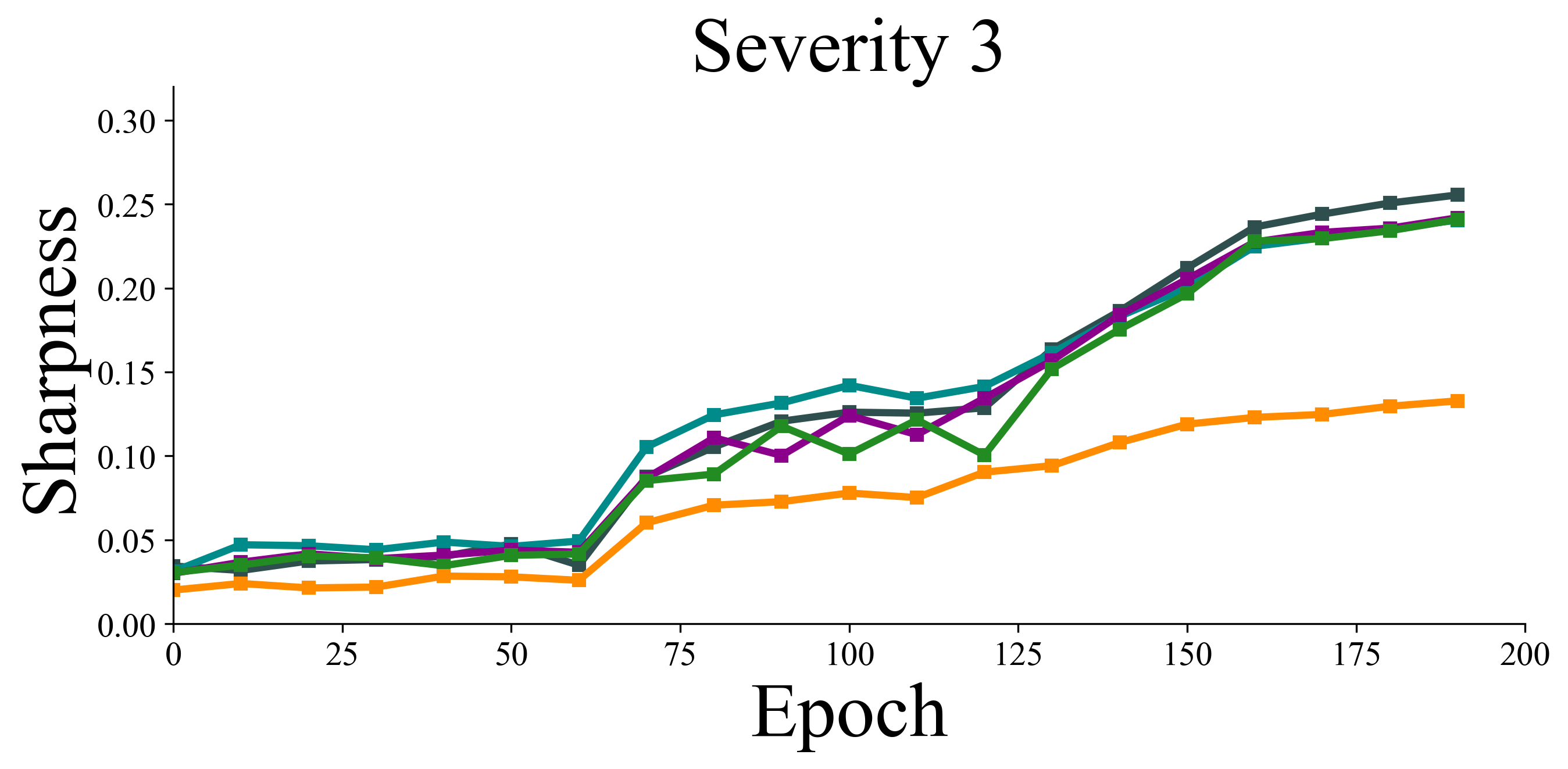}
	\end{minipage}
	\begin{minipage}[t]{0.32\textwidth}
		\centering
		\includegraphics[width=\linewidth]{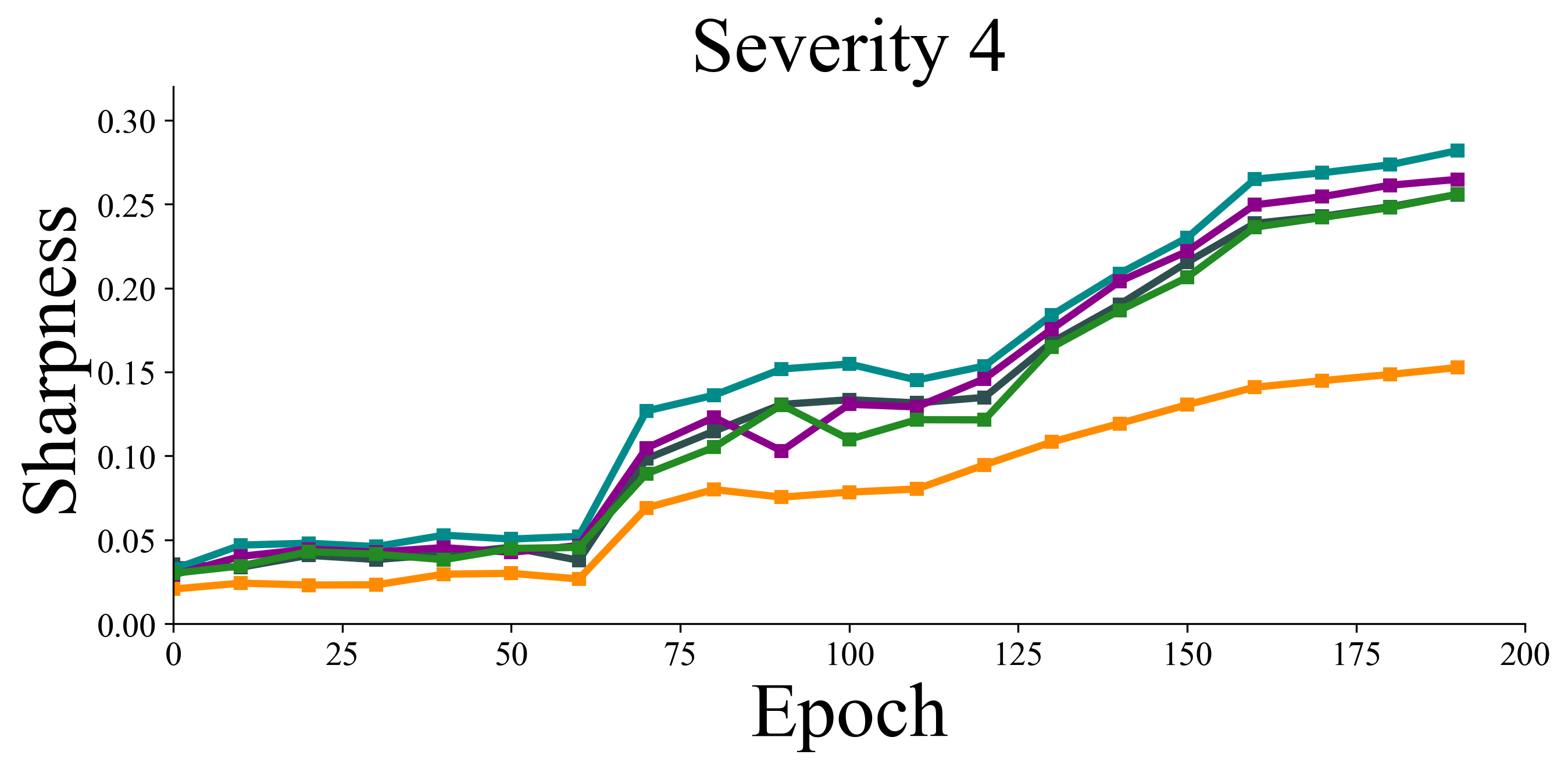}
	\end{minipage}
	\begin{minipage}[t]{0.32\textwidth}
		\centering
		\includegraphics[width=\linewidth]{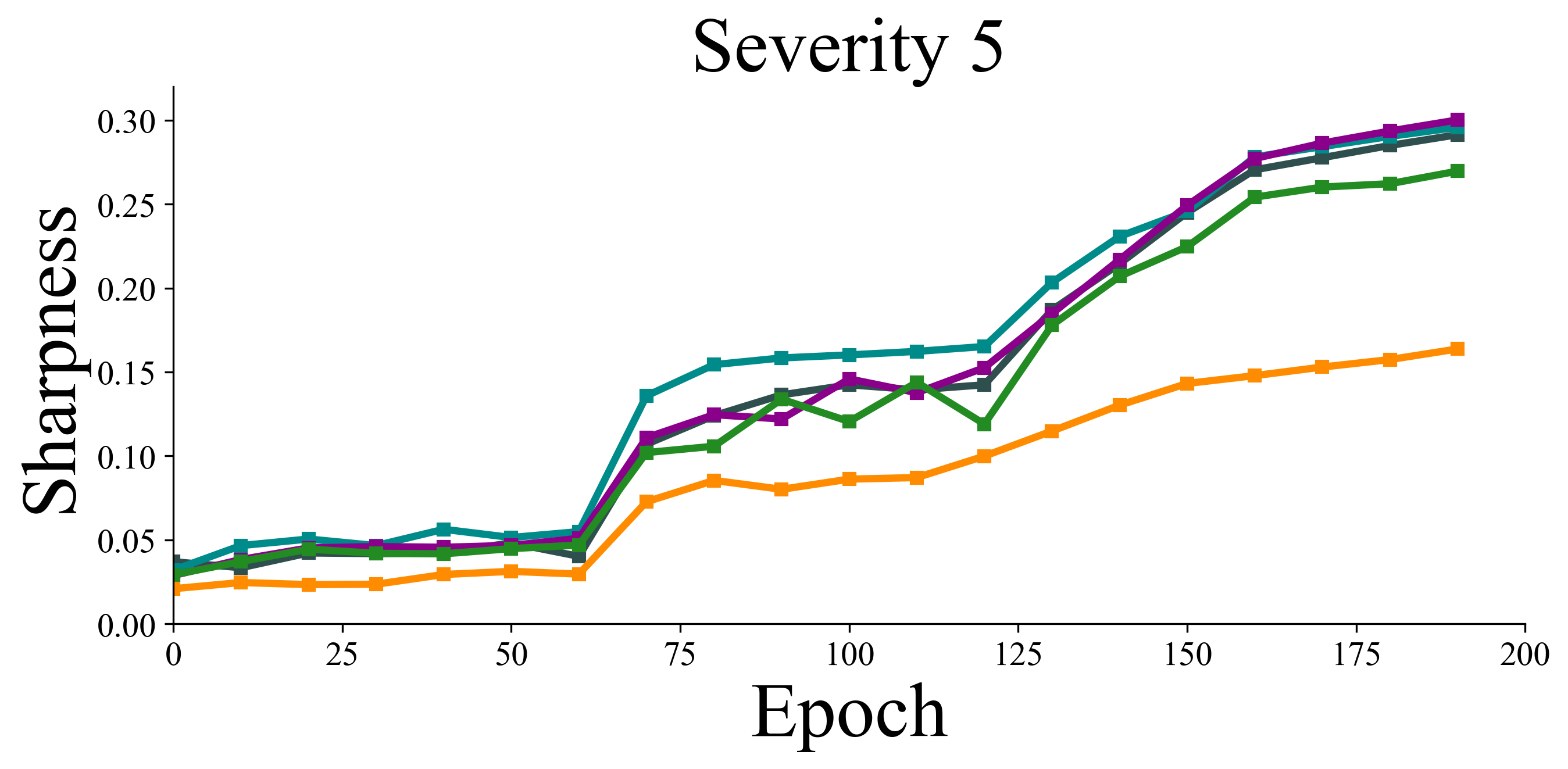}
	\end{minipage}
	\vspace{-4mm}
	\caption{\small Sharpness during networking training on clean ($s=0$) and corrupted distributions ($s=1$ to $5$).}
	\vspace{-5mm}
	\label{fig:sharpness}
\end{figure*}

\section{Methodology}
\label{sec:method}

In robust generalization problems, we are given a training set $\mathcal{D}^{\text{train}}$ containing $n$ image examples, each example $x\in\mathcal{X}$ is given a class label $y\in\mathcal{Y}=\left\{1, 2, ..., c\right\}$. Moreover, the training set is corrupted by a certain type of noise whose severity $s$ follows a Poisson distribution $P(s; \lambda)$. Here we assume $\lambda=1$ which indicates that the mean number of the noise photon $u$ that occurred during a time interval is $1$. Therefore, the distribution $P$ of the whole training set is composed of $S$ sub-distributions $P_s, s\in\left\{1, 2, ..., S\right\}$ with varied levels of corruption. Our goal is to learn a robust model $\theta \in \Theta$ that can achieve good generalization performance on challenging data distributions $P_s$ with large severity.

The general objective of SharpDRO is formulated as:
\vspace{-1mm}
\begin{equation}
	\begin{aligned}
		\min_{\theta} \left\{\mathcal{L}_{\text{SharpDRO}}:= \mathbb{E}_{(x, y)\sim Q}\left[\mathcal{L}(\theta; (x, y))\right] +\right.\\
		\left. \mathbb{E}_{(x, y)\sim Q}\left[\mathcal{R}(\theta; (x, y))\right]\right\},
	\end{aligned}
	\label{eq:SharpDRO}
	\vspace{-1mm}
\end{equation}
where the first term denotes the risk minimization using loss function $\mathcal{L}$, meanwhile a worst-case distribution $Q$ is selected based on the model prediction. The second term $\mathcal{R}$ indicates the sharpness minimization which aims to maximally improve the generalization performance on the worst-case distribution $Q$. Specifically, as shown in Figure~\ref{fig:sharpness}, the sharpness gradually increases as the corruption severity enlarges. Therefore, to accomplish robust generalization, we are motivated to emphasize the worst-case distribution. As a result, we can produce much smaller sharpness compared to other methods, especially on severely corrupted distributions.

In the following, we first introduce worst-case sharpness for robust generalization. Then, we demonstrate worst-case data selection on two scenarios. Finally, we provide a detailed optimization process and convergence analysis.

\subsection{Sharpness for Robust Generalization}

The main challenge of robust generalization is that the training distribution is extremely imbalanced, as shown in Figure~\ref{fig:poisson}. The performance on the abundant clean data is quite satisfactory, but robustness regarding the corrupted distribution is highly limited, owing to the severe disturbance of corruption as well as the insufficiency of noisy data. To enhance the generalization performance, we leverage sharpness to fully exploit the worst-case data. Specifically, sharpness~\cite{foret2020sharpness, kim2022fisher, liu2022towards,wu2020adversarial,zheng2021regularizing} is measured by \textbf{the largest loss change} when model weight $\theta$ is perturbed with $\epsilon$, formally:
\vspace{-0.5mm}
\begin{equation}
    \small
	\mathcal{R}:=\max_{\|\epsilon\|_2\le\rho}\{\mathcal{L}(\theta+\epsilon; (x, y)) - \mathcal{L}(\theta; (x, y))\},
	\label{eq:sharpness}
\vspace{-0.5mm}
\end{equation}
where $\rho$ is a scale parameter to control the perturbation magnitude. By supposing $\epsilon$ is small enough, we can have:
\vspace{-0.5mm}
\begin{equation}
\small
	\mathcal{L}(\theta+\epsilon) - \mathcal{L}(\theta)\approx\nabla\mathcal{L}(\theta)\epsilon.
\vspace{-0.5mm}
\end{equation}
Further, we hope to obtain the largest loss change to find the optimal weight perturbation $\epsilon^*$, which is be computed as:
\vspace{-0.5mm}
\begin{equation}
\small
	\epsilon^*:=\argmax_{\|\epsilon\|_2\le\rho}{\nabla\mathcal{L}(\theta)\epsilon}.
\vspace{-0.5mm}
\end{equation}
By following dual norm problem, the optimal $\epsilon^*$ can be solved as $\rho\sign(\nabla\mathcal{L}(\theta))$~\cite{foret2020sharpness}, which is essentially the $\infty$-norm of the gradient $\nabla\mathcal{L}$ multiplied with a scale parameter $\rho$. Hence, common sharpness minimization aims to minimize:
\begin{equation}
\small
	\mathbb{E}_{(x, y)\sim Q}\mathcal{R}:=\mathcal{L}(\theta+\rho\sign(\nabla\mathcal{L}(\theta; (x, y)))) - \mathcal{L}(\theta; (x, y)).
	\label{eq:sharpness_minimization}
\end{equation}
The intuition is that the perturbation along the gradient norm direction increases the loss value significantly. When training on corrupted distributions, the scarce noisy data scatter sparsely in the high-dimensional space. As a consequence, the neighbor of each datum could not be sufficiently explored, thus producing a sharp loss curve. During test, the unseen noisy data is likely to fall on an unexplored point with a large loss, further causing inaccurate model predictions.

Therefore, instead of directly applying sharpness minimization on the whole dataset, which leads to poor generalization performance~\cite{cha2021swad} (as demonstrated in Section~\ref{sec:ablation_study}), we focus on sharpness minimization over the worst-case distribution $Q$. By conducting the worst-case sharpness minimization, we can enhance the flatness of our classifier. Consequently, when predicting unknown data during the test phase, a flat loss landscape is more likely to produce a low loss than a sharp one, hence our SharpDRO can generalize better than other DRO methods. However, the robust performance largely depends on the worst-case distribution $Q$, so next, we explain our worst-case data selection.

\subsection{Worst-Case Data Selection}
Generally, the worst-case data selection focuses on finding the most uncertain data distribution $Q$ from the uncertainty set $\mathcal{Q}$, which is a $f$-divergence ball from the training distribution $P$~\cite{ben2013robust, duchi2018learning, hu2018does}. Most works assume each distribution is distinguishable from the other. However, when the distribution index is not available, it would be very hard to select worst-case data. In this section, we investigate two situations: distribution-aware robust generalization and distribution-agnostic robust generalization.

\subsubsection{Distribution-Aware Robust Generalization}
When given annotations to denote different severity of corruptions, the image data $x$ is paired with class label $y$ and distribution index $s$. Then, the worst-case distribution $Q$ can be found by identifying the sub-distribution $P_s\in P$ that yields the largest loss. Hence, we can optimize through:
\vspace{-1mm}
\begin{align}
\small
	\min_{\theta} & \Big\{\max_{\omega_s; \atop Q = \{\omega_s P_s\}_{s=1}^S} \big\{\sum_{(x_i, y_i)\in Q}\left[\mathcal{L}(\theta,\omega_s; (x_i, y_i))\right]\big\}  \label{eq:distribution_aware}\\
	&\qquad + \sum_{(x_i, y_i)\in Q}\left[\mathcal{R}(\theta,\omega_s; (x_i, y_i))\right]\Big\}, \nonumber
\vspace{-3mm}
\end{align}
where $\omega_s$ belongs to a $(S-1)$-dimensional probability simplex. The first term simply recovers the learning target of GroupDRO~\cite{sagawa2019distributionally,hu2018does} and helps find the worst-case distribution $Q$. Then, by emphasizing the selected $P_s$, the second sharpness minimization term can act as a sharpness regularizer. As a result, SharpDRO can learn a flatter loss surface on the worst-case data, thus generalize better compared to GroupDRO, as discussed in Section~\ref{sec:experiments}.

\subsubsection{Distribution-Agnostic Robust Generalization}
Due to the annotations being extremely expensive in the real world, a practical challenge is how to learn a robust model without a distribution index. Unlike JTT~\cite{liu2021just} which trains the model through two stages, we aim to solve this problem more efficiently by detecting the worst-case data during network training. As the corrupted data essentially lie out-of-distribution, so we are motivated to conduct OOD detection~\cite{hendrycks2019deep, li2022out, liang2017enhancing, liu2020energy, wei2022mitigating} to find the worst-case data.

Particularly, we re-utilize the previously computed weight perturbation $\epsilon^*$ to compute an OOD score:
\begin{equation}
	\omega_i = \max f(\theta; (x_i)) - \max f(\theta+\epsilon^*; (x_i)),
	\label{eq:ood_score}
\end{equation}
where $f(\cdot)$ stands for the $c$-dimensional label prediction in the label space, whose maximum value is considered as prediction confidence. Intuitively, as the model is much more robust to the clean distribution than the corrupted distribution, the prediction of clean data usually exhibits more stability than scarce noisy data when facing perturbations. Hence, if an example comes from a rarely explored distribution, its prediction certainty would deviate significantly from the original value, thus producing a large OOD score, as shown in Section~\ref{sec:ood_score}. Note that the major difference is that we target generalization on worst-case data, but OOD detection aims to exclude OOD data.

To this end, we can construct our worst-case dataset as $Q:=\left\{\sum_{i=1}^M \bar{\omega}_i\cdot(x_i, y_i): \bar{\omega}_i=\frac{\omega_i}{\frac{1}{M}\sum_{i=1}^M \omega_i}\right\}$, where normalization on $\omega_i$ is performed simultaneously. Then, the learning target of the distribution-agnostic setting becomes:
\begin{align}
	\small
		\min_{\theta}& \Big\{\max_{\bar{\omega}_i} \big\{\sum_{(x_i, y_i)\in Q}\left[\mathcal{L}(\theta,\bar{\omega}_i; (x, y))\right]\big\} \label{eq:distribution_agnostic}\\
		&\qquad +  \sum_{(x_i, y_i)\in Q}\left[\mathcal{R}(\theta,\bar{\omega}_i; (x, y))\right]\Big\}, \nonumber
\end{align}
Therefore, the worst-case data can be selected by focusing on the examples with large OOD scores. In this way, our sharpDRO can be successfully deployed into the distribution-agnostic setting to ensure robust generalization, whose effectiveness is demonstrated by quantitative and qualitative results in Sections~\ref{sec:distribution_agnostic} and~\ref{sec:ood_score}. Next, we give details about implementing SharpDRO.


\renewcommand{\algorithmicrequire}{\textbf{Input:}}
\renewcommand{\algorithmicensure}{\textbf{Output:}}
\begin{algorithm}[H]
	\small
	\caption{\small Optimization process of SharpDRO}
	\label{alg:sharpdro}
	\begin{algorithmic}[1]
		\State Training set $\mathcal{D}^{\text{train}}=\left\{x_i, y_i\right\}_{i=1}^M$ containing Poisson distributed noisy corruptions; Model parameter $\theta \gets \theta_0$; Weighting parameter $\omega \gets \omega_0$; Learning rate: $\eta_{\theta}$, $\eta_{\omega}$.
		\For{$t \in 0,1, \ldots,T-1$}
		\If {\textit{Distribution-aware}}
		\State \textit{\color{black!60} $\triangleright$ Loss maximization via optimizing $\omega_{t+1}$}
		\State $\omega_{t+1} := \arg\max_{\omega} \left\{\mathbb{E}_{(x, y)\sim \omega P_s}\left[\mathcal{L}(\theta_{t},\omega; (x, y))\right]\right\}$; 
		\ElsIf {\textit{Distribution-agnostic}}
		\State{\textit{\color{black!60} $\triangleright$ OOD detection for computing $\omega_{t+1}$}}
		\State Update $\omega_{t+1}$ via Eq.~\ref{eq:ood_score}; 
		\EndIf
		\State \textit{\color{black!60} $\triangleright$ Optimize variable $\theta$}
		\State $\theta_{t+1} = \arg\min_{\theta}\Big\{\mathbb{E}_{(x,y)\sim w_tP}[\mathcal{L}(\theta,\omega_t) + \mathcal{R}(\theta,\omega_t)]  \Big\}$
		\EndFor
	\end{algorithmic}
\end{algorithm}

\vspace{-3mm}
\subsection{Optimization for SharpDRO}
\label{sec:optimization}
In both distribution-aware and distribution-agnostic scenarios, the worst-case data distribution is identified using distribution weighting parameter $\omega_s$ and OOD score $\omega$, respectively. Intuitively, their effect is similar: finding the worst data distribution which yields the maximum loss. Therefore, without loss of generality, we consider the maximization in Eqs.~\ref{eq:distribution_aware} and~\ref{eq:distribution_agnostic} as the optimization on the same weighting parameter $\omega$ and the samples $(x,y)$ can be considered i.i.d. from $\mathcal{Q}$ weighted by $\omega$ to compute the loss value $\mathcal{L}$. Moreover, the sharpness regularization can be reformulated in the same way as Eq.~\ref{eq:sharpness} by including $\omega$: $\mathcal{R}(\theta,\omega;(x,y))=\max_{\|\epsilon\|_2\leq \rho}\{\mathcal{L}(\theta+\epsilon,\omega;(x,y))-\mathcal{L}(\theta,\omega;(x,y))\}$. Therefore, our general learning objective can be formulated as a bi-level optimization problem:
\vspace{-2mm}
\begin{align}
\small
& \min_{\theta} \mathbb{E}_{(x, y)\sim Q}\left[\mathcal{L}(\theta,\omega^{*}; (x, y))\right] + \mathcal{R}(\theta,\omega^{*}; (x, y))   \\
& \qquad \text{s. t.}\ \omega^{*} = \arg\max_{\omega} \mathbb{E}_{(x, y)\sim Q}\left[\mathcal{L}(\theta,\omega; (x, y))\right].
\vspace{-4mm}
\end{align}
The optimization process is shown in Algorithm.~\ref{alg:sharpdro}. Specifically, we first update the weighting parameter $\omega$ based on the empirical risk term $\mathcal{L}$ using stochastic gradient ascent. Then, by leveraging the updated $\omega$, we optimize the general objective which contains both risk minimization of $\mathcal{L}$ and worst-case sharpness minimization of $\mathcal{R}$. We iterate these processes until convergence, hoping to minimize the risk on target loss function $\mathcal{L}$ with the worst-case data distribution.

\noindent
\textbf{Convergence Analysis:}
	First we give some brief notations: $\mathbb{L}(\theta,\omega):=\mathbb{E}_{(x,y)\sim Q}\mathcal{L}(\theta,\omega;(x,y))$. The worst-case data distribution which has the maximum loss is denoted by $\omega^*(\theta):= \arg\max_{\omega}\mathbb{L}(\theta,\omega)$. We can obtain the convergence to a stationary point of $\mathbb{L}^*(\theta):=\max_{\omega}\mathbb{L}(\theta,\omega)=\mathbb{L}(\theta,\omega^*(\theta))$ by averaged gradient 
	 $\frac{1}{T}\sum_{t=0}^{T-1}\mathbb{E}\|\nabla \mathbb{L}^*(\theta_t)\|^2$.

\begin{table*}
\vspace{-8mm}
	\scriptsize
	\centering
	\caption{Quantitative comparisons on distribution-aware robust generalization setting. Averaged accuracy ($\%$) with standard deviations are computed over three independent trials.}
	\vspace{-0.2cm}
	\setlength{\tabcolsep}{2.8mm}
	\label{tab:distribution_aware}
	\begin{tabular}{lllcccccc}
		\toprule[1pt]
		\multirow{2}{*}{data} & \multirow{2}{*}{Type} & \multirow{2}{*}{Method} & \multicolumn{6}{c}{Corruption Severity} \\
		&  &  & \multicolumn{1}{c}{0} & \multicolumn{1}{c}{1} & \multicolumn{1}{c}{2} & \multicolumn{1}{c}{3} & \multicolumn{1}{c}{4} & \multicolumn{1}{c}{5} \\ \midrule[0.6pt]
		\multirow{10}{*}{CIFAR10} & \multirow{5}{*}{Gaussian} & ERM & \multicolumn{1}{c}{$90.9\pm0.02$} & \multicolumn{1}{c}{$89.2\pm0.02$} & \multicolumn{1}{c}{$86.4\pm0.03$} & \multicolumn{1}{c}{$85.9\pm0.01$} & \multicolumn{1}{c}{$83.5\pm0.01$} & \multicolumn{1}{c}{$78.8\pm0.01$} \\
		&  & IRM & \multicolumn{1}{c}{$91.8\pm0.01$} & \multicolumn{1}{c}{$90.3\pm0.01$} & \multicolumn{1}{c}{$89.5\pm0.01$} & \multicolumn{1}{c}{$86.7\pm0.02$} & \multicolumn{1}{c}{$81.8\pm0.02$} & \multicolumn{1}{c}{$80.0\pm0.02$} \\
		&  & REx & \multicolumn{1}{c}{$91.3\pm0.03$} & \multicolumn{1}{c}{$89.5\pm0.02$} & \multicolumn{1}{c}{$88.1\pm0.02$} & \multicolumn{1}{c}{$86.7\pm0.02$} & \multicolumn{1}{c}{$83.3\pm0.01$} & \multicolumn{1}{c}{$80.5\pm0.02$} \\
		&  & GroupDRO & \multicolumn{1}{c}{$90.2\pm0.03$} & \multicolumn{1}{c}{$89.1\pm0.02$} & \multicolumn{1}{c}{$88.4\pm0.04$} & \multicolumn{1}{c}{$84.3\pm0.01$} & \multicolumn{1}{c}{$83.0\pm0.02$} & \multicolumn{1}{c}{$78.2\pm0.02$} \\
		&  & SharpDRO & \multicolumn{1}{c}{$\bm{92.9}\pm\bm{0.02}$} & \multicolumn{1}{c}{$\bm{91.3}\pm\bm{0.02}$} & \multicolumn{1}{c}{$\bm{90.5}\pm\bm{0.01}$} & \multicolumn{1}{c}{$\bm{88.4}\pm\bm{0.02}$} & \multicolumn{1}{c}{$\bm{86.9}\pm\bm{0.01}$} & \multicolumn{1}{c}{$\bm{84.7}\pm\bm{0.01}$} \\ \cline{2-9} 
		
		& \multirow{5}{*}{JPEG} & ERM & \multicolumn{1}{c}{$91.0\pm0.02$} & \multicolumn{1}{c}{$89.0\pm0.02$} & \multicolumn{1}{c}{$86.2\pm0.02$} & \multicolumn{1}{c}{$83.1\pm0.02$} & \multicolumn{1}{c}{$82.5\pm0.03$} & \multicolumn{1}{c}{$81.4\pm0.03$} \\
		&  & IRM & \multicolumn{1}{c}{$90.2\pm0.02$} & \multicolumn{1}{c}{$88.2\pm0.02$} & \multicolumn{1}{c}{$86.7\pm0.03$} & \multicolumn{1}{c}{$84.0\pm0.02$} & \multicolumn{1}{c}{$82.9\pm0.03$} & \multicolumn{1}{c}{$81.6\pm0.02$}  \\
		&  & REx & \multicolumn{1}{c}{$89.6\pm0.03$} & \multicolumn{1}{c}{$89.2\pm0.02$} & \multicolumn{1}{c}{$86.0\pm0.03$} & \multicolumn{1}{c}{$85.8\pm0.03$} & \multicolumn{1}{c}{$82.7\pm0.03$} & \multicolumn{1}{c}{$81.9\pm0.02$}  \\
		&  & GroupDRO & \multicolumn{1}{c}{$90.3\pm0.02$} & \multicolumn{1}{c}{$88.6\pm0.03$} & \multicolumn{1}{c}{$86.5\pm0.03$} & \multicolumn{1}{c}{$84.2\pm0.02$} & \multicolumn{1}{c}{$83.2\pm0.02$} & \multicolumn{1}{c}{$82.1\pm0.02$} \\
		&  & SharpDRO & \multicolumn{1}{c}{$\bm{91.2}\pm\bm{0.01}$} & \multicolumn{1}{c}{$\bm{89.3}\pm\bm{0.02}$} & \multicolumn{1}{c}{$\bm{87.6}\pm\bm{0.02}$} & \multicolumn{1}{c}{$\bm{86.6}\pm\bm{0.02}$} & \multicolumn{1}{c}{$\bm{85.8}\pm\bm{0.03}$} & \multicolumn{1}{c}{$\bm{84.5}\pm\bm{0.01}$} \\
		\midrule[0.6pt]

		\multirow{10}{*}{CIFAR100} & \multirow{5}{*}{Gaussian} & ERM & $68.2\pm0.01$ & $64.8\pm0.01$ & $60.6\pm0.01$ & $56.9\pm0.01$ & $53.9\pm0.01$ & $50.2\pm0.03$ \\
		&  & IRM & $64.7\pm0.02$ & $64.7\pm0.01$ & $62.2\pm0.01$ & $54.5\pm0.02$ & $53.4\pm0.03$ & $50.4\pm0.01$ \\
		&  & REx & $68.0\pm0.03$ & $65.1\pm0.03$ & $61.8\pm0.01$ & $56.8\pm0.01$ & $53.2\pm0.01$ & $51.5\pm0.01$ \\
		&  & GroupDRO & $66.1\pm0.01$ & $61.7\pm0.02$ & $59.3\pm0.03$ & $53.6\pm0.01$ & $54.0\pm0.02$ & $50.6\pm0.02$ \\
		&  & SharpDRO & $\bm{71.2}\pm\bm{0.02}$ & $\bm{70.1}\pm\bm{0.01}$ & $\bm{68.6}\pm\bm{0.01}$ & $\bm{58.8}\pm\bm{0.01}$ & $\bm{57.5}\pm\bm{0.02}$ & $\bm{53.8}\pm\bm{0.03}$
		\\ \cline{2-9} 
		
		& \multirow{5}{*}{JPEG} & ERM & $64.7\pm0.01$ & $62.8\pm0.02$ & $57.2\pm0.02$ & $54.7\pm0.03$ & $54.0\pm0.02$ & $50.6\pm0.03$ \\
		&  & IRM & $64.2\pm0.02$ & $62.8\pm0.04$ & $58.0\pm0.01$ & $56.3\pm0.02$ & $55.0\pm0.02$ & $51.8\pm0.03$ \\
		&  & REx & $63.5\pm0.03$ & $62.2\pm0.02$ & $58.4\pm0.03$ & $56.5\pm0.03$ & $55.1\pm0.02$ & $52.2\pm0.01$ \\
		&  & GroupDRO & $63.4\pm0.02$ & $61.6\pm0.04$ & $58.6\pm0.02$ & $57.2\pm0.02$ & $55.8\pm0.03$ & $53.1\pm0.02$ \\
		&  & SharpDRO & $\bm{65.3}\pm\bm{0.02}$ & $\bm{63.0}\pm\bm{0.02}$ & $\bm{59.8}\pm\bm{0.02}$ & $\bm{58.8}\pm\bm{0.02}$ & $\bm{57.8}\pm\bm{0.03}$ & $\bm{55.3}\pm\bm{0.03}$ \\ \midrule[0.6pt]

		\multirow{10}{*}{ImageNet30} & \multirow{5}{*}{Gaussian} & ERM & $87.5\pm0.01$ & $84.6\pm0.01$ & $81.9\pm0.01$ & $76.5\pm0.01$ & $71.2\pm0.01$ & $65.3\pm0.01$ \\ 
		&  & IRM & $86.6\pm0.01$ & $84.4\pm0.03$ & $80.6\pm0.01$ & $75.2\pm0.01$ & $70.7\pm0.03$ & $64.8\pm0.01$ \\ 
		&  & REx& $86.3\pm0.01$ & $83.8\pm0.03$ & $81.1\pm0.02$ & $75.6\pm0.02$ & $71.5\pm0.01$ & $66.1\pm0.03$ \\ 
		&  & GroupDRO & $85.1\pm0.02$ & $84.2\pm0.01$ & $81.2\pm0.03$ & $76.3\pm0.03$ & $72.0\pm0.02$ & $66.3\pm0.01$ \\ 
		&  & SharpDRO& $\bm{88.4}\pm\bm{0.02}$ & $\bm{87.6}\pm\bm{0.01}$ & $\bm{83.3}\pm\bm{0.01}$ & $\bm{79.1}\pm\bm{0.02}$ & $\bm{73.5}\pm\bm{0.03}$ & $\bm{68.7}\pm\bm{0.01}$ \\  \cline{2-9}

		& \multirow{5}{*}{JPEG} & ERM & $86.8\pm0.03$ & $85.3\pm0.03$ & $83.2\pm0.02$ & $82.6\pm0.01$ & $80.4\pm0.04$ & $78.2\pm0.02$ \\
		&  & IRM & $86.2\pm0.03$ & $85.1\pm0.02$ & $83.8\pm0.03$ & $83.2\pm0.03$ & $81.6\pm0.02$ & $79.1\pm0.01$ \\
		&  & REx & $85.8\pm0.02$ & $85.3\pm0.02$ & $83.5\pm0.02$ & $82.9\pm0.02$ & $81.4\pm0.02$ & $78.2\pm0.02$ \\
		&  & GroupDRO & $86.7\pm0.02$ & $84.9\pm0.02$ & $84.1\pm0.02$ & $84.5\pm0.02$ & $82.3\pm0.02$ & $79.0\pm0.02$ \\
		&  & SharpDRO & $\bm{87.4}\pm\bm{0.02}$ & $\bm{86.4}\pm\bm{0.03}$ & $\bm{86.2}\pm\bm{0.03}$ & $\bm{85.6}\pm\bm{0.02}$ & $\bm{83.9}\pm\bm{0.02}$ & $\bm{82.1}\pm\bm{0.03}$ \\  \bottomrule[1pt]
		
	\end{tabular}
\vspace{-2mm}
\end{table*}

\vspace{-2mm}
\begin{theorem}[\textbf{Informal}] Assuming the loss function $\mathbb{L}$ is $l$-Lipschitz smooth, satisfies $\mu$-Polyak-Łojasiewicz (PL) condition on the second variable $\omega$, and has unbiased estimation about the gradient as well as $\sigma^2$ bounded variance, we can get the convergence rate during $T$ iterations:
\begin{equation*}
\small
\begin{aligned}
\!\frac{1}{T}\!\sum_{t=0}^{T-1}\mathbb{E}\|\nabla\mathbb{L}^*(\theta_t)\|^2
&    \!\leq\! 320\sqrt{\frac{3\kappa^4l(\mathbb{E}[\mathbb{L}^*(\theta_0)]\!-\!\min_{\theta}\mathbb{E}[\mathbb{L}^*(\theta)])\sigma^2}{11MT}}\\
&=\mathcal{O}\left(\frac{\kappa^2}{\sqrt{MT}}\right),
\end{aligned}
\end{equation*}
where the conditional number $\kappa=l/\mu$ and $M$ means the sample batch(here we can choose $M=1$)\footnote{The resulting bound here means our SharpDRO can converge to the $\epsilon$-stationary point in $\frac{1}{\epsilon^2}$ iterations. Moreover, we will present our proof details in the \textbf{Appendix}, including the definitions, assumptions, and lemmas.}.
\end{theorem}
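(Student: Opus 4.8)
# Proof Proposal

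The plan is to treat this as a stochastic nonconvex-PL min-max problem and adapt the standard two-timescale analysis. The key structural observation is that the envelope function $\mathbb{L}^*(\theta) = \max_\omega \mathbb{L}(\theta,\omega)$ inherits smoothness from $\mathbb{L}$: under $l$-Lipschitz smoothness of $\mathbb{L}$ and the $\mu$-PL condition in $\omega$, a Danskin-type argument shows $\nabla\mathbb{L}^*(\theta) = \nabla_\theta\mathbb{L}(\theta,\omega^*(\theta))$, and $\mathbb{L}^*$ is $L$-smooth with $L = \Theta(\kappa l)$ where $\kappa = l/\mu$. First I would establish this smoothness transfer lemma, together with a bound on how fast the inner maximizer moves: $\|\omega^*(\theta_1) - \omega^*(\theta_2)\| \le \kappa\|\theta_1-\theta_2\|$ (in an appropriate sense, since PL does not give uniqueness one uses the projection onto the solution set and the error-bound property implied by PL).

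Next I would set up the descent inequality. Since $\mathbb{L}^*$ is $L$-smooth, one step of stochastic gradient descent on $\theta$ with stepsize $\eta_\theta$ gives
\begin{equation*}
\mathbb{E}[\mathbb{L}^*(\theta_{t+1})] \le \mathbb{E}[\mathbb{L}^*(\theta_t)] - \frac{\eta_\theta}{2}\mathbb{E}\|\nabla\mathbb{L}^*(\theta_t)\|^2 + \frac{\eta_\theta}{2}\mathbb{E}\|\nabla_\theta\mathbb{L}(\theta_t,\omega_t) - \nabla\mathbb{L}^*(\theta_t)\|^2 + \frac{L\eta_\theta^2\sigma^2}{2M},
\end{equation*}
where the crucial error term is controlled by the tracking gap $\delta_t := \mathbb{E}\|\omega_t - \omega^*(\theta_t)\|^2$ via $l$-smoothness: $\mathbb{E}\|\nabla_\theta\mathbb{L}(\theta_t,\omega_t) - \nabla_\theta\mathbb{L}(\theta_t,\omega^*(\theta_t))\|^2 \le l^2\delta_t$. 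So I need a companion recursion for $\delta_t$. Using the PL condition on $\omega$ (which yields linear contraction of the function-value gap $\mathbb{L}^*(\theta_t) - \mathbb{L}(\theta_t,\omega_t)$ under stochastic gradient ascent with stepsize $\eta_\omega$), together with the fact that updating $\theta$ perturbs $\omega^*(\theta)$ by at most $O(\kappa\eta_\theta\|\text{grad}\|)$, I would derive
\begin{equation*}
\Delta_{t+1} \le (1-\mu\eta_\omega/2)\,\Delta_t + O(\kappa^2\eta_\theta^2/\eta_\omega)\,\mathbb{E}\|\nabla\mathbb{L}^*(\theta_t)\|^2 + O(\eta_\omega\sigma^2/M),
\end{equation*}
where $\Delta_t$ denotes the function-value tracking error. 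Telescoping this and substituting back into the descent inequality, then summing over $t=0,\dots,T-1$ and dividing by $T$, produces a bound of the form $\frac{1}{T}\sum_t \mathbb{E}\|\nabla\mathbb{L}^*(\theta_t)\|^2 \le \frac{2(\mathbb{E}[\mathbb{L}^*(\theta_0)] - \min_\theta\mathbb{E}[\mathbb{L}^*(\theta)])}{\eta_\theta T} + O(\kappa^2 L\eta_\theta\sigma^2/M)$ once the ratio $\eta_\theta/\eta_\omega$ is chosen small enough (proportional to $1/\kappa^2$) to absorb the cross term.

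Finally I would optimize the stepsize: choosing $\eta_\theta \asymp \sqrt{\frac{M(\mathbb{E}[\mathbb{L}^*(\theta_0)] - \min\mathbb{E}[\mathbb{L}^*(\theta)])}{\kappa^2 L \sigma^2 T}}$ balances the two terms and yields the claimed $\mathcal{O}\!\left(\frac{\kappa^2}{\sqrt{MT}}\right)$ rate (with $L = \Theta(\kappa l)$ contributing the extra $\kappa$ inside the square root, giving $\kappa^4$ under the radical as written, and the explicit constant $320\sqrt{3/11}$ coming from carefully tracking the numerical constants in the two coupled recursions). I expect the main obstacle to be the coupled-recursion bookkeeping: controlling the inner tracking error $\Delta_t$ requires the movement of $\omega^*(\theta_t)$ to be charged against $\mathbb{E}\|\nabla\mathbb{L}^*(\theta_t)\|^2$ rather than a raw gradient-norm bound, and one must verify that the contraction factor $(1-\mu\eta_\omega/2)$ strictly dominates so the geometric sum converges — this forces the precise relationship between $\eta_\theta$, $\eta_\omega$, and $\kappa$, and any slack there degrades the $\kappa$-dependence. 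A secondary subtlety is that PL only gives an error bound, not uniqueness of $\omega^*(\theta)$, so all statements about "the" maximizer must be phrased via the distance to the solution set and the quadratic growth consequence of PL.
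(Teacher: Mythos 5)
Your overall architecture matches the paper's: both rest on the envelope-smoothness lemma ($\nabla\mathbb{L}^*(\theta)=\nabla_\theta\mathbb{L}(\theta,\omega^*(\theta))$ with $\mathbb{L}^*$ being $(l+\frac{l\kappa}{2})$-smooth and $\omega^*(\cdot)$ being $\frac{l}{2\mu}$-Lipschitz, cited from Nouiehed et al.), a descent inequality for $\mathbb{L}^*$ whose error term is the mismatch $\|\nabla\mathbb{L}^*(\theta_t)-\nabla_\theta\mathbb{L}(\theta_t,\omega_t)\|^2$, control of that mismatch through the PL condition, and a final stepsize choice $\eta_\theta\asymp\sqrt{M\Delta/(\kappa^4 l\sigma^2 T)}$. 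The bookkeeping differs in a minor way: you propose a geometric recursion on the tracking error $\Delta_t$ that you telescope and substitute back, whereas the paper bounds the mismatch pointwise by $\kappa^2\|\nabla_\omega\mathbb{L}(\theta_t,\omega_t)\|^2$ and absorbs it into the ascent progress via a Lyapunov function $V_t=\mathbb{L}^*(\theta_t)+\alpha[\mathbb{L}^*(\theta_t)-\mathbb{L}(\theta_t,\omega_t)]$ with $\alpha=\frac{1}{16}$ (following Yang et al.). Either route gives the same rate; the potential-function version avoids having to verify that the contraction factor dominates the geometric sum.

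The genuine gap is that you analyze the wrong update rule. The theorem is about SharpDRO, whose $\theta$-step is not plain stochastic gradient descent but the SAM-style two-stage update $\theta_{t+1}=\theta_t-\eta_\theta\, g_\theta(\theta_{t+1/2},\omega_t)$ with $\theta_{t+1/2}=\theta_t+\rho\, g_\theta(\theta_t,\omega_t)$. Your descent inequality uses the unperturbed stochastic gradient at $(\theta_t,\omega_t)$, so it proves convergence of stochastic GDmax, not of the algorithm in the theorem. A substantial portion of the paper's proof (its Lemma A.3 bounding $\mathbb{E}\|g_\theta(\theta_{t+1/2},\omega_t)\|^2\le(4\rho^2l^2+2\rho l+2)\mathbb{E}\|\nabla_\theta\mathbb{L}(\theta_t,\omega_t)\|^2+(5\rho^2l^2+2)\frac{\sigma^2}{M}$, and the cross-term estimate $\mathbb{E}\langle\nabla\mathbb{L}^*(\theta_t),\nabla_\theta\mathbb{L}(\theta_{t+1/2},\omega_t)\rangle\ge\frac{1}{2}(1-5\rho l)\mathbb{E}\|\nabla\mathbb{L}^*(\theta_t)\|^2-\cdots$) exists precisely to control the bias introduced by evaluating the gradient at the perturbed point; this is also where the conditions $\rho l\le\frac{1}{16}$ and $\rho\le\frac{\eta_\theta}{2l}$ enter, without which the $\rho$-dependent terms would not be absorbable into the descent. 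To complete your argument you would need to add this layer: show that the perturbed stochastic gradient is, up to $O(\rho l)$ multiplicative and $O(\rho^2l^2\sigma^2/M)$ additive corrections, as good a descent direction for $\mathbb{L}^*$ as the unperturbed one, and thread the resulting constraints on $\rho$ through the stepsize selection.
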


\section{Experiments}
\label{sec:experiments}
In our experiments, we first give details about our experimental setup. Then, we conduct quantitative experiments to compare to proposed SharpDRO with the most popular baseline methods by considering both distribution-aware and distribution agnostic settings, which shows the capability of SharpDRO to tackle the most challenging distributions. Finally, we conduct qualitative analyses to investigate the effectiveness of SharpDRO in achieving robust generalization.

\begin{table*}[t]
\vspace{-8mm}
	\scriptsize
	\centering
	\caption{Quantitative comparisons on distribution-agnostic robust generalization setting. Averaged accuracy ($\%$) with standard deviations are computed over three independent trails.}
		\vspace{-0.2cm}
	\setlength{\tabcolsep}{2.8mm}
	\label{tab:distribution_agnostic}
	\begin{tabular}{lllcccccc}
		\toprule[1pt]
		\multirow{2}{*}{Dataset} & \multirow{2}{*}{Type} & \multirow{2}{*}{Method} & \multicolumn{6}{c}{Corruption Severity} \\
		&  &  & \multicolumn{1}{c}{0} & \multicolumn{1}{c}{1} & \multicolumn{1}{c}{2} & \multicolumn{1}{c}{3} & \multicolumn{1}{c}{4} & \multicolumn{1}{c}{5} \\ \midrule[0.6pt]
		\multirow{6}{*}{CIFAR10} & \multirow{3}{*}{Gaussian} & JTT & \multicolumn{1}{c}{$89.9\pm0.02$} & \multicolumn{1}{c}{$88.8\pm0.02$} & \multicolumn{1}{c}{$86.5\pm0..0$2} & \multicolumn{1}{c}{$86.1\pm0.02$} & \multicolumn{1}{c}{$83.4\pm0.03$} & \multicolumn{1}{c}{$79.8\pm0.02$} \\
		&  & EIIL & \multicolumn{1}{c}{$88.6\pm0.02$} & \multicolumn{1}{c}{$87.5\pm0.03$} & \multicolumn{1}{c}{$86.3\pm0.03$} & \multicolumn{1}{c}{$85.4\pm0.02$} & \multicolumn{1}{c}{$83.2\pm0.03$} & \multicolumn{1}{c}{$78.8\pm0.01$} \\
		&  & SharpDRO & \multicolumn{1}{c}{$\bm{91.3}\pm\bm{0.01}$} & \multicolumn{1}{c}{$\bm{90.2}\pm\bm{0.02}$} & \multicolumn{1}{c}{$\bm{88.7}\pm\bm{0.01}$} & \multicolumn{1}{c}{$\bm{87.3}\pm\bm{0.02}$} & \multicolumn{1}{c}{$\bm{84.2}\pm\bm{0.02}$} & \multicolumn{1}{c}{$\bm{84.3}\pm\bm{0.01}$} \\ \cline{2-9}

		& \multirow{3}{*}{JPEG} & JTT & \multicolumn{1}{c}{$88.9\pm0.03$} & \multicolumn{1}{c}{$87.2\pm0.04$} & \multicolumn{1}{c}{$85.3\pm0.03$} & \multicolumn{1}{c}{$83.5\pm0.03$} & \multicolumn{1}{c}{$81.0\pm0.04$} & \multicolumn{1}{c}{$78.8\pm0.03$} \\
		&  & EIIL & \multicolumn{1}{c}{$89.4\pm0.03$} & \multicolumn{1}{c}{$87.6\pm0.03$} & \multicolumn{1}{c}{$85.3\pm0.04$} & \multicolumn{1}{c}{$83.2\pm0.03$} & \multicolumn{1}{c}{$81.6\pm0.02$} & \multicolumn{1}{c}{$79.1\pm0.01$} \\
		&  & SharpDRO & \multicolumn{1}{c}{$\bm{90.3}\pm\bm{0.02}$} & \multicolumn{1}{c}{$\bm{88.2}\pm\bm{0.02}$} & \multicolumn{1}{c}{$\bm{87.2}\pm\bm{0.02}$} & \multicolumn{1}{c}{$\bm{85.3}\pm\bm{0.02}$} & \multicolumn{1}{c}{$\bm{84.2}\pm\bm{0.02}$} & \multicolumn{1}{c}{$\bm{82.3}\pm\bm{0.03}$} \\  \midrule[0.6pt]

		\multirow{6}{*}{CIFAR100} & \multirow{3}{*}{Gaussian} & JTT & $68.0\pm0.02$ & $65.3\pm0.02$ & $61.3\pm0.01$ & $56.3\pm0.01$ & $54.2\pm0.03$ & $51.2\pm0.02$ \\
		&  & EIIL & $67.2\pm0.01$ & $66.2\pm0.02$ & $61.0\pm0.02$ & $55.8\pm0.02$ & $54.6\pm0.03$ & $52.1\pm0.02$ \\
		&  & SharpDRO & $\bm{69.6}\pm\bm{0.03}$ & $\bm{68.0}\pm\bm{0.02}$ & $\bm{63.6}\pm\bm{0.03}$ & $\bm{58.2}\pm\bm{0.02}$ & $\bm{56.5}\pm\bm{0.03}$ & $\bm{54.1}\pm\bm{0.03}$ \\ \cline{2-9}

		& \multirow{3}{*}{JPEG} & JTT & $63.1\pm0.03$ & $61.0\pm0.04$ & $58.4\pm0.03$ & $56.2\pm0.02$ & $54.5\pm0.04$ & $51.9\pm0.03$ \\
		&  & EIIL & $63.6\pm0.02$ & $61.5\pm0.03$ & $58.8\pm0.03$ & $\bm{57.7}\pm\bm{0.02}$ & $54.2\pm0.03$ & $52.2\pm0.04$ \\
		&  & SharpDRO & $\bm{64.2}\pm\bm{0.02}$ & $\bm{62.5}\pm\bm{0.03}$ & $\bm{60.1}\pm\bm{0.03}$ & $57.6\pm0.03$ & $\bm{57.2}\pm\bm{0.02}$ & $\bm{54.8}\pm\bm{0.02}$ \\  \midrule[0.6pt]

		\multirow{6}{*}{ImageNet30} & \multirow{3}{*}{Gaussian} & JTT & $87.3\pm0.02$ & $84.5\pm0.02$ & $82.3\pm0.04$ & $75.6\pm0.01$ & $72.1\pm0.04$ & $66.5\pm0.02$ \\
		&  & EIIL & $\bm{88.2}\pm\bm{0.02}$ & $85.2\pm0.03$ & $81.3\pm0.02$ & $74.5\pm0.02$ & $71.5\pm0.02$ & $65.0\pm0.04$ \\
		&  & SharpDRO & $87.1\pm0.02$ & $\bm{86.9}\pm\bm{0.02}$ & $\bm{83.5}\pm\bm{0.03}$ & $\bm{78.0}\pm\bm{0.02}$ & $\bm{74.9}\pm\bm{0.02}$ & $\bm{68.4}\pm\bm{0.03}$ \\ \cline{2-9}

		& \multirow{3}{*}{JPEG} & JTT & $85.4\pm0.02$ & $84.3\pm0.02$ & $82.5\pm0.02$ & $80.0\pm0.02$ & $78.8\pm0.03$ & $77.4\pm0.02$ \\
		&  & EIIL & $85.3\pm0.02$ & $84.2\pm0.03$ & $83.2\pm0.03$ & $80.6\pm0.02$ & $78.4\pm0.02$ & $77.2\pm0.02$ \\
		&  & SharpDRO & $\bm{86.6}\pm\bm{0.03}$ & $\bm{85.7}\pm\bm{0.02}$ & $\bm{85.0}\pm\bm{0.02}$ & $\bm{84.3}\pm\bm{0.02}$ & $\bm{83.2}\pm\bm{0.03}$ & $\bm{82.5}\pm\bm{0.02}$ \\  \bottomrule[1pt]
		
	\end{tabular}
\vspace{-2mm}
\end{table*}

\subsection{Experimental Setup}
For distribution-aware situation, we choose GroupDRO~\cite{sagawa2019distributionally}, IRM~\cite{arjovsky2019invariant}, REx~\cite{krueger2021out}, and ERM for comparisons. As for distribution-agnostic situation, we pick JTT~\cite{liu2021just} and Environment Inference for Invariant Learning (EIIL)~\cite{creager2021environment} for baseline methods\footnote{Note that we do not include sharpness minimization method SAM~\cite{foret2020sharpness} in this problem setting because its OOD generalization performance is worse than ERM. However, we conduct detailed analysis between SharpDRO and SAM in Section~\ref{sec:qualitative_analysis}}. For each problem setting, we construct corrupted dataset using CIFAR10/100~\cite{krizhevsky2009learning} and ImageNet30~\cite{russakovsky2015imagenet} datasets. Specifically, we following~\cite{hendrycks2019benchmarking} to perturb the image data with severity level varies from $1$ to $5$ by using two types of corruption: ``Gaussian Noise'' and ``JPEG Compression''. Moreover, the clean data are considered as having a corruption severity of $0$. For each corrupted distribution, we sample them with different probabilities by following Poisson distribution $P(s; \lambda=1)$, \textit{i.e.}, for $s$ varies from $0$ to $5$, the sample probabilities are $\left\{0.367, 0.367, 0.184, 0.061, 0.015, 0.003\right\}$, respectively. Then, we test the robust performance on each data distribution. For hyper-parameter $\rho$, we follow~\cite{foret2020sharpness} by setting it to $0.05$ to control the magnitude of $\epsilon^*$. For each experiment, we conduct three independent trials and report the average test accuracy with standard deviations. Please see more details on experimental setting, results on other corruptions, and practical implementation in the \textbf{Appendix}.

\subsection{Quantitative Comparisons}
In this part, we focus on three questions: 1) Can SharpDRO perform well on two situations of robust generalization? 2) Does SharpDRO generalize well on the most severely corrupted distributions? and 3) Is SharpDRO able to tackle different types of corruption? To answer these questions, we conduct experiments on both two settings by testing on different corruption types and severity levels.

\paragraph{Distribution-Aware Robust Generalization}
\label{sec:distribution_aware}
As shown in Table~\ref{tab:distribution_aware}, we can see that SharpDRO surpasses other methods with larger performance gains as the corruption severity goes stronger. Especially in CIFAR10 dataset on ``Gaussian Noise'' corruption, improvement margin between SharpDRO and second-best method is $1.1\%$ with severity of $0$, which is further increased to about $\bm{4.2\%}$ with severity of $5$, which indicates the capability of SharpDRO on generalization against severe corruptions. Moreover, SharpDRO frequently outperforms other methods on all scenarios, which manifests the robustness of SharpDRO against various corruption types.

\paragraph{Distribution-Agnostic Robust Generalization}
\label{sec:distribution_agnostic}
In Table~\ref{tab:distribution_agnostic}, we can see a similar phenomenon as in Table~\ref{tab:distribution_aware} that the more severe corruptions are applied, the larger performance gains SharpDRO achieves. Especially, in the ImageNet30 dataset corrupted by ``JPEG Compression'', SharpDRO shows about $1.2\%$ performance gains upon the second-best method with severity $0$, which is further increased to almost $\bm{5.1\%}$ with severity $5$. Moreover, SharpDRO is general to all three corruption types, as it surpasses other methods in most cases. Therefore, the proposed method can perfectly generalize to worst-case data even without the distribution annotations.

\begin{figure*}[t]
	\vspace{-6mm}
	\centering
	\begin{minipage}[t]{0.195\textwidth}
		\centering
		\includegraphics[width=\linewidth]{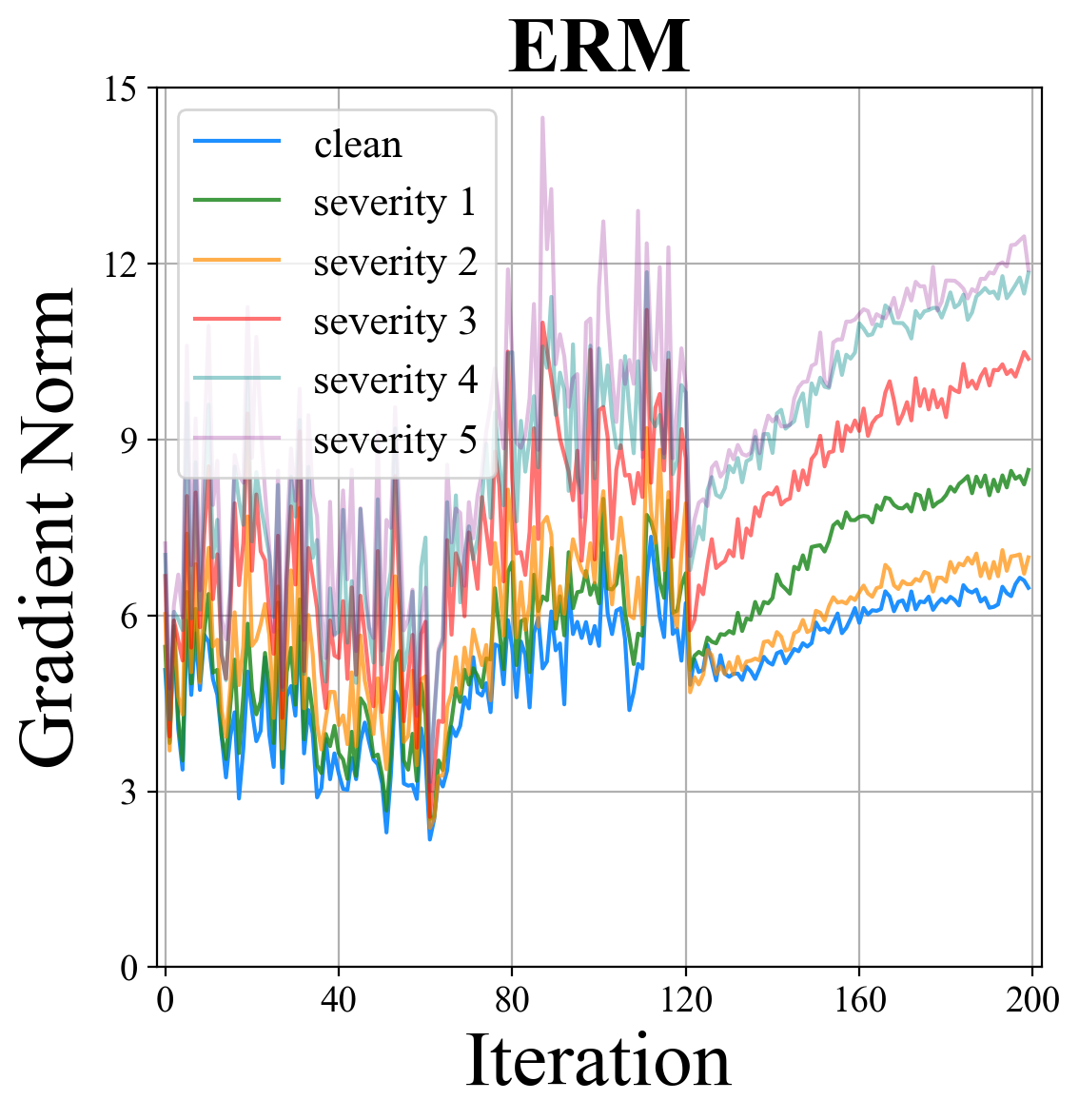}
	\end{minipage}
	\begin{minipage}[t]{0.195\textwidth}
		\centering
		\includegraphics[width=\linewidth]{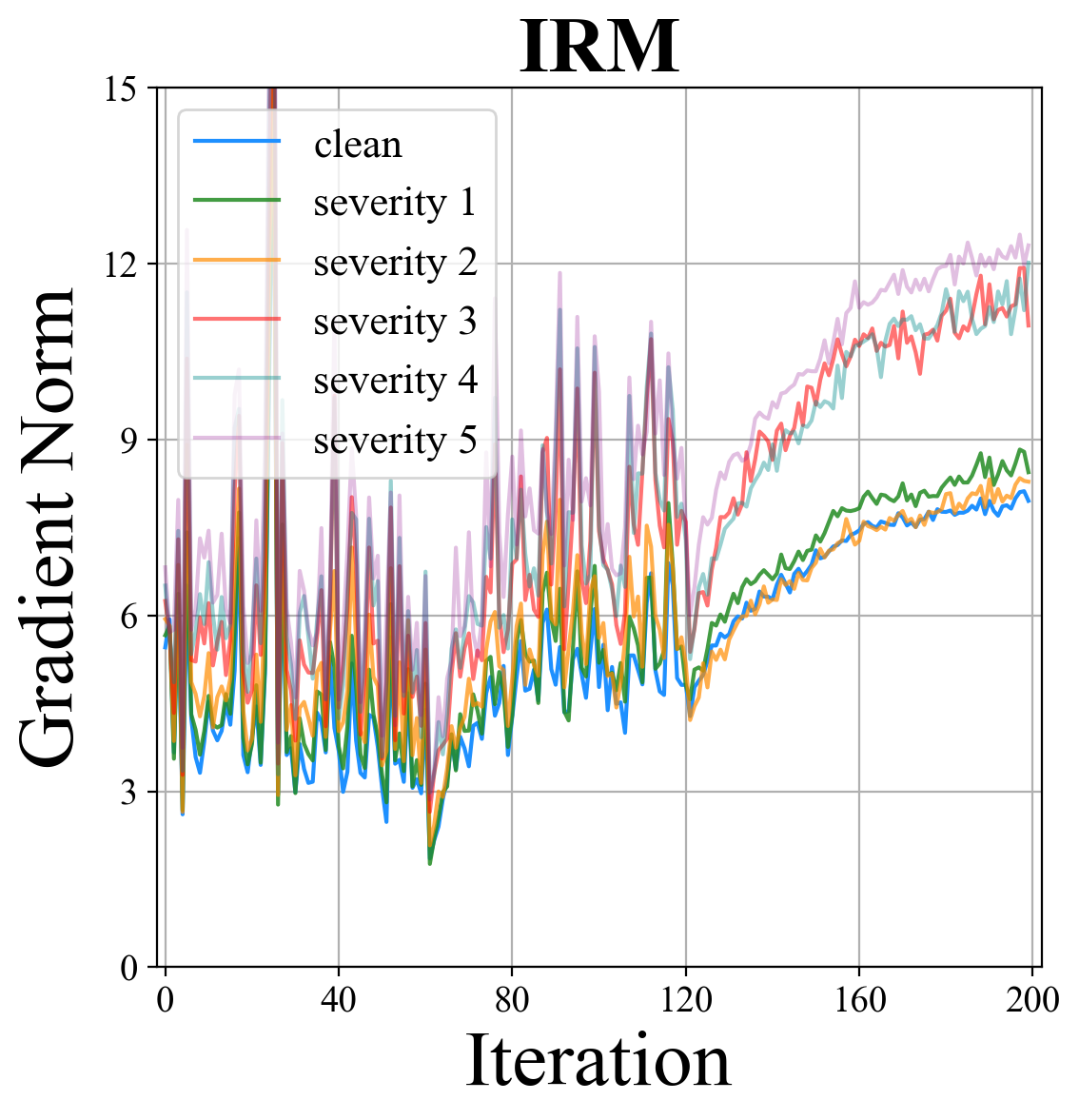}
	\end{minipage}
	\begin{minipage}[t]{0.195\textwidth}
		\centering
		\includegraphics[width=\linewidth]{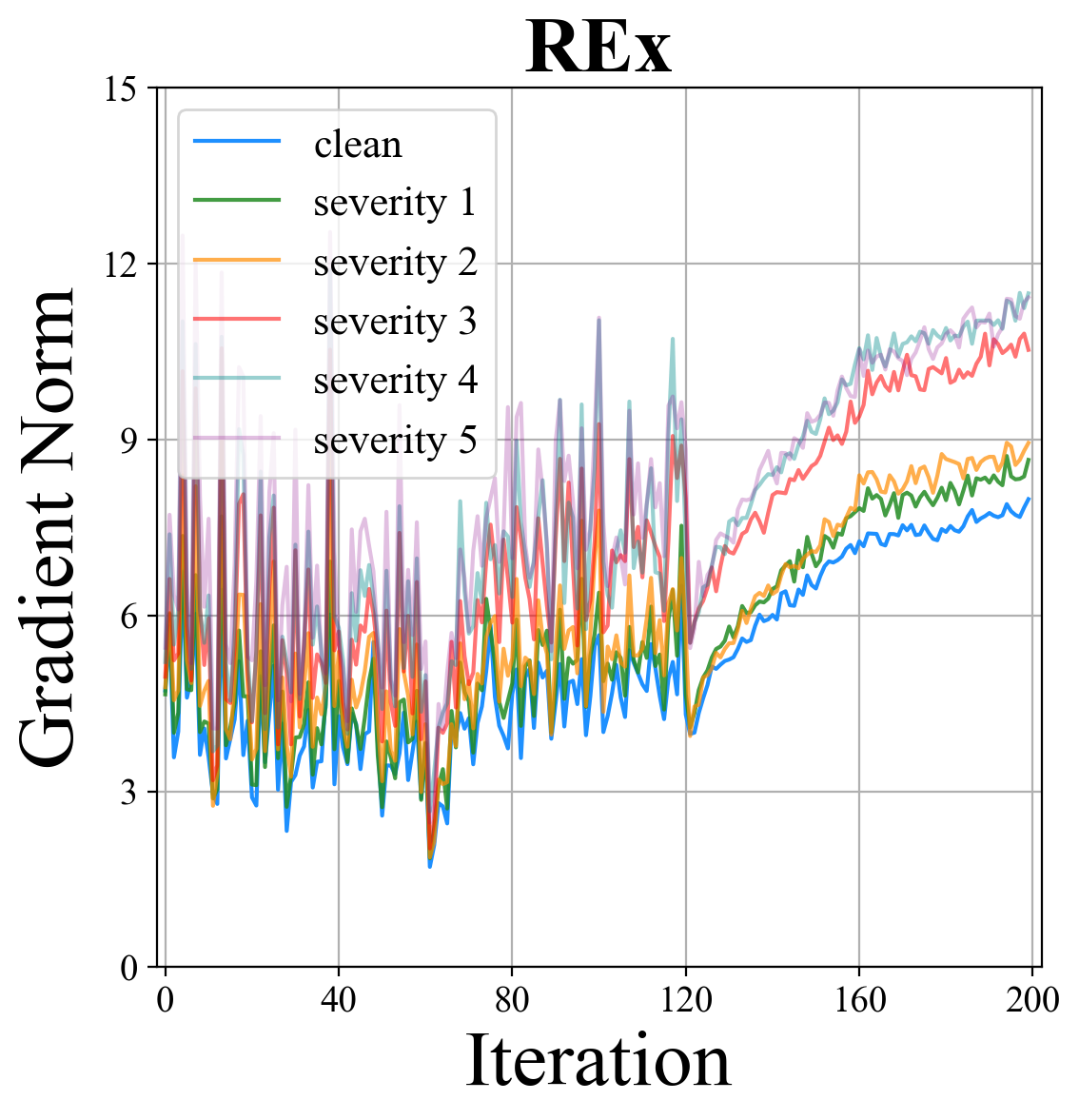}
	\end{minipage}
	\begin{minipage}[t]{0.195\textwidth}
		\centering
		\includegraphics[width=\linewidth]{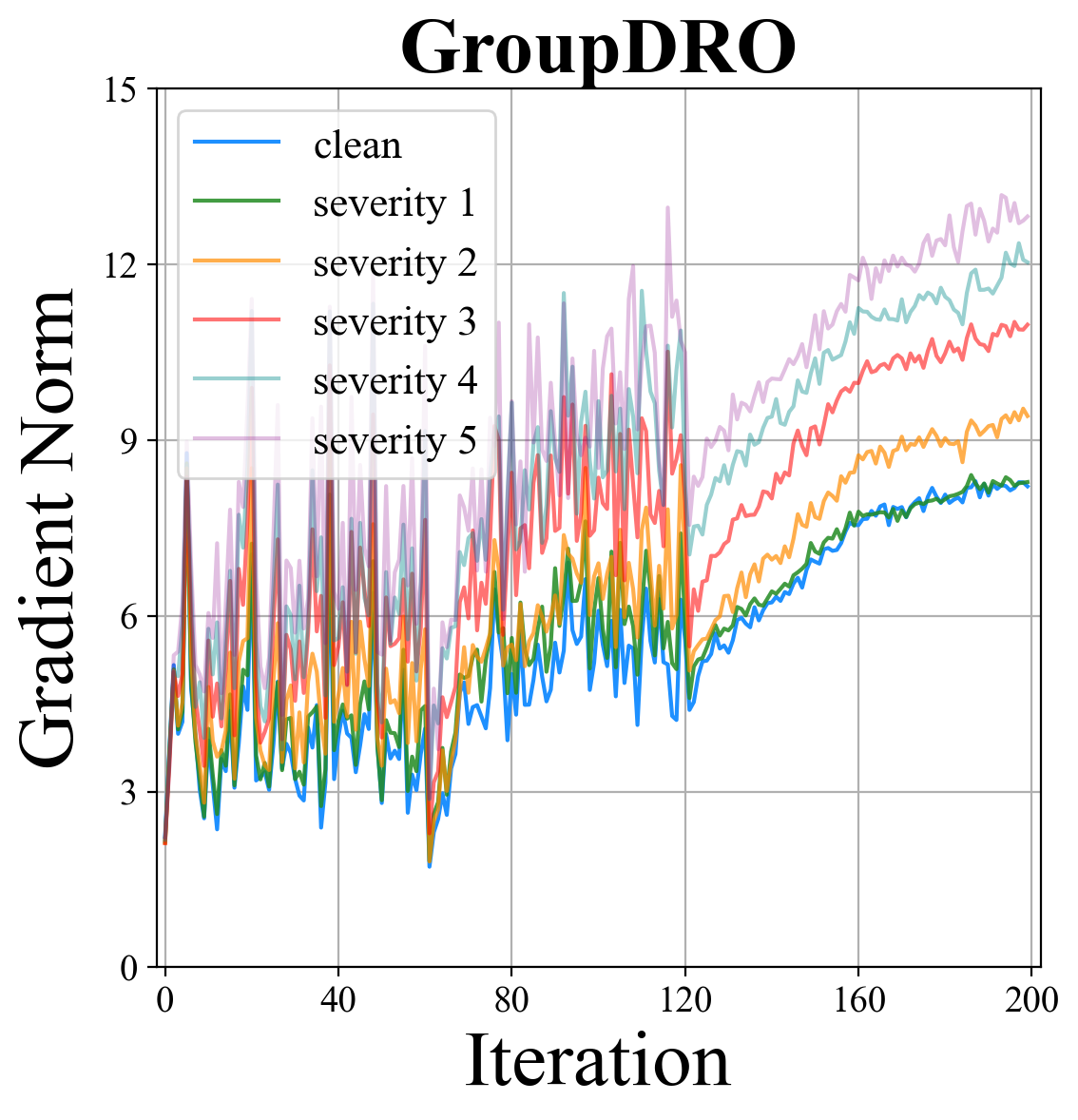}
	\end{minipage}
	\begin{minipage}[t]{0.195\textwidth}
		\centering
		\includegraphics[width=\linewidth]{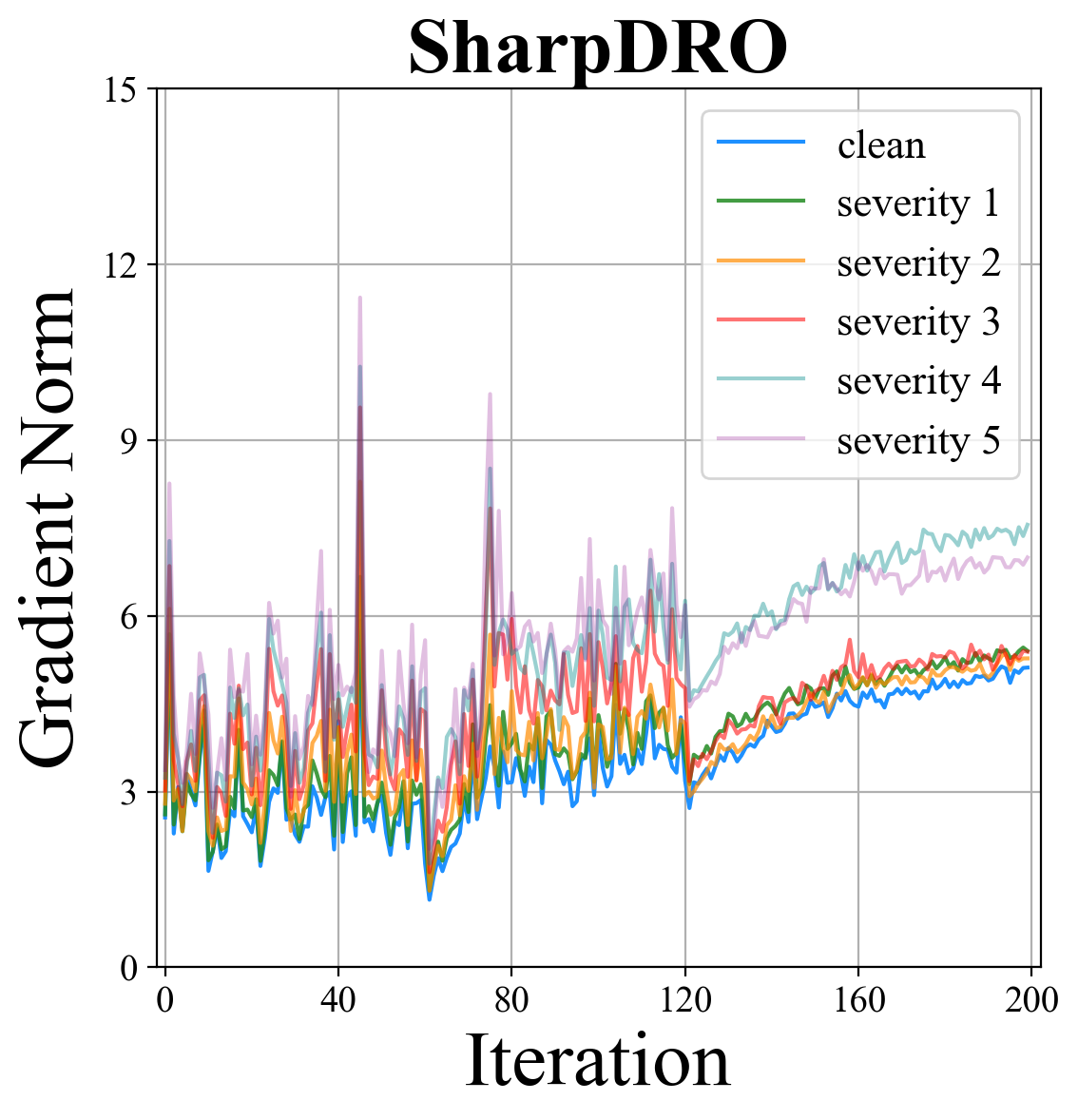}
	\end{minipage}
	\vspace{-3mm}
	\caption{\small Gradient norm comparisons between different methods over all corrupted distributions.}
	\label{fig:gradnorm}
	\vspace{-0.4cm}
\end{figure*}

\subsection{Qualitative Analysis}
\label{sec:qualitative_analysis}
To investigate the effectiveness of SharpDRO, we first conduct an ablation study to show that the worst-case sharpness minimization is essential for achieving generalization with robustness. Then, we utilize gradient norm, an important criterion to present training stability, to validate that our method is stable for severely corrupted distributions. Then, we analyze the hyper-parameter $\rho$ and OOD score $\bar{w}$ to disclose the effectiveness of sharpness minimization and worst-case data selection. Finally, another second-order methdo SAM~\cite{foret2020sharpness, zhong2022improving, mi2022make, sun2023adasam, sun2023fedspeed} is investigated to discover the efficiency property of SharpDRO. All analyses are conducted using CIFAR10 with ``Gaussian Noise'' corruption.

\vspace{-3mm}
\begin{table}[H]
	\small
	\caption{Ablation study. "w/o data selection" denotes training without worst-case data selection, which recovers SAM~\cite{foret2020sharpness}, and "w/o sharp min" indicates training without sharpness minimization, which is the same as GroupDRO~\cite{sagawa2019distributionally}.}
		\vspace{-0.2cm}
	\setlength{\tabcolsep}{1.05mm}
	\label{tab:ablation_study}
	\begin{tabular}{lclllll}
		\toprule[1pt]
		\multirow{2}{*}{Method} & \multicolumn{6}{c}{Corruption Severity} \\
		& 0 & \multicolumn{1}{c}{1} & \multicolumn{1}{c}{2} & \multicolumn{1}{c}{3} & \multicolumn{1}{c}{4} & \multicolumn{1}{c}{5} \\ \midrule[0.6pt]
		w/o data selection (SAM) & 93.2 & \multicolumn{1}{c}{90.5} & \multicolumn{1}{c}{87.6} & \multicolumn{1}{c}{82.1} & \multicolumn{1}{c}{80.5} & \multicolumn{1}{c}{75.4} \\
		w/o sharp min (GroupDRO) & \multicolumn{1}{l}{90.2} & 89.1 & 88.4 & 84.3 & 83.0 & 78.2 \\
		\rowcolor{gray!25} SharpDRO & \multicolumn{1}{l}{92.9} &  91.3 & 90.5 & 88.4 & 86.9 & 84.7 \\ \bottomrule[1pt]
	\end{tabular}
\vspace{-6mm}
\end{table}

\vspace{-2mm}
\paragraph{Ablation Study}
\label{sec:ablation_study}
By eliminating the worst-case data selection, we recover the original sharpness minimization method SAM~\cite{foret2020sharpness}. Then, we remove the sharpness minimization module, which is basically training via GroupDRO. The ablation results are shown in Table~\ref{tab:ablation_study}. We can see that deploying SAM on the whole training dataset can achieve improved results on the clean dataset. However, the robust performance on corrupted distributions are even worse than GroupDRO. This could be because that sharpness is easy to be dominated by principle distributions, which is misleading for generalization to small distributions. Thus, the sharpness of corrupted data would be sub-optimal. As for GroupDRO, it fails to produce a flat loss surface for worst-case data, hence cannot generalize as well as the proposed SharpDRO.

\vspace{-1mm}
\paragraph{Distributional Stability}
To show our method can be stable even in the most challenging distributions, we show the gradient norm on a validation set including corruption severity from $0$ to $5$. As shown in Figure~\ref{fig:gradnorm}, SharpDRO not only produces the smallest norm value but also can ensure almost equal gradient norm across all corrupted distributions, which indicates that SharpDRO is the most distributionally stable method among all compared methods.

\paragraph{Parameter Analysis}

\begin{figure}[t]
	\centering
    \includegraphics[width=\linewidth]{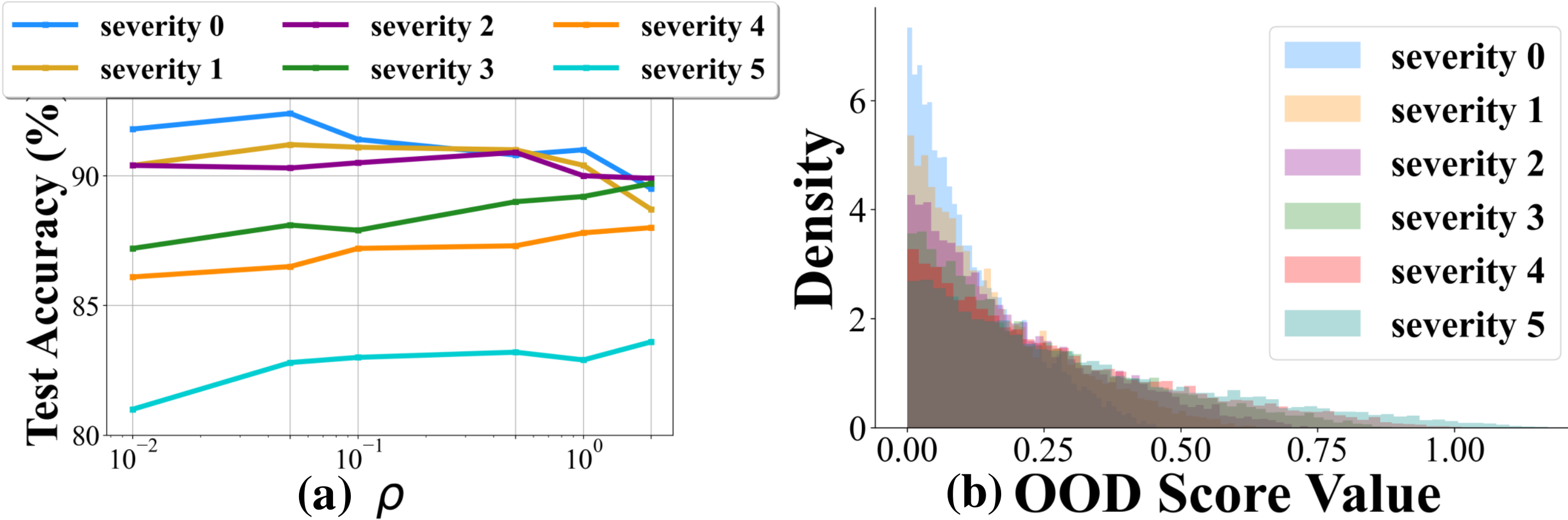}
\caption{\small (a) Sensitivity of $\rho$ whose value is set to $\left\{0.01, 0.05, 0.1, 0.5, 1, 2\right\}$. (b) Distribution of the normalized OOD score $\bar{\omega}$ on distribution $s=0$ to $5$.}
\label{fig:sens_ood}
\vspace{-0.4cm}
\end{figure}

To understand how the scale parameter $\rho$ affects our generalization performance, we conduct sensitivity analysis by changing this value and show the test results of different distributions. In figure~\ref{fig:sens_ood} (a), we find an interesting discovery that as $\rho$ increases, which indicates the perturbation magnitude $\epsilon^*$ enlarges, would enhance the generalization of severely corrupted data but degrades the performance of slightly corrupted data. This might be because the exploration of hard distributions needs to cover wide range of neighborhood to ensure generalization. On the contrary, exploration too far on easy distributions can reach out-of-distribution, thus causing performance degradation. Therefore, for practitioners who aim to generalize on small and difficult datasets, we might be able to enhance performance by aggressively setting a large perturbation scale.

\paragraph{OOD Score Analysis}
\label{sec:ood_score}
The OOD score is leveraged to select worst-case data for the distribution-agnostic setting. To show its effectiveness in selecting the noisy data, we plot the value distribution of OOD scores from all corrupted distributions in epoch $30$ in Figure~\ref{fig:sens_ood} (b). We can see the tendency that a severer corruption has larger OOD scores. Therefore, our OOD score is a valid criterion to select worst-case data. Note that during the training process, the worst-case data would be \textbf{gradually learned}, thus the OOD score can become \textbf{smaller}, which explains why the value distribution of our score is not as separable as OOD detection does.

\paragraph{Training Efficiency Analysis}
It is clear that the proposed SharpDRO method is a second-order optimization method. Hence, when compared to first-order methods such as GroupDRO and REx, computational cost is the price to pay for achieving improved generalization performance\footnote{Note that our method can be deployed with existing efficient sharpness-based methods~\cite{zhang2022ga, du2022efficient, du2022sharpness, zhao2022ss}.}. However, to further explore the advantage of SharpDRO compared to other second-order method, here we use SAM~\cite{foret2020sharpness} as a competitor, and show their computational time as well as worst-case accuracy ($s=5$) in Figure.~\ref{fig:time}. We can see that on all three datasets, our SharpDRO requires nearly the same time to train, and significantly outperforms the worst-case performance of SAM, owing to our efficient worst-case data selection which is vital for robust generalization against severe corruptions.

\begin{figure}[t]
	\centering
	\includegraphics[width=0.85\linewidth]{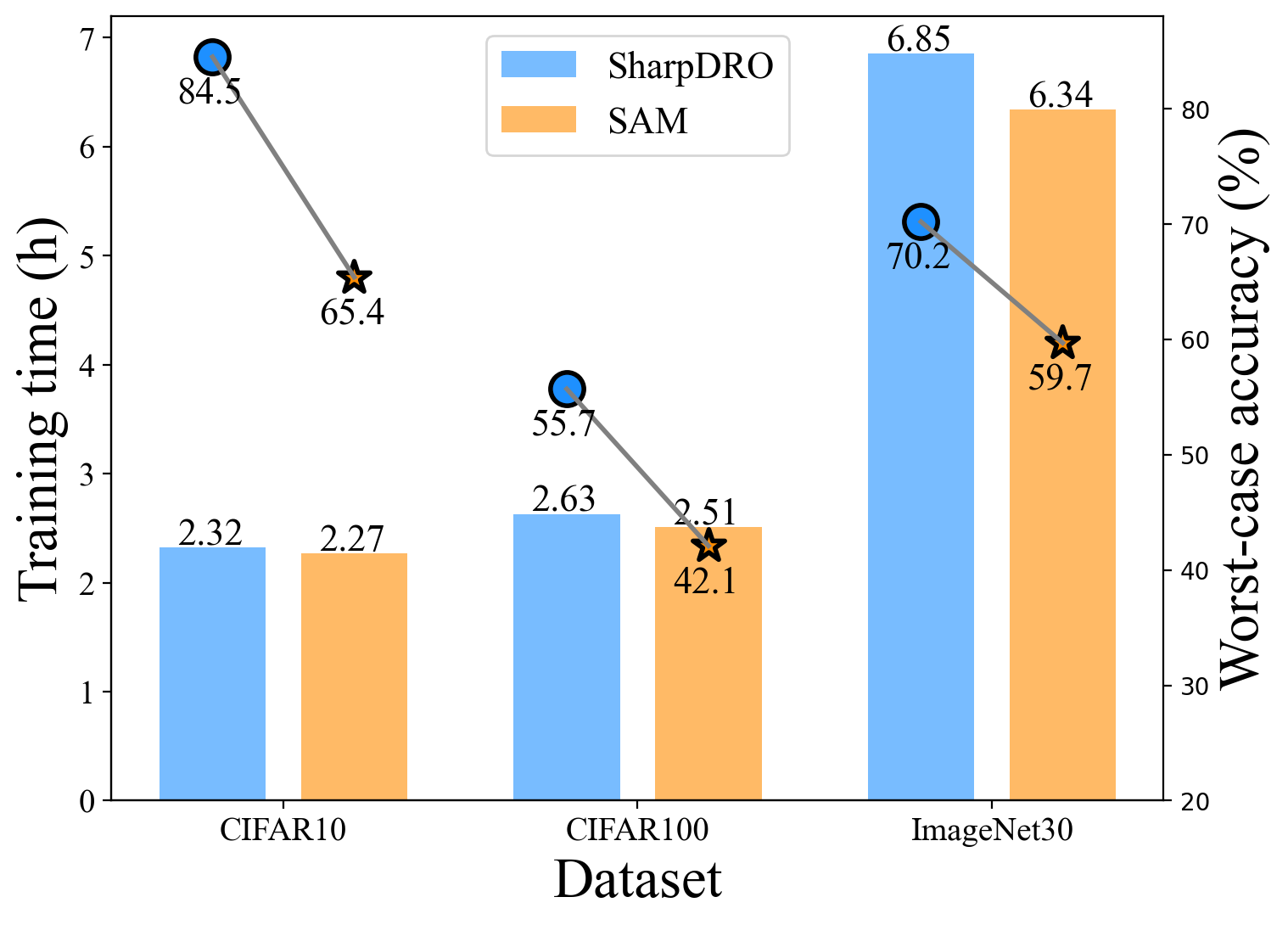}
	\vspace{-4mm}
	\caption{\small Efficiency comparison between SharpDRO and SAM.}
	\label{fig:time}
	\vspace{-4mm}
\end{figure}

\vspace{-1mm}
\section{Conclusion}
\vspace{-1mm}
\label{sec:conclusion}
In this paper, we proposed a SharpDRO approach to enhance the generalization performance of DRO methods. Specifically, we focus on minimizing the sharpness of worst-case data to learn flat loss surfaces. As a result, SharpDRO is more robust to severe corruptions compared to other methods. Moreover, we apply SharpDRO to distribution-aware and distribution-agnostic settings and proposed an OOD detection process to select the worst-case data when the distribution index is not known. Extensive quantitative and qualitative experiments have been conducted to show that SharpDRO can deal with the most challenging corrupted distributions and achieve improved generalization results compared to well-known baseline methods.

%

{\small
	\bibliographystyle{ieee_fullname}
	\bibliography{egbib}
}

\appendix
\onecolumn
\vspace{0.5in}
\begin{center}
	\rule{6.875in}{0.7pt}\\ 
	{\Large\bf Supplementary Material for ``Robust Generalization against Photon-Limited \\ Corruptions via Worst-Case Sharpness Minimization''}
	\rule{6.875in}{0.7pt}
\end{center}

In the supplementary material, we first provide the details in the proof for the theoretical result in the main paper in Section.~\ref{appendix_proof}. Then, we give details about our implementation details in Section~\ref{appendix_details}. Finally, we show more experimental results using different types of corruptions in Section~\ref{appendix_exp_results}.
 
\section{Convergence Analyses} \label{appendix_proof}
\newtheorem{assumption}{Assumption}[section]
\newtheorem{definition}{Definition}[section]
\newtheorem{lemma}{Lemma}[section]

\subsection{Preliminaries}
We first give some notations before we start our proof for the convergence.
\begin{enumerate}
    \item We denote the expectation value for the loss function as $\mathbb{L}(\theta,\omega):=\mathbb{E}_{(x,y)\sim Q}\mathcal{L}(\theta,\omega;(x,y))$, and so as the SAM function that $\mathbb{R}(\theta,\omega)=\mathbb{E}_{(x,y)\sim Q}R(\theta,\omega;(x,y))$. So our objective can be turned into: $\min_{\theta}\{\max_{\omega}\mathbb{L}(\theta,\omega)\}+\mathbb{R}(\theta,\omega)$. And recalling our SharpDRO algorithm, we restate the meaning of the parameters: the model is parameterized by $\theta$ and $\omega$ means the weighted sampling.
    \item $\kappa$ is the condition number that $\kappa=\frac{l}{\mu}$, where $l$ is the Lipschitz-smoothness in Assumption \ref{as:smooth} and $\mu$ means the PL condition in Assumption \ref{as:pl}.
    \item We define $\mathbb{L}^*(\theta)=\max_{\omega}\mathbb{L}(\theta,\omega)$ and $\omega^*(\theta)=arg\max_{\omega}\mathbb{L}(\theta,\omega)$.
\end{enumerate}

\subsection{Update Rule}
Before our theoretical analyses, we need to make the update rule for each variable explicit. We have to pay attention to the fact that our algorithm is stochastic that we can not directly get the real value of the gradient $\nabla\mathbb{L}(\theta,\omega)$, rather we estimate it by batches of samples $g_{\theta}(\theta,\omega)=\frac{1}{M}\sum_{i=1}^M\frac{\partial \mathcal{L}}{\partial \theta}(\theta,\omega;(x_i,y_i))$ and $g_{\omega}(\theta,\omega)=\frac{1}{M}\sum_{i=1}^M\frac{\partial \mathcal{L}}{\partial \omega}(\theta,\omega;(x_i,y_i))$, who hold some properties we will introduce in Assumption \ref{as:bv}. So the optimization iteration is executed as follows in reality:
\begin{equation}
    \begin{aligned}
    &\theta_{t+1}=\theta_t-\eta_{\theta}g_{\theta}(\theta_t+\rho g_{\theta}(\theta_t,\omega_t),\omega_t);\\
    &\omega_{t+1}=\omega_t+\eta_{\omega}\nabla_{\omega}g_{\omega}(\theta_t,\omega_t).
    \end{aligned}
\end{equation}

We further give a notation for brief that $\theta_{t+1/2}\triangleq\theta_t+\rho g_{\theta}(\theta_t,\omega_t)$, so the update for $\theta$ can be simplified as: $\theta_{t+1}=\theta_t-\eta_{\theta}g_{\theta}(\theta_{t+1/2},\omega_t)$.

\subsection{Assumptions}
We also have to make some necessary assumptions on our problem setting for this convergence proof:
\begin{assumption}[Bounded variance]\label{as:bv}
The unbiased estimation about the gradient of the loss function also has bounded variance that:
\begin{equation*}
    \begin{aligned}
    \mathbb{E}_{(x,y)\sim Q}[\frac{\partial\mathcal{L}}{\partial\theta}(\theta,\omega;(x,y))]=\nabla_{\theta}\mathbb{L}(\theta,\omega),&\quad \mathbb{E}_{(x,y)\sim Q}\|\frac{\partial \mathcal{L}}{\partial \theta}(\theta,\omega;(x,y))-\nabla_{\theta}\mathbb{L}(\theta,\omega)\|^2\leq\sigma^2;\\
    \mathbb{E}_{(x,y)\sim Q}[\frac{\partial\mathcal{L}}{\partial\omega}(\theta,\omega;(x,y))]=\nabla_{\omega}\mathbb{L}(\theta,\omega),&\quad \mathbb{E}_{(x,y)\sim Q}\|\frac{\partial \mathcal{L}}{\partial \omega}(\theta,\omega;(x,y))-\nabla_{\omega}\mathbb{L}(\theta,\omega)\|^2\leq\sigma^2.
    \end{aligned}
\end{equation*}
\end{assumption}
\begin{remark}
Since $g_{\theta}$ and $g_{\omega}$ are the averaged samples that: $g_{\theta}=\frac{1}{M}\sum_{i=1}^M \frac{\partial \mathcal{L}}{\partial \theta }(\theta,\omega;(x_i,y_i))$ and $g_{\omega}=\frac{1}{M}\sum_{i=1}^M\frac{\partial\mathcal{L}}{\partial\omega}(\theta,\omega;(x_i,y_i))$ respectively, they also have the unbiased property and have bounded variance:
\begin{equation*}
    \begin{aligned}
    \mathbb{E}_{(x,y)\sim Q}[g_{\theta}(\theta,\omega;(x,y))]=\nabla_{\theta}\mathbb{L}(\theta,\omega),&\quad \mathbb{E}_{(x,y)\sim Q}\|g_{\theta}(\theta,\omega;(x,y))-\nabla_{\theta}\mathbb{L}(\theta,\omega)\|^2\leq\frac{\sigma^2}{M};\\
    \mathbb{E}_{(x,y)\sim Q}[g_{\omega}(\theta,\omega;(x,y))]=\nabla_{\omega}\mathbb{L}(\theta,\omega),&\quad \mathbb{E}_{(x,y)\sim Q}\|g_{\omega}(\theta,\omega;(x,y))-\nabla_{\omega}\mathbb{L}(\theta,\omega)\|^2\leq\frac{\sigma^2}{M}.
    \end{aligned}
\end{equation*}
\end{remark}

\begin{assumption}[Lipschitz smooth]\label{as:smooth}
$\mathcal{L}(\theta,\omega;(x,y))$ is differential and $l$-Lipschitz smooth for every given sample $(x,y)$:
\begin{equation*}
    \begin{aligned}
    \|\nabla_{\theta}\mathcal{L}(\theta_1,\omega;(x,y))-\nabla_{\theta}\mathcal{L}(\theta_2,\omega;(x,y))\|&\leq l\|\theta_1-\theta_2\|,\quad\forall \omega, (x,y);\\
    \|\nabla_{\omega}\mathcal{L}(\theta,\omega_1;(x,y))-\nabla_{\omega}\mathcal{L}(\theta,\omega_2;(x,y))\|&\leq l\|\omega_1-\omega_2\|,\quad\forall \theta, (x,y).
    \end{aligned}
\end{equation*}
\end{assumption}
\begin{remark}
So the expectation function $\mathbb{L}$ also have the Lipschitz smooth property that:
\begin{equation*}
    \begin{aligned}
    \|\nabla_{\theta}\mathbb{L}(\theta_1,\omega)-\nabla_{\theta}\mathbb{L}(\theta_2,\omega)\|&\leq\mathbb{E}\|\nabla_{\theta}\mathcal{L}(\theta_1,\omega;(x,y))-\nabla_{\theta}\mathcal{L}(\theta_2,\omega;(x,y))\|\leq l\|\theta_1-\theta_2\|,\quad\forall \omega;\\
    \|\nabla_{\omega}\mathbb{L}(\theta,\omega_1)-\nabla_{\omega}\mathbb{L}(\theta,\omega_2)\|&\leq \mathbb{E}\|\nabla_{\omega}\mathcal{L}(\theta,\omega_1;(x,y))-\nabla_{\omega}\mathcal{L}(\theta,\omega_2;(x,y))\|\leq l\|\omega_1-\omega_2\|,\quad\forall \theta.
    \end{aligned}
\end{equation*}
\end{remark}

\begin{assumption}[PL condition]\label{as:pl}
The loss function $\mathbb{L}(\theta,\cdot)$ satisfies PL condition on every given $\theta$, i.e., there exists $\mu>0$ such that $\|\nabla_{\omega}\mathbb{L}(\theta,\omega)\|^2\geq2\mu[\max_{\omega}\mathbb{L}(\theta,\omega)-\mathbb{L}(\theta,\omega)], \forall \theta,\omega$.
\end{assumption}

\subsection{Useful Lemmas}
In this part, we will prove some necessary lemmas for us to prove the convergence bound. And we will give the definition of the stationary point of our problem at the beginning.

\begin{definition}[Stationary measure]
$\theta$ is defined as the $\epsilon$-stationary point of our problem if $\mathbb{E}\|\nabla\mathbb{L}^*(\theta)\|\leq\epsilon$ for any $\epsilon\geq0$.
\end{definition}
\begin{remark}
For minmax problem, there are usually two ways to measure the stationary point. The other one is measured two-side that: when $\mathbb{E}\|\nabla_{\theta}\mathbb{L}(\theta,\omega)\|\leq\epsilon$ and $\mathbb{E}\|\nabla_{\omega}\mathbb{L}(\theta,\omega)\|\leq\epsilon$, we claim $(\theta,\omega)$ is the $(\epsilon,\epsilon)$-stationary point. It has been proved in \cite{yang2022faster} that these two measures can be translated into each other when $\mathbb{L}^*$ is smooth which will be shown in Lemma \ref{le:L*smooth}. But what we compute is the model parameter $\theta$ using the algorithm SharpDRO. So we choose the measure by $\mathbb{E}\|\mathbb{L}^*(\theta)\|$ here.
\end{remark}

\begin{lemma}~\cite{nouiehed2019solving}\label{le:L*smooth}
Under Assumption \ref{as:smooth} and \ref{as:pl}, $\mathbb{L}^*(\theta)$ is $(l+\frac{l^2}{2\mu})$-Lipschitz smooth with the gradient:
$$\nabla_{\theta}\mathbb{L}^*(\theta,\omega)=\nabla_{\theta}\mathbb{L}(\theta,\omega^*(\theta)).$$
\end{lemma}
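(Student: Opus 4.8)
The plan is to prove the two assertions of Lemma~\ref{le:L*smooth} separately: first the \emph{envelope (Danskin-type) identity} $\nabla_{\theta}\mathbb{L}^*(\theta) = \nabla_{\theta}\mathbb{L}(\theta,\omega^*(\theta))$, and then the \emph{Lipschitz-smoothness} of $\nabla\mathbb{L}^*$ with constant $l+\tfrac{l^2}{2\mu}$. The only inputs I will use are Assumption~\ref{as:smooth} (the joint $l$-smoothness of $\mathbb{L}$ in $(\theta,\omega)$, together with its coordinatewise consequences recorded in the remark) and Assumption~\ref{as:pl} (the $\mu$-PL condition of $\mathbb{L}(\theta,\cdot)$), which in particular guarantees that the inner maximum defining $\mathbb{L}^*(\theta)$ is attained at some $\omega^*(\theta)$.

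For the gradient identity I would first observe that at any inner maximizer the first-order stationarity $\nabla_{\omega}\mathbb{L}(\theta,\omega^*(\theta)) = 0$ holds, so that when one differentiates the composite map $\theta \mapsto \mathbb{L}(\theta,\omega^*(\theta))$, the chain-rule contribution arising from the $\theta$-dependence of $\omega^*(\theta)$ is multiplied by this vanishing gradient and drops out. To make this rigorous without presuming differentiability of $\omega^*(\cdot)$, I would run the standard envelope argument: bound $\mathbb{L}^*(\theta') - \mathbb{L}^*(\theta)$ from above by evaluating $\mathbb{L}$ at $\omega^*(\theta')$ and from below by evaluating at $\omega^*(\theta)$, and then pass to the limit to conclude that $\mathbb{L}^*$ is differentiable with $\nabla\mathbb{L}^*(\theta) = \nabla_{\theta}\mathbb{L}(\theta,\omega^*(\theta))$, a value that is the same for every choice of maximizer.

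For the smoothness estimate I would fix $\theta_1,\theta_2$ and decompose the gradient difference as
\[
\nabla \mathbb{L}^*(\theta_1) - \nabla \mathbb{L}^*(\theta_2) = \bigl[\nabla_{\theta}\mathbb{L}(\theta_1,\omega^*(\theta_1)) - \nabla_{\theta}\mathbb{L}(\theta_2,\omega^*(\theta_1))\bigr] + \bigl[\nabla_{\theta}\mathbb{L}(\theta_2,\omega^*(\theta_1)) - \nabla_{\theta}\mathbb{L}(\theta_2,\omega^*(\theta_2))\bigr].
\]
By Assumption~\ref{as:smooth} the first bracket is at most $l\|\theta_1-\theta_2\|$ (variation in the direct $\theta$-slot) and the second is at most $l\|\omega^*(\theta_1)-\omega^*(\theta_2)\|$ (variation carried through the maximizer). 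It then remains to control the displacement of the maximizer in terms of $\|\theta_1-\theta_2\|$; this is precisely the step that consumes the PL condition, via the quadratic-growth/error-bound inequality $\mathbb{L}^*(\theta) - \mathbb{L}(\theta,\omega) \ge \tfrac{\mu}{2}\,\mathrm{dist}(\omega,\arg\max)^2$ that Assumption~\ref{as:pl} implies. Propagating the resulting Lipschitz-type bound on the (projected) maximizer map through the decomposition yields the advertised constant $l+\tfrac{l^2}{2\mu}$, following the bookkeeping of~\cite{nouiehed2019solving}.

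The step I expect to be the main obstacle is exactly this maximizer-displacement bound. Under the PL condition—rather than strong concavity—the inner maximizer need not be unique, so $\omega^*(\theta)$ is only a selection from a possibly set-valued $\arg\max$, and the naive ``Lipschitz-argmax'' argument that relies on strong monotonicity of $\nabla_{\omega}\mathbb{L}$ is unavailable. The careful route is to replace pointwise maximizer differences by distances to the maximizer set, to choose $\omega^*(\theta_2)$ as the projection of $\omega^*(\theta_1)$ onto that set, and to drive the estimate through the quadratic-growth inequality while tracking the constant tightly enough that the factor $\tfrac12$ in $\tfrac{l^2}{2\mu}$ survives. Once this displacement bound is in place, both the smoothness constant and the gradient identity follow by direct substitution, completing the proof.
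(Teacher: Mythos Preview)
Your proposal is correct and follows the standard argument from \cite{nouiehed2019solving}. Note, however, that the paper does not supply its own proof of this lemma: it is stated with a citation to \cite{nouiehed2019solving} and used as a black box. The maximizer-displacement bound you flag as the main obstacle is likewise imported without proof as the separate Lemma~\ref{le:w^*smooth} ($\|\omega^*(\theta_1)-\omega^*(\theta_2)\|\le \tfrac{l}{2\mu}\|\theta_1-\theta_2\|$), so your decomposition plus that lemma immediately yields the constant $l+\tfrac{l^2}{2\mu}$.
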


\begin{lemma}~\cite{nouiehed2019solving}\label{le:w^*smooth}
Under Assumption \ref{as:smooth} and \ref{as:pl}, $\omega^*(\cdot)$ is smooth about its variable:
$$\|\omega^*(\theta_1)-\omega^*(\theta_2)\|\leq\frac{l}{2\mu}\|\theta_1-\theta_2\|, \quad\forall \theta_1,\theta_2.$$
\end{lemma}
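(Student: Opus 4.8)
The plan is to pin down $\omega^*(\theta)$ through a first-order stationarity condition and then play the joint Lipschitz smoothness of $\nabla_\omega\mathbb{L}$ against the PL inequality. Fix $\theta_1,\theta_2$ and write $\omega_1^*=\omega^*(\theta_1)$ and $\omega_2^*=\omega^*(\theta_2)$. Since each $\omega_i^*$ globally maximizes the smooth map $\omega\mapsto\mathbb{L}(\theta_i,\omega)$, first-order optimality gives $\nabla_\omega\mathbb{L}(\theta_1,\omega_1^*)=0$ and $\nabla_\omega\mathbb{L}(\theta_2,\omega_2^*)=0$; under the PL condition of Assumption~\ref{as:pl} a vanishing gradient is equivalent to global optimality, so this stationarity is the correct handle even when the maximizer is not a priori unique.

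First I would bound the gradient of $\mathbb{L}(\theta_2,\cdot)$ evaluated at the ``wrong'' maximizer $\omega_1^*$. Using $\nabla_\omega\mathbb{L}(\theta_1,\omega_1^*)=0$ together with the $l$-Lipschitz smoothness of $\mathcal{L}$ from Assumption~\ref{as:smooth}, which upon taking expectations yields the joint smoothness of $\mathbb{L}$ and in particular the cross bound $\|\nabla_\omega\mathbb{L}(\theta_2,\omega)-\nabla_\omega\mathbb{L}(\theta_1,\omega)\|\le l\|\theta_1-\theta_2\|$, I obtain
\[
\|\nabla_\omega\mathbb{L}(\theta_2,\omega_1^*)\|=\|\nabla_\omega\mathbb{L}(\theta_2,\omega_1^*)-\nabla_\omega\mathbb{L}(\theta_1,\omega_1^*)\|\le l\,\|\theta_1-\theta_2\|.
\]
Next I would convert this gradient-norm estimate into a distance bound via the PL condition at $\theta_2$. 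Applying Assumption~\ref{as:pl} at the point $\omega_1^*$ gives $\|\nabla_\omega\mathbb{L}(\theta_2,\omega_1^*)\|^2\ge 2\mu[\mathbb{L}^*(\theta_2)-\mathbb{L}(\theta_2,\omega_1^*)]$, and since PL implies a quadratic-growth / error-bound property relating the suboptimality $\mathbb{L}^*(\theta_2)-\mathbb{L}(\theta_2,\omega_1^*)$ to $\|\omega_1^*-\omega_2^*\|^2$, chaining the two inequalities bounds $\|\omega_1^*-\omega_2^*\|$ from above by a constant multiple of $\|\nabla_\omega\mathbb{L}(\theta_2,\omega_1^*)\|$, hence by a multiple of $\|\theta_1-\theta_2\|$. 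Collecting constants delivers the claimed estimate $\|\omega^*(\theta_1)-\omega^*(\theta_2)\|\le\frac{l}{2\mu}\|\theta_1-\theta_2\|$, which is exactly the modulus needed to make Lemma~\ref{le:L*smooth} close, since $\nabla\mathbb{L}^*(\theta)=\nabla_\theta\mathbb{L}(\theta,\omega^*(\theta))$ then has smoothness constant $l\big(1+\tfrac{l}{2\mu}\big)=l+\tfrac{l^2}{2\mu}$.

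The hard part will be the conversion step and its constant: PL only controls gradient norms, so I must justify the passage from $\|\nabla_\omega\mathbb{L}\|$ to the iterate distance $\|\omega_1^*-\omega_2^*\|$ and track the numerical factor carefully so that it lands on $\frac{l}{2\mu}$ rather than a looser multiple, for instance by integrating the PL differential inequality for $\sqrt{\mathbb{L}^*(\theta_2)-\mathbb{L}(\theta_2,\cdot)}$ along a gradient-flow path to the optimal set and exploiting the $l$-smooth upper bound on the suboptimality. A secondary subtlety is that Assumption~\ref{as:smooth} is stated per variable, so I would first record that taking expectations of the sample-wise bounds supplies the joint smoothness of $\mathbb{L}$ required for the cross term above; and since PL does not force a unique maximizer, I would read $\omega^*(\theta)$ as the selected argmax and apply the quadratic-growth inequality relative to that point, which is all the chain of estimates actually needs.
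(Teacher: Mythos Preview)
The paper does not prove this lemma; it is quoted verbatim from \cite{nouiehed2019solving} with no argument supplied. Your strategy---stationarity of $\omega_i^*$, then the cross-Lipschitz bound $\|\nabla_\omega\mathbb{L}(\theta_2,\omega_1^*)-\nabla_\omega\mathbb{L}(\theta_1,\omega_1^*)\|\le l\|\theta_1-\theta_2\|$, then the PL/error-bound conversion---is precisely the standard route in that reference, so at the level of ideas you are on target.

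Two points deserve attention. First, your claim that ``taking expectations of the sample-wise bounds supplies the joint smoothness'' is not correct as written: Assumption~\ref{as:smooth} only asserts Lipschitzness of $\nabla_\theta\mathcal{L}$ in $\theta$ and of $\nabla_\omega\mathcal{L}$ in $\omega$, and averaging does not manufacture the cross bound $\|\nabla_\omega\mathbb{L}(\theta_1,\omega)-\nabla_\omega\mathbb{L}(\theta_2,\omega)\|\le l\|\theta_1-\theta_2\|$ you need. In fairness, the paper itself uses this cross-smoothness elsewhere in the appendix without stating it, so it is tacitly part of the setup; just don't pretend it follows from what is written. Second, your instinct about the constant is right: chaining PL $\Rightarrow$ quadratic growth $\Rightarrow$ error bound with the usual constants yields $\|\omega_1^*-\omega_2^*\|\le\frac{1}{\mu}\|\nabla_\omega\mathbb{L}(\theta_2,\omega_1^*)\|\le\frac{l}{\mu}\|\theta_1-\theta_2\|$, a factor of two looser than the stated $\frac{l}{2\mu}$. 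Recovering the sharper constant requires the more careful path-length argument in the cited reference (essentially the gradient-flow integration you sketch), not the naive chain; you have correctly identified this as the place where the work lies.
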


\begin{lemma}\label{eq:gradientbound}
We give an estimation that $\mathbb{E}\|g_{\theta}(\theta_{t+1/2},\omega_t)\|^2\leq(4\rho^2l^2+2\rho l+2)\mathbb{E}\|\nabla_{\theta}\mathbb{L}(\theta_t,\omega_t)\|^2+(5\rho^2l^2+2)\frac{\sigma^2}{M}$.
\end{lemma}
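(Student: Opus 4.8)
\textbf{Proof plan for Lemma~\ref{eq:gradientbound}.}
The plan is to control $\mathbb{E}\|g_{\theta}(\theta_{t+1/2},\omega_t)\|^2$ by peeling off one layer at a time, moving from the stochastic ascent-perturbed point $\theta_{t+1/2}=\theta_t+\rho g_{\theta}(\theta_t,\omega_t)$ back to the true gradient at $\theta_t$. First I would add and subtract the true gradient at the perturbed point, using $\|a+b\|^2\le 2\|a\|^2+2\|b\|^2$, to split into the variance term $\mathbb{E}\|g_{\theta}(\theta_{t+1/2},\omega_t)-\nabla_{\theta}\mathbb{L}(\theta_{t+1/2},\omega_t)\|^2$, which is bounded by $\sigma^2/M$ via the Remark after Assumption~\ref{as:bv}, and the deterministic term $\mathbb{E}\|\nabla_{\theta}\mathbb{L}(\theta_{t+1/2},\omega_t)\|^2$.

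Next I would handle the deterministic term by again adding and subtracting $\nabla_{\theta}\mathbb{L}(\theta_t,\omega_t)$ and applying $\|a+b\|^2\le 2\|a\|^2+2\|b\|^2$. The cross-displacement piece $\|\nabla_{\theta}\mathbb{L}(\theta_{t+1/2},\omega_t)-\nabla_{\theta}\mathbb{L}(\theta_t,\omega_t)\|^2$ is controlled by Lipschitz smoothness of $\mathbb{L}$ (the Remark after Assumption~\ref{as:smooth}): it is at most $l^2\|\theta_{t+1/2}-\theta_t\|^2=l^2\rho^2\|g_{\theta}(\theta_t,\omega_t)\|^2$. So the remaining work is to bound $\mathbb{E}\|g_{\theta}(\theta_t,\omega_t)\|^2$, which by one more add-subtract-the-true-gradient step is $\le 2\mathbb{E}\|\nabla_{\theta}\mathbb{L}(\theta_t,\omega_t)\|^2 + 2\sigma^2/M$. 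Substituting back and collecting constants should yield a bound of the form $(4\rho^2l^2+c_1)\,\mathbb{E}\|\nabla_{\theta}\mathbb{L}(\theta_t,\omega_t)\|^2+(\alpha\rho^2l^2+c_2)\frac{\sigma^2}{M}$; I expect the stated coefficients $(4\rho^2l^2+2\rho l+2)$ and $(5\rho^2l^2+2)$ to emerge, with the $2\rho l$ cross term coming from not fully squaring the displacement (i.e.\ keeping a $2\rho l$-weighted bound rather than $4\rho^2 l^2$ in one of the splits, via Young's inequality $2\rho l\,ab\le \rho l(a^2+b^2)$ or a similar weighted AM-GM step).

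The main obstacle, such as it is, is bookkeeping: the constants depend sensitively on \emph{how} the repeated $\|a+b\|^2$ splits are weighted (plain factor-$2$ versus a tuned Young's inequality), and getting exactly $2\rho l$ rather than an extra $\rho^2 l^2$ requires choosing the weighting judiciously at one of the steps. There is also a subtlety about whether the expectation is conditional on the randomness up to iteration $t$ (so that $\theta_t,\omega_t$ are treated as fixed and the variance bounds apply cleanly) versus the full expectation; I would state it as conditional on the $\sigma$-algebra generated by the first $t$ iterations and then note that taking total expectation preserves the inequality. No new ideas beyond triangle/Young inequalities and the two structural assumptions are needed, so the proof is essentially a careful two-to-three level unrolling.
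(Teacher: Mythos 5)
Your toolbox is the right one (Lipschitz smoothness of $\mathbb{L}$, the $\sigma^2/M$ variance bound, and unrolling through the perturbed point $\theta_{t+1/2}$ back to $\nabla_{\theta}\mathbb{L}(\theta_t,\omega_t)$), and the paper's proof uses exactly these ingredients. But there is a concrete gap in where the $2\rho l$ term is supposed to come from. Your very first move — splitting $\mathbb{E}\|g_{\theta}(\theta_{t+1/2},\omega_t)\|^2$ with the lossy bound $\|a+b\|^2\le 2\|a\|^2+2\|b\|^2$ — discards the cross term, and after that every remaining quantity is a squared norm of a gradient or gradient difference. Lipschitz smoothness applied to squared norms only ever produces factors of $\rho^2 l^2$ (since $\|\theta_{t+1/2}-\theta_t\|=\rho\|g_{\theta}(\theta_t,\omega_t)\|$), so no manipulation downstream of that split can generate a coefficient \emph{linear} in $\rho l$ multiplying $\mathbb{E}\|\nabla_{\theta}\mathbb{L}(\theta_t,\omega_t)\|^2$. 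Carried out with plain factor-$2$ splits, your plan yields $(8\rho^2l^2+4)\,\mathbb{E}\|\nabla_{\theta}\mathbb{L}(\theta_t,\omega_t)\|^2+(8\rho^2l^2+2)\frac{\sigma^2}{M}$, which is strictly weaker than the claimed bound for all $\rho,l>0$ (the gap $4\rho^2l^2-2\rho l+2$ has negative discriminant), so it does not prove the lemma as stated.

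The paper's proof instead opens with the \emph{exact} polarization identity
$\mathbb{E}\|g_{\theta}(\theta_{t+1/2},\omega_t)\|^2=-\mathbb{E}\|\nabla_{\theta}\mathbb{L}(\theta_t,\omega_t)\|^2+\mathbb{E}\|g_{\theta}(\theta_{t+1/2},\omega_t)-\nabla_{\theta}\mathbb{L}(\theta_t,\omega_t)\|^2+2\mathbb{E}\langle g_{\theta}(\theta_{t+1/2},\omega_t),\nabla_{\theta}\mathbb{L}(\theta_t,\omega_t)\rangle$,
and the retained inner product is precisely where the linear term is born: after reducing to $\langle\nabla_{\theta}\mathbb{L}(\theta_t+\rho\nabla_{\theta}\mathbb{L}(\theta_t,\omega_t),\omega_t)-\nabla_{\theta}\mathbb{L}(\theta_t,\omega_t),\,\nabla_{\theta}\mathbb{L}(\theta_t,\omega_t)\rangle$, Cauchy--Schwarz plus Lipschitz smoothness of the gradient gives the bound $\rho l\,\|\nabla_{\theta}\mathbb{L}(\theta_t,\omega_t)\|^2$, and the leading $-\mathbb{E}\|\nabla_{\theta}\mathbb{L}(\theta_t,\omega_t)\|^2$ from the identity is what brings the constant down from $4$ to $2$. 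Your second and third steps (Lipschitz displacement bound and $\mathbb{E}\|g_{\theta}(\theta_t,\omega_t)\|^2\le 2\mathbb{E}\|\nabla_{\theta}\mathbb{L}(\theta_t,\omega_t)\|^2+2\sigma^2/M$) match the paper's treatment of the difference term exactly, and your remark about conditioning on the filtration up to $t$ is fine. I would add that the weaker constants your route actually delivers would still suffice for the downstream convergence theorem after adjusting numerical factors, so this is a constants-level gap rather than a conceptual one --- but to obtain the lemma as stated you must keep the cross term rather than splitting it away.
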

\begin{proof}
\begin{equation}
    \begin{aligned}
    \mathbb{E}\|g_{\theta}(\theta_{t+1/2},\omega_t)\|^2=-\mathbb{E}\|\nabla_{\theta}\mathbb{L}(\theta_t,\omega_t)\|^2+\mathbb{E}\|g_{\theta}(\theta_{t+1/2},\omega_t)-\nabla_{\theta}\mathbb{L}(\theta_t,\omega_t)\|^2+2\mathbb{E}\langle g_{\theta}(\theta_{t+1/2},\omega_t),\nabla_{\theta}\mathbb{L}(\theta_t,\omega_t)\rangle.
    \end{aligned}
\end{equation}

For the cross-product term, we divide it as follows:
\begin{equation}
    \begin{aligned}
    &\quad\mathbb{E}\langle g_{\theta}(\theta_{t+1/2},\omega_t),\nabla_{\theta}\mathbb{L}(\theta_t,\omega_t)\rangle\\
    &=\mathbb{E}\langle g_{\theta}(\theta_{t+1/2},\omega_t)-g_{\theta}(\theta_t+\rho\nabla_{\theta}\mathbb{L}(\theta_t,\omega_t),\omega_t),\nabla_{\theta}\mathbb{L}(\theta_t,\omega_t)\rangle+\mathbb{E}\langle g_{\theta}(\theta_t+\rho\nabla_{\theta}\mathbb{L}(\theta_t,\omega_t),\omega_t),\nabla_{\theta}\mathbb{L}(\theta_t,\omega_t)\rangle\\
    &=\mathbb{E}\langle \nabla_{\theta}\mathbb{L}(\theta_{t+1/2},\omega_t)-\nabla_{\theta}\mathbb{L}(\theta_t+\rho\nabla_{\theta}\mathbb{L}(\theta_t,\omega_t),\omega_t),\nabla_{\theta}\mathbb{L}(\theta_t,\omega_t)\rangle+\mathbb{E}\langle\nabla_{\theta}\mathbb{L}(\theta_t+\rho\nabla_{\theta}\mathbb{L}(\theta_t,\omega_t),\omega_t),\nabla_{\theta}\mathbb{L}(\theta_t,\omega_t)\rangle\\
    &\overset{(i)}{\leq}\frac{1}{2}\mathbb{E}\|\nabla_{\theta}\mathbb{L}(\theta_{t+1/2},\omega_t)-\nabla_{\theta}\mathbb{L}(\theta_t+\rho\nabla_{\theta}\mathbb{L}(\theta_t,\omega_t),\omega_t)\|^2+\frac{1}{2}\mathbb{E}\|\nabla_{\theta}\mathbb{L}(\theta_t,\omega_t)\|^2+\mathbb{E}\|\nabla_{\theta}\mathbb{L}(\theta_t,\omega_t)\|^2\\
    &\quad+\mathbb{E}\langle \nabla_{\theta}\mathbb{L}(\theta_t+\rho\nabla_{\theta}\mathbb{L}(\theta_t,\omega_t),\omega_t)-\nabla_{\theta}\mathbb{L}(\theta_t,\omega_t),\nabla_{\theta}\mathbb{L}(\theta_t,\omega_t)\rangle\\
    &\overset{(ii)}{\leq}\frac{\rho^2l^2}{2}\mathbb{E}\|g_{\theta}(\theta_t,\omega_t)-\nabla_{\theta}\mathbb{L}(\theta_t,\omega_t)\|^2+\frac{3}{2}\mathbb{E}\|\nabla_{\theta}\mathbb{L}(\theta_t,\omega_t)\|^2 +\rho l\mathbb{E}\|\nabla_{\theta}\mathbb{L}(\theta_t,\omega_t)\|^2\\
    &\overset{(iii)}{\leq}(\rho l+\frac{3}{2})\mathbb{E}\|\nabla_{\theta}\mathbb{L}(\theta_t,\omega_t)\|^2+\frac{\rho^2l^2\sigma^2}{2M},
    \end{aligned}
\end{equation}
where the inequality $(i)$ is due to the Cauchy-Schwarz inequality; the inequality $(ii)$ is because of the Lipschitz-smoothness of $\mathbb{L}$ that $\mathbb{E}\|\nabla_{\theta}\mathbb{L}(\theta_{t+1/2},\omega_t)-\nabla_{\theta}\mathbb{L}(\theta_t+\rho\nabla_{\theta}\mathbb{L}(\theta_t,\omega_t),\omega_t)\|^2\leq l^2\mathbb{E}\|\theta_{t+1/2}-\theta_t-\rho\nabla_{\theta}\mathbb{L}(\theta_t,\omega_t)\|^2$ and the property of Lipschitz-smoothness that $\langle \nabla_{\theta}\mathbb{L}(\theta_t+\rho\nabla_{\theta}\mathbb{L}(\theta_t,\omega_t),\omega_t)-\nabla_{\theta}\mathbb{L}(\theta_t,\omega_t),\nabla_{\theta}\mathbb{L}(\theta_t,\omega_t)\rangle=\frac{1}{\rho}\langle \nabla_{\theta}\mathbb{L}(\theta_t+\rho\nabla_{\theta}\mathbb{L}(\theta_t,\omega_t),\omega_t)-\nabla_{\theta}\mathbb{L}(\theta_t,\omega_t),\rho\nabla_{\theta}\mathbb{L}(\theta_t,\omega_t)\rangle\leq\frac{l}{\rho}\|\rho\nabla_{\theta}\mathbb{L}(\theta_t,\omega_t)\|^2$; and the inequality $(iii)$ makes use of the Assumption \ref{as:bv}.

As for the second term, we have:
\begin{equation}
    \begin{aligned}
    &\quad\mathbb{E}\|g_{\theta}(\theta_{t+1/2},\omega_t)-\nabla_{\theta}\mathbb{L}(\theta_t,\omega_t)\|^2\\
    &\leq2\mathbb{E}\|g_{\theta}(\theta_{t+1/2},\omega_t)-\nabla_{\theta}\mathbb{L}(\theta_{t+1/2},\omega_t)\|^2+2\mathbb{E}\|\nabla_{\theta}\mathbb{L}(\theta_{t+1/2},\omega_t)-\nabla_{\theta}\mathbb{L}(\theta_t,\omega_t)\|^2\\
    &\leq\frac{2\sigma^2}{M}+2l^2\mathbb{E}\|\theta_{t+1/2}-\theta_t\|^2\\
    &=\frac{2\sigma^2}{M}+2\rho^2l^2\mathbb{E}\|g_{\theta}(\theta_t,\omega_t)\|^2\\
    &\leq2\frac{\sigma^2}{M}(2\rho^2l^2+1)+4\rho^2l^2\mathbb{E}\|\nabla_{\theta}\mathbb{L}(\theta_t,\omega_t)\|^2,
    \end{aligned}
\end{equation}
where the last inequality comes from the fact that: $\mathbb{E}\|g_{\theta}(\theta_t,\omega_t)\|^2\leq2\mathbb{E}\|g_{\theta}(\theta_t,\omega_t)-\nabla_{\theta}\mathbb{L}(\theta_t,\omega_t)\|^2+2\mathbb{E}\|\nabla_{\theta}\mathbb{L}(\theta_t,\omega_t)\|^2$

By combining the above inequalities, we can get:
\begin{equation}
    \mathbb{E}\|g_{\theta}(\theta_{t+1/2},\omega_t)\|^2\leq(4\rho^2l^2+2\rho l+2)\mathbb{E}\|\nabla_{\theta}\mathbb{L}(\theta_t,\omega_t)\|^2+(5\rho^2l^2+2)\frac{\sigma^2}{M}.
\end{equation}
\end{proof}

\begin{lemma}\label{le:L^*gradient}
For the descending relationship of the function $\mathbb{L}^*$, we have:
\begin{equation*}
    \begin{aligned}
    \mathbb{E}[\mathbb{L}^*(\theta_{t+1})]
    &\leq\mathbb{E}[\mathbb{L}^*(\theta_t)]-\frac{\eta_{\theta}}{2}(1-5\rho l-2L\eta_{\theta}(4\rho^2l^2+2\rho l+2))\mathbb{E}\|\nabla\mathbb{L}^*(\theta_t)\|^2\\
    &\quad+[\frac{\eta_{\theta}}{2}(1+\frac{1}{2}\rho l)+L\eta_{\theta}^2(4\rho^2 l^2+2\rho l+2)]\mathbb{E}\|\nabla\mathbb{L}^*(\theta_t)-\nabla_{\theta}\mathbb{L}(\theta_t,\omega_t)\|^2+(5\rho^2l^2+2)\frac{L\eta_{\theta}^2\sigma^2}{2M},
    \end{aligned}
\end{equation*}
where we use the brief notation that $L=l+\frac{l\kappa}{2}$.
\end{lemma}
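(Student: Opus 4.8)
The plan is to read the claim as a one-step descent inequality for the merit function $\mathbb{L}^*$ and to derive it from the descent lemma for smooth functions. First I would invoke Lemma~\ref{le:L*smooth}, which tells us two things I will use repeatedly: that $\mathbb{L}^*$ is smooth with constant exactly $L=l+\tfrac{l^2}{2\mu}=l+\tfrac{l\kappa}{2}$ (this is the origin of the notation $L$ in the statement), and that $\nabla\mathbb{L}^*(\theta_t)=\nabla_\theta\mathbb{L}(\theta_t,\omega^*(\theta_t))$. Applying the quadratic upper bound of an $L$-smooth function to the update $\theta_{t+1}=\theta_t-\eta_\theta g_\theta(\theta_{t+1/2},\omega_t)$ gives
\begin{equation*}
\mathbb{L}^*(\theta_{t+1})\le \mathbb{L}^*(\theta_t)-\eta_\theta\langle\nabla\mathbb{L}^*(\theta_t),g_\theta(\theta_{t+1/2},\omega_t)\rangle+\tfrac{L\eta_\theta^2}{2}\|g_\theta(\theta_{t+1/2},\omega_t)\|^2 .
\end{equation*}
I would then treat the linear (cross) term and the quadratic term separately and recombine their coefficients.

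The quadratic term is routine. I would bound $\tfrac{L\eta_\theta^2}{2}\,\mathbb{E}\|g_\theta(\theta_{t+1/2},\omega_t)\|^2$ directly by Lemma~\ref{eq:gradientbound}, yielding $\tfrac{L\eta_\theta^2}{2}\big[(4\rho^2l^2+2\rho l+2)\,\mathbb{E}\|\nabla_\theta\mathbb{L}(\theta_t,\omega_t)\|^2+(5\rho^2l^2+2)\tfrac{\sigma^2}{M}\big]$. The variance piece already reproduces the last term $(5\rho^2l^2+2)\tfrac{L\eta_\theta^2\sigma^2}{2M}$ of the claim verbatim, and the elementary split $\|\nabla_\theta\mathbb{L}(\theta_t,\omega_t)\|^2\le 2\|\nabla\mathbb{L}^*(\theta_t)\|^2+2\|\nabla\mathbb{L}^*(\theta_t)-\nabla_\theta\mathbb{L}(\theta_t,\omega_t)\|^2$ distributes the remaining contribution as the $L\eta_\theta^2(4\rho^2l^2+2\rho l+2)$ coefficients attached to both the signal term $\|\nabla\mathbb{L}^*(\theta_t)\|^2$ and the inexactness term $\|\nabla\mathbb{L}^*(\theta_t)-\nabla_\theta\mathbb{L}(\theta_t,\omega_t)\|^2$.

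The cross term is the crux. I would first condition on the state $(\theta_t,\omega_t)$ and use unbiasedness (Assumption~\ref{as:bv}) to pass from $g_\theta(\theta_{t+1/2},\omega_t)$ to $\nabla_\theta\mathbb{L}(\theta_{t+1/2},\omega_t)$, then write $\nabla_\theta\mathbb{L}(\theta_{t+1/2},\omega_t)=\nabla_\theta\mathbb{L}(\theta_t,\omega_t)+\big[\nabla_\theta\mathbb{L}(\theta_{t+1/2},\omega_t)-\nabla_\theta\mathbb{L}(\theta_t,\omega_t)\big]$. To the inner product against the first summand I would apply the polarization identity $-\langle a,b\rangle=\tfrac12\|a-b\|^2-\tfrac12\|a\|^2-\tfrac12\|b\|^2$ with $a=\nabla\mathbb{L}^*(\theta_t)$ and $b=\nabla_\theta\mathbb{L}(\theta_t,\omega_t)$; this produces the leading $-\tfrac{\eta_\theta}{2}\|\nabla\mathbb{L}^*(\theta_t)\|^2$, the full "$1$" inside the error coefficient $\tfrac{\eta_\theta}{2}(1+\tfrac12\rho l)$, and a spare negative term $-\tfrac{\eta_\theta}{2}\|\nabla_\theta\mathbb{L}(\theta_t,\omega_t)\|^2$ that I can discard. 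For the perturbation remainder I would control $\|\nabla_\theta\mathbb{L}(\theta_{t+1/2},\omega_t)-\nabla_\theta\mathbb{L}(\theta_t,\omega_t)\|\le l\rho\|\cdot\|$ via $l$-smoothness (Assumption~\ref{as:smooth}), apply Cauchy--Schwarz and a weighted Young's inequality, and then re-expand $\nabla_\theta\mathbb{L}(\theta_t,\omega_t)=\nabla\mathbb{L}^*(\theta_t)-(\nabla\mathbb{L}^*(\theta_t)-\nabla_\theta\mathbb{L}(\theta_t,\omega_t))$ so that the bound lands on both $\|\nabla\mathbb{L}^*(\theta_t)\|^2$ and the error term; choosing the Young weight so these accumulate to the extra $\tfrac{5\rho l}{2}$ on the signal and $\tfrac{\rho l}{4}$ on the error completes the stated coefficients $-\tfrac{\eta_\theta}{2}(1-5\rho l-\dots)$ and $\tfrac{\eta_\theta}{2}(1+\tfrac12\rho l)+\dots$.

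The main obstacle I anticipate is precisely this cross-term bookkeeping, and in particular keeping the noise out of the cross term. A naive bound $\|\nabla_\theta\mathbb{L}(\theta_{t+1/2},\omega_t)-\nabla_\theta\mathbb{L}(\theta_t,\omega_t)\|\le l\rho\|g_\theta(\theta_t,\omega_t)\|$ would leak an $\mathcal{O}(\eta_\theta\rho l\,\sigma^2/M)$ contribution, which does not appear in the claim; to avoid it I would reuse the device already employed inside the proof of Lemma~\ref{eq:gradientbound}, namely comparing the actual perturbation to the one along the true gradient $\theta_t+\rho\nabla_\theta\mathbb{L}(\theta_t,\omega_t)$ so that, after conditioning, the perturbation direction reduces to $\nabla_\theta\mathbb{L}(\theta_t,\omega_t)$ and the $\sigma^2$ stays confined to the quadratic term. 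Once all pieces are collected and a final total expectation is taken, grouping the coefficients of $\|\nabla\mathbb{L}^*(\theta_t)\|^2$, of $\|\nabla\mathbb{L}^*(\theta_t)-\nabla_\theta\mathbb{L}(\theta_t,\omega_t)\|^2$, and of $\sigma^2/M$ reproduces the three terms of the lemma, with $L=l+\tfrac{l\kappa}{2}$ throughout.
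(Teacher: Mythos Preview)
Your plan coincides with the paper's proof: descent lemma for the $L$-smooth $\mathbb{L}^*$ via Lemma~\ref{le:L*smooth}, conditional expectation plus unbiasedness to replace $g_\theta(\theta_{t+1/2},\omega_t)$ by $\nabla_\theta\mathbb{L}(\theta_{t+1/2},\omega_t)$ in the cross term, Lemma~\ref{eq:gradientbound} for the quadratic term, and the final split $\|\nabla_\theta\mathbb{L}(\theta_t,\omega_t)\|^2\le 2\|\nabla\mathbb{L}^*(\theta_t)\|^2+2\|\nabla\mathbb{L}^*(\theta_t)-\nabla_\theta\mathbb{L}(\theta_t,\omega_t)\|^2$. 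Your polarization identity for $-\langle\nabla\mathbb{L}^*,\nabla_\theta\mathbb{L}(\theta_t,\omega_t)\rangle$ is equivalent to the paper's add-and-subtract of $\nabla\mathbb{L}^*$ followed by Young's inequality, and lands on the same $\tfrac12$ coefficients.

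The only divergence is your handling of the perturbation remainder. The paper does \emph{not} use the device you propose; it takes exactly the ``naive'' route you flag, bounding $\|\nabla_\theta\mathbb{L}(\theta_{t+1/2},\omega_t)-\nabla_\theta\mathbb{L}(\theta_t,\omega_t)\|\le l\rho\|g_\theta(\theta_t,\omega_t)\|$ and then splitting $\|g_\theta\|\le\|\nabla_\theta\mathbb{L}\|+\|g_\theta-\nabla_\theta\mathbb{L}\|$. Its intermediate cross-term bound therefore \emph{does} carry the stray $-\tfrac{\rho l\sigma^2}{2M}$ you anticipate, which becomes $+\tfrac{\eta_\theta\rho l\sigma^2}{2M}$ after the sign flip; this contribution is then silently dropped when the pieces are assembled into the lemma's display. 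Your instinct is correct that something is being swept under the rug, but your proposed fix does not eliminate it either: the difference $\theta_{t+1/2}-(\theta_t+\rho\nabla_\theta\mathbb{L})=\rho(g_\theta-\nabla_\theta\mathbb{L})$ is still random given $(\theta_t,\omega_t)$, and after Cauchy--Schwarz and Young you recover a term of the same order. Under the later parameter choice $\rho\le\eta_\theta/(2l)$ this stray $\eta_\theta\rho l\sigma^2/(2M)$ is dominated by the retained $L\eta_\theta^2\sigma^2/M$ term, so the omission is harmless for the eventual rate, but you should be aware that neither the paper's argument nor yours reproduces the variance term of the stated inequality exactly.
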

\begin{proof}
Since $\mathbb{L}^*(\theta)$ is $(l+\frac{l\kappa}{2})$-Lipschitz smooth according to Lemma \ref{le:L*smooth}, we have:
\begin{equation}\label{eq:phi_smooth}
    \begin{aligned}
    \mathbb{L}^*(\theta_{t+1})&\leq \mathbb{L}^*(\theta_t)+\langle\nabla\mathbb{L}^*(\theta_t),\theta_{t+1}-\theta_t\rangle+\frac{1}{2}(l+\frac{l\kappa}{2})\|\theta_{t+1}-\theta_t\|^2\\
    &=\mathbb{L}^*(\theta_t)-\eta_{\theta}\langle\nabla\mathbb{L}^*(\theta_t),g_{\theta}(\theta_{t+1/2},\omega_t)\rangle+\frac{1}{2}(l+\frac{l\kappa}{2})\eta_{\theta}^2\|g_{\theta}(\theta_{t+1/2},\omega_t)\|^2.
    \end{aligned}
\end{equation}

Taking expectation conditioned on $(\theta_t,\omega_t)$ and we get:
\begin{equation}
    \mathbb{E}[\mathbb{L}^*(\theta_{t+1})|\theta_t,\omega_t]\leq\mathbb{L}^*(\theta_t)-\eta_{\theta}\langle\nabla\mathbb{L}^*(\theta_t),\nabla_{\theta}\mathbb{L}(\theta_{t+1/2},\omega_t)\rangle+\frac{1}{2}(l+\frac{l\kappa}{2})\eta_{\theta}^2\mathbb{E}[\|g_{\theta}(\theta_{t+1/2},\omega_t)\|^2|\theta_t,\omega_t].
\end{equation} 

We again take expectation on both side on above inequality so we have:
\begin{equation}\label{eq:stophi_smooth}
    \mathbb{E}[\mathbb{L}^*(\theta_{t+1})]\leq\mathbb{E}[\mathbb{L}^*(\theta_t)]-\eta_{\theta}\mathbb{E}\langle\nabla\mathbb{L}^*(\theta_t),\nabla_{\theta}\mathbb{L}(\theta_{t+1/2},\omega_t)\rangle+\frac{1}{2}(l+\frac{l\kappa}{2})\eta_{\theta}^2\mathbb{E}\|g_{\theta}(\theta_{t+1/2},\omega_t)\|^2.
\end{equation}

For the second term, we decompose it as follows:
\begin{equation}\label{eq:phi_second}
    \begin{aligned}
    &\quad\; \, \mathbb{E}\langle\nabla\mathbb{L}^*(\theta_t),\nabla_{\theta}\mathbb{L}(\theta_{t+1/2},\omega_t)\rangle\\
    &=\mathbb{E}\langle\nabla\mathbb{L}^*(\theta_t),\nabla_{\theta}\mathbb{L}(\theta_t,\omega_t)+\nabla_{\theta}\mathbb{L}(\theta_{t+1/2},\omega_t)-\nabla_{\theta}\mathbb{L}(\theta_t,\omega_t)\rangle\\
    &\geq \mathbb{E}\langle\nabla\mathbb{L}^*(\theta_t),\nabla_{\theta}\mathbb{L}(\theta_t,\omega_t)\rangle-\mathbb{E}\|\nabla\mathbb{L}^*(\theta_t)\|\|\nabla_{\theta}\mathbb{L}(\theta_{t+1/2},\omega_t)-\nabla_{\theta}\mathbb{L}(\theta_t,\omega_t)\|\\
    &\geq \mathbb{E}\langle\nabla\mathbb{L}^*(\theta_t),\nabla_{\theta}\mathbb{L}(\theta_t,\omega_t)\rangle-\rho l\mathbb{E}\|\nabla\mathbb{L}^*(\theta_t)\|\|g_{\theta}(\theta_t,\omega_y)\|\\
    &\geq\mathbb{E}\langle\nabla\mathbb{L}^*(\theta_t),\nabla\mathbb{L}^*(\theta_t)+\nabla_{\theta}\mathbb{L}(\theta_t,\omega_t)-\nabla\mathbb{L}^*(\theta_t)\rangle-\rho l\mathbb{E}\|\nabla\mathbb{L}^*(\theta_t)\|(\|\nabla_{\theta}\mathbb{L}(\theta_t,\omega_t)\|+\|g_{\theta}(\theta_t,\omega_t)-\nabla_{\theta}\mathbb{L}(\theta_t,\omega_t)\|)\\
    &\geq\mathbb{E}\|\nabla\mathbb{L}^*(\theta_t)\|^2-\frac{1}{2}\mathbb{E}\|\nabla\mathbb{L}^*(\theta_t)\|^2-\frac{1}{2}\mathbb{E}\|\nabla_{\theta}\mathbb{L}(\theta_t,\omega_t)-\nabla\mathbb{L}^*(\theta_t)\|^2-\rho l\mathbb{E}\|\nabla\mathbb{L}^*(\theta_t)\|\|\nabla_{\theta}\mathbb{L}(\theta_t,\omega_t)\|\\
    &\quad -\frac{1}{2}\rho l\mathbb{E}\|\nabla\mathbb{L}^*(\theta_t)\|^2-\frac{1}{2}\rho l\mathbb{E}\|g_{\theta}(\theta_t,\omega_t)-\nabla_{\theta}\mathbb{L}(\theta_t,\omega_t)\|^2\\
    &\geq\frac{1-\rho l}{2}\mathbb{E}\|\nabla\mathbb{L}^*(\theta_t)\|^2-\frac{1}{2}\mathbb{E}\|\nabla_{\theta}\mathbb{L}(\theta_t,\omega_t)-\nabla\mathbb{L}^*(\theta_t)\|^2-\rho l\mathbb{E}\|\nabla\mathbb{L}^*(\theta_t)\|\|\nabla_{\theta}\mathbb{L}(\theta_t,\omega_t)\|-\frac{\rho l\sigma^2}{2M}.
    \end{aligned}
\end{equation}

We continue estimating the last term in above inequality \ref{eq:phi_second}
\begin{equation}\label{eq:phi_second_last}
    \begin{aligned}
    &\quad\;\,\mathbb{E}\|\nabla\mathbb{L}^*(\theta_t)\|\|\nabla_{\theta}\mathbb{L}(\theta_t,\omega_t)\|\\
    &=\mathbb{E}\|\nabla\mathbb{L}^*(\theta_t)\|\|\nabla_{\theta}\mathbb{L}(\theta_t,\omega_t)-\nabla\mathbb{L}^*(\theta_t)+\nabla\mathbb{L}^*(\theta_t)\|\\
    &\leq\mathbb{E}\|\nabla\mathbb{L}^*(\theta_t)\|^2+\mathbb{E}\|\nabla\mathbb{L}^*(\theta_t)\|\|\nabla_{\theta}\mathbb{L}(\theta_t,\omega_t)-\nabla\mathbb{L}^*(\theta_t)\|\\
    &\overset{(i)}{\leq}\mathbb{E}\|\nabla\mathbb{L}^*(\theta_t)\|^2+\mathbb{E}\|\nabla\mathbb{L}^*(\theta_t)\|^2+\frac{1}{4}\mathbb{E}\|\nabla_\theta\mathbb{L}(\theta_t,\omega_t)-\nabla\mathbb{L}^*(\theta_t)\|^2,
    \end{aligned}
\end{equation}
where the last inequality $(i)$ is due to Young's inequality.

By combining inequality \ref{eq:stophi_smooth} with \ref{eq:phi_second_last}, we can get:
\begin{equation}\label{eq:phi_second_final}
    \begin{aligned}
    &\quad\; \, \mathbb{E}\langle\nabla\mathbb{L}^*(\theta_t),\nabla_{\theta}\mathbb{L}(\theta_{t+1/2},\omega_t)\rangle\\
    &\geq \frac{1-\rho l}{2}\mathbb{E}\|\nabla\mathbb{L}^*(\theta_t)\|^2-\frac{1}{2}\mathbb{E}\|\nabla_{\theta}\mathbb{L}(\theta_t,\omega_t)-\nabla\mathbb{L}^*(\theta_t)\|^2-2\rho l\mathbb{E}\|\nabla\mathbb{L}^*(\theta_t)\|^2-\frac{\rho l}{4}\mathbb{E}\|\nabla_{\theta}\mathbb{L}(\theta_t,\omega_t)-\nabla\mathbb{L}^*(\theta_t)\|^2-\frac{\rho l\sigma^2}{2M}\\
    &=\frac{1}{2}(1-5\rho l)\mathbb{E}\|\nabla\mathbb{L}^*(\theta_t)\|^2-\frac{1}{2}(1+\frac{1}{2}\rho l)\mathbb{E}\|\nabla\mathbb{L}^*(\theta_t)-\nabla_{\theta}\mathbb{L}(\theta_t,\omega_t)\|^2-\frac{\rho l\sigma^2}{2M}.
    \end{aligned}
\end{equation}

Finally, we combine inequality \ref{eq:stophi_smooth} with Lemma \ref{eq:gradientbound} and inequality \ref{eq:phi_second_final}:
\begin{equation}
    \begin{aligned}
    &\quad\;\,\mathbb{E}[\mathbb{L}^*(\theta_{t+1})]\\
    &\leq\mathbb{E}[\mathbb{L}^*(\theta_t)]-\frac{\eta_{\theta}}{2}(1-5\rho l)\mathbb{E}\|\nabla\mathbb{L}^*(\theta_t)\|^2+\frac{\eta_{\theta}}{2}(1+\frac{1}{2}\rho l)\mathbb{E}\|\nabla\mathbb{L}^*(\theta_t)-\nabla_{\theta}\mathbb{L}(\theta_t,\omega_t)\|^2\\
    &\quad+\frac{1}{2}(l+\frac{l\kappa}{2})\eta_{\theta}^2((4\rho^2l^2+2\rho l+2)\mathbb{E}\|\nabla_{\theta}\mathbb{L}(\theta_t,\omega_t)\|^2+(5\rho^2l^2+2)\frac{\sigma^2}{M})\\
    &\overset{(i)}{\leq}\mathbb{E}[\mathbb{L}^*(\theta_t)]-\frac{\eta_{\theta}}{2}(1-5\rho l-\eta_{\theta}(2l+l\kappa)(4\rho^2l^2+2\rho l+2))\mathbb{E}\|\nabla\mathbb{L}^*(\theta_t)\|^2\\
    &\quad+[\frac{\eta_{\theta}}{2}(1+\frac{1}{2}\rho l)+\eta_{\theta}^2(l+\frac{l\kappa}{2})(4\rho^2 l^2+2\rho l+2)]\mathbb{E}\|\nabla\mathbb{L}^*(\theta_t)-\nabla_{\theta}\mathbb{L}(\theta_t,\omega_t)\|^2+\frac{1}{2}(l+\frac{l\kappa}{2})(5\rho^2l^2+2)\frac{\eta_{\theta}^2\sigma^2}{M},
    \end{aligned}
\end{equation}
where the last inequality $(i)$ uses the Cauchy-Schwarz inequality that $\|\nabla_{\theta}\mathbb{L}(\theta_t,\omega_t)\|^2\leq2\|\nabla\mathbb{L}^*(\theta_t)\|^2+2\|\nabla_{\theta}\mathbb{L}(\theta_t,\omega_t)-\nabla\mathbb{L}^*(\theta_t)\|^2$.
\end{proof}

\subsection{Theorem}
\begin{theorem}\label{le:pl:phi}
Under Assumption \ref{as:bv},\ref{as:smooth},\ref{as:pl}, and the learning rate satisfy that $\eta_{\theta}\leq\min\{\frac{1}{128\kappa^2l},\sqrt{\frac{M(\mathbb{E}[\mathbb{L}^*(\theta_0)]-\min_{\theta}\mathbb{E}[\mathbb{L}^*(\theta)])}{132T\kappa^4l\sigma^2}}\}$, $\eta_{\omega}\leq64\kappa^2\eta_{\theta}$ and $\rho\leq\frac{\eta_{\theta}}{2l}$, we have the convergence bound for our problem:
\begin{equation}
    \frac{1}{T}\sum_{t=0}^{T-1}\mathbb{E}\|\nabla\mathbb{L}^*(\theta_t)\|^2\leq320\sqrt{\frac{3\kappa^4l(\mathbb{E}[\mathbb{L}^*(\theta_0)]-\min_{\theta}\mathbb{E}[\mathbb{L}^*(\theta)])\sigma^2}{11MT}}=\mathcal{O}(\frac{\kappa^2}{\sqrt{MT}}).
\end{equation}

\end{theorem}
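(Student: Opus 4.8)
The plan is to run a single-loop primal–dual (gradient-descent–ascent) analysis built around a Lyapunov function that couples the objective $\mathbb{L}^{*}$ with the dual suboptimality gap $\Delta_t:=\mathbb{E}\big[\mathbb{L}^{*}(\theta_t)-\mathbb{L}(\theta_t,\omega_t)\big]\ge 0$. Lemma~\ref{le:L^*gradient} already gives a descent inequality for $\mathbb{E}[\mathbb{L}^{*}(\theta_t)]$ up to two error sources: the ``dual drift'' $\mathbb{E}\|\nabla\mathbb{L}^{*}(\theta_t)-\nabla_{\theta}\mathbb{L}(\theta_t,\omega_t)\|^2$ and the stochastic term $\sigma^2/M$. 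So the remaining work is (i) to bound the dual drift by $\Delta_t$, (ii) to show $\Delta_t$ contracts, and (iii) to combine, telescope, and tune $(\eta_{\theta},\eta_{\omega},\rho)$. For (i): by Lemma~\ref{le:L*smooth}, $\nabla\mathbb{L}^{*}(\theta_t)=\nabla_{\theta}\mathbb{L}(\theta_t,\omega^{*}(\theta_t))$, so the joint $l$-smoothness of $\mathbb{L}$ (Assumption~\ref{as:smooth}, the same property underlying Lemma~\ref{le:w^*smooth}) gives $\|\nabla\mathbb{L}^{*}(\theta_t)-\nabla_{\theta}\mathbb{L}(\theta_t,\omega_t)\|\le l\|\omega^{*}(\theta_t)-\omega_t\|$; and since the PL condition (Assumption~\ref{as:pl}) implies the quadratic-growth inequality $\mathbb{L}^{*}(\theta)-\mathbb{L}(\theta,\omega)\ge \tfrac{\mu}{2}\|\omega-\omega^{*}(\theta)\|^2$, we get $\mathbb{E}\|\nabla\mathbb{L}^{*}(\theta_t)-\nabla_{\theta}\mathbb{L}(\theta_t,\omega_t)\|^2\le \tfrac{2l^2}{\mu}\Delta_t=2\kappa l\,\Delta_t$.

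For (ii) I would derive a one-step recursion for $\Delta_{t+1}$ by splitting $\mathbb{L}^{*}(\theta_{t+1})-\mathbb{L}(\theta_{t+1},\omega_{t+1})=\big[\mathbb{L}^{*}(\theta_{t+1})-\mathbb{L}^{*}(\theta_t)\big]+\big[\mathbb{L}^{*}(\theta_t)-\mathbb{L}(\theta_t,\omega_{t+1})\big]+\big[\mathbb{L}(\theta_t,\omega_{t+1})-\mathbb{L}(\theta_{t+1},\omega_{t+1})\big]$. The middle bracket contracts: the $\omega$-step is stochastic gradient ascent on $\mathbb{L}(\theta_t,\cdot)$, so the standard SGA-under-PL one-step bound (using $l$-smoothness, PL, and Assumption~\ref{as:bv}) gives $\mathbb{E}[\mathbb{L}^{*}(\theta_t)-\mathbb{L}(\theta_t,\omega_{t+1})]\le(1-\mu\eta_{\omega})\Delta_t+\mathcal{O}(l\eta_{\omega}^2\sigma^2/M)$ for $\eta_{\omega}\le 1/l$. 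The first bracket is $\mathcal{O}(\eta_{\theta})$ times gradient terms and is controlled exactly by Lemma~\ref{le:L^*gradient}. The last ``primal-shift'' bracket is bounded via the gradient inequality plus $l$-smoothness by $\mathcal{O}(\eta_{\theta}\|\theta_{t+1}-\theta_t\|+l\|\theta_{t+1}-\theta_t\|^2)$, and then by Lemma~\ref{eq:gradientbound}—which is where the SAM-perturbed gradient $g_{\theta}(\theta_{t+1/2},\omega_t)$ with $\theta_{t+1/2}=\theta_t+\rho g_{\theta}(\theta_t,\omega_t)$ enters—by $\mathcal{O}\big(\eta_{\theta}^2(\mathbb{E}\|\nabla\mathbb{L}^{*}(\theta_t)\|^2+\kappa l\,\Delta_t+\sigma^2/M)\big)$, provided $\rho\le\eta_{\theta}/(2l)$ keeps the $\rho$-dependent coefficients $\mathcal{O}(1)$. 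Assembling these yields $\Delta_{t+1}\le(1-\tfrac{\mu\eta_{\omega}}{2})\Delta_t+c\,\eta_{\theta}^2\,\mathbb{E}\|\nabla\mathbb{L}^{*}(\theta_t)\|^2+\mathcal{O}(l\eta_{\omega}^2\sigma^2/M+L\eta_{\theta}^2\sigma^2/M)$ for an absolute constant $c$, with $L=l+l\kappa/2$.

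For (iii), define $\Phi_t:=\mathbb{E}[\mathbb{L}^{*}(\theta_t)]+\beta\Delta_t$ with $\beta\asymp\kappa^2 l\eta_{\theta}$. Then the $2\kappa l\,\Delta_t$ dual-drift error in Lemma~\ref{le:L^*gradient} is absorbed by the $-\tfrac12\beta\mu\eta_{\omega}\Delta_t$ contraction—this is exactly where $\eta_{\omega}\le 64\kappa^2\eta_{\theta}$ is needed, so that $\beta\mu\eta_{\omega}\gtrsim\eta_{\theta}\kappa l$—while the $c\beta\eta_{\theta}^2\|\nabla\mathbb{L}^{*}\|^2$ feedback from step (ii) is dominated by the descent term once $\eta_{\theta}\lesssim 1/(\kappa^2 l)$, which the hypothesis $\eta_{\theta}\le 1/(128\kappa^2 l)$ guarantees (it also makes the coefficient $1-5\rho l-2L\eta_{\theta}(4\rho^2l^2+2\rho l+2)$ in Lemma~\ref{le:L^*gradient} bounded below by, say, $\tfrac12$). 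The net one-step bound is $\Phi_{t+1}\le\Phi_t-c'\eta_{\theta}\,\mathbb{E}\|\nabla\mathbb{L}^{*}(\theta_t)\|^2+c''\kappa^4 l\eta_{\theta}^2\sigma^2/M$. Telescoping over $t=0,\dots,T-1$, using $\Phi_T\ge\min_{\theta}\mathbb{E}[\mathbb{L}^{*}(\theta)]$ and $\Phi_0=\mathbb{E}[\mathbb{L}^{*}(\theta_0)]+\beta\Delta_0$, dividing by $c'\eta_{\theta}T$, and finally choosing $\eta_{\theta}\asymp\sqrt{\tfrac{M(\mathbb{E}[\mathbb{L}^{*}(\theta_0)]-\min_{\theta}\mathbb{E}[\mathbb{L}^{*}(\theta)])}{T\kappa^4 l\sigma^2}}$ (which respects $\eta_{\theta}\le 1/(128\kappa^2 l)$ for $T$ large, matching the stated condition) balances the two terms and gives $\tfrac1T\sum_{t}\mathbb{E}\|\nabla\mathbb{L}^{*}(\theta_t)\|^2=\mathcal{O}\big(\sqrt{\kappa^4 l(\mathbb{E}[\mathbb{L}^{*}(\theta_0)]-\min_{\theta}\mathbb{E}[\mathbb{L}^{*}(\theta)])\sigma^2/(MT)}\big)=\mathcal{O}(\kappa^2/\sqrt{MT})$, with the explicit constant $320\sqrt{3/11}$ coming from tracking the constants through the chain above.

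I expect the main obstacle to be step (ii): the $\Delta_t$ contraction must simultaneously (a) track the primal–dual gap through the ascent step using only the PL condition (no strong concavity, so one must lean on PL $\Rightarrow$ quadratic growth rather than standard strongly-concave arguments), (b) absorb the primal shift induced by the perturbed SAM update $\theta_{t+1/2}$, whose second moment is only controlled through Lemma~\ref{eq:gradientbound} and re-introduces the dual-drift term, and (c) produce contraction and noise constants that fit the three simultaneous step-size constraints on $(\eta_{\theta},\eta_{\omega},\rho)$ cleanly enough that the Lyapunov couplings close; the bookkeeping that forces the coefficients to line up is the delicate part.
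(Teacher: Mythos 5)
Your proposal follows essentially the same route as the paper: a coupled Lyapunov function $V_t=\mathbb{L}^*(\theta_t)+\alpha\bigl[\mathbb{L}^*(\theta_t)-\mathbb{L}(\theta_t,\omega_t)\bigr]$ (the paper takes $\alpha=\tfrac{1}{16}$, following the potential of Yang et al.), the descent estimate of Lemma~\ref{le:L^*gradient} for the primal step, a smoothness-based ascent estimate for the $\omega$-step, the PL condition to control the dual drift $\|\nabla\mathbb{L}^*(\theta_t)-\nabla_\theta\mathbb{L}(\theta_t,\omega_t)\|\le l\|\omega^*(\theta_t)-\omega_t\|$, and a telescope-and-tune finish; the only structural variation is that you route the drift through the gap $\Delta_t$ via quadratic growth, whereas the paper bounds it by $\kappa\|\nabla_\omega\mathbb{L}(\theta_t,\omega_t)\|$ and cancels it against the $-\alpha\bigl(\tfrac{\eta_\omega}{2}-\tfrac{l\eta_\omega^2}{2}\bigr)\mathbb{E}\|\nabla_\omega\mathbb{L}(\theta_t,\omega_t)\|^2$ gain from the ascent step --- equivalent moves under PL. One calibration point to fix: with your weight $\beta\asymp\kappa^2 l\eta_\theta$, the contraction $-\tfrac12\beta\mu\eta_\omega\Delta_t$ cannot absorb the $O(\eta_\theta\kappa l)\Delta_t$ drift from Lemma~\ref{le:L^*gradient} (it would force $l\eta_\omega\gtrsim 1$, incompatible with the ascent step); a constant weight as in the paper, together with the two-timescale relation $\eta_\omega\sim\kappa^2\eta_\theta$, is what makes the coefficients close, and your gradient-feedback term in the $\Delta_{t+1}$ recursion is $O(\eta_\theta)\mathbb{E}\|\nabla\mathbb{L}^*(\theta_t)\|^2$ rather than $O(\eta_\theta^2)$, which is why the weight must be a small constant.
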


\begin{proof}
First recall the descending relationship of the function $\mathbb{L}^*$ in Lemma \ref{le:L^*gradient}:
\begin{equation}
    \begin{aligned}
    &\quad\;\,\mathbb{E}[\mathbb{L}^*(\theta_{t+1})]\\
    &\leq\mathbb{E}[\mathbb{L}^*(\theta_t)]-\frac{\eta_{\theta}}{2}(1-5\rho l-2L\eta_{\theta}(4\rho^2l^2+2\rho l+2))\mathbb{E}\|\nabla\mathbb{L}^*(\theta_t)\|^2\\
    &\quad+[\frac{\eta_{\theta}}{2}(1+\frac{1}{2}\rho l)+L\eta_{\theta}^2(4\rho^2 l^2+2\rho l+2)]\mathbb{E}\|\nabla\mathbb{L}^*(\theta_t)-\nabla_{\theta}\mathbb{L}(\theta_t,\omega_t)\|^2+(5\rho^2l^2+2)\frac{L\eta_{\theta}^2\sigma^2}{2M}.
    \end{aligned}
\end{equation}

Then, using the smoothness of the variables $\theta$ and $\omega$ respectively, we can get:
\begin{equation*}
    \begin{aligned}
    \mathbb{L}(\theta_{t+1},\omega_t)&\geq \mathbb{L}(\theta_t,\omega_t)+\langle\nabla_{\theta}\mathbb{L}(\theta_t,\omega_t),\theta_{t+1}-\theta_t\rangle-\frac{l}{2}\|\theta_{t+1}-\theta_t\|^2;    \\
    \mathbb{L}(\theta_{t+1},\omega_{t+1})&\geq \mathbb{L}(\theta_{t+1},\omega_t)+\langle\nabla_{\omega}\mathbb{L}(\theta_{t+1},\omega_t),\omega_{t+1}-\omega_t\rangle-\frac{l}{2}\|\omega_{t+1}-\omega_t\|^2.
    \end{aligned}
\end{equation*}

Taking expectation we can get:
\begin{equation}
    \begin{aligned}
    \mathbb{E}[\mathbb{L}(\theta_{t+1},\omega_t)]&\geq\mathbb{E}[\mathbb{L}(\theta_t,\omega_t)]-\eta_{\theta}\mathbb{E}\langle\nabla_{\theta}\mathbb{L}(\theta_t,\omega_t),\nabla_{\theta}\mathbb{L}(\theta_{t+1/2},\omega_t)\rangle-\frac{l\eta_{\theta}^2}{2}\mathbb{E}\|g_{\theta}(\theta_{t+1/2},\omega_t)\|^2\\
    &\geq\mathbb{E}[\mathbb{L}(\theta_t,\omega_t)]-\eta_{\theta}\mathbb{E}\|\nabla_{\theta}\mathbb{L}(\theta_t,\omega_t)\|^2-\frac{\eta_{\theta}}{2}\mathbb{E}\|\nabla_{\theta}\mathbb{L}(\theta_t,\omega_t)\|^2\\&\quad-\frac{\eta_{\theta}}{2}\mathbb{E}\|\nabla_{\theta}\mathbb{L}(\theta_{t+1/2},\omega_t)-\nabla_{\theta}\mathbb{L}(\theta_t,\omega_t)\|^2-\frac{l\eta_{\theta}^2}{2}\mathbb{E}\|g_{\theta}(\theta_{t+1/2},\omega_t)\|^2\\
    &\geq \mathbb{E}[\mathbb{L}(\theta_t,\omega_t)]-\frac{3\eta_{\theta}}{2}\mathbb{E}\|\nabla_{\theta}\mathbb{L}(\theta_t,\omega_t)\|^2-\frac{l^2\rho^2\eta_{\theta}}{2}\mathbb{E}\|g_{\theta}(\theta_t,\omega_t)\|^2-\frac{l\eta_{\theta}^2}{2}\mathbb{E}\|g_{\theta}(\theta_{t+1/2},\omega_t)\|^2\\
    &\geq\mathbb{E}[\mathbb{L}(\theta_t,\omega_t)]-(\frac{3\eta_{\theta}}{2}+\frac{l^2\rho^2\eta_{\theta}}{2}+l\eta_{\theta}^2(2\rho^2l^2+\rho l+1))\mathbb{E}\|\nabla_{\theta}\mathbb{L}(\theta_t,\omega_t)\|^2\\&\quad-(\frac{l^2\rho^2\eta_{\theta}}{2}+\frac{l\eta_{\theta}^2}{2}(5\rho^2l^2+2))\frac{\sigma^2}{M};\\
    \mathbb{E}[\mathbb{L}(\theta_{t+1},\omega_{t+1})]&\geq\mathbb{E}[\mathbb{L}(\theta_{t+1},\omega_t)]+\eta_{\omega}\mathbb{E}\langle\nabla_{\omega}\mathbb{L}(\theta_{t+1},\omega_t),\nabla_{\omega}\mathbb{L}(\theta_t,\omega_t)\rangle-\frac{l\eta_{\omega}^2}{2}\mathbb{E}\|g_{\omega}(\theta_t,\omega_t)\|^2\\
    &\geq\mathbb{E}[\mathbb{L}(\theta_{t+1},\omega_t)]+\frac{\eta_{\omega}}{2}\mathbb{E}\|\nabla_{\omega}\mathbb{L}(\theta_t,\omega_t)\|^2-\frac{\eta_{\omega}}{2}\mathbb{E}\|\nabla_{\omega}\mathbb{L}(\theta_{t+1},\omega_t)-\nabla_{\omega}\mathbb{L}(\theta_t,\omega_t)\|^2-\frac{l\eta_{\omega}^2}{2}\mathbb{E}\|g_{\omega}(\theta_t,\omega_t)\|^2\\
    &\geq\mathbb{E}[\mathbb{L}(\theta_{t+1},\omega_t)]+\frac{\eta_{\omega}}{2}\mathbb{E}\|\nabla_{\omega}\mathbb{L}(\theta_t,\omega_t)\|^2-\frac{l\eta_{\theta}^2\eta_{\omega}}{2}\mathbb{E}\|g_{\theta}(\theta_{t+1/2},\omega_t)\|^2-\frac{l\eta_{\omega}^2}{2}\mathbb{E}\|g_{\omega}(\theta_t,\omega_t)\|^2\\
    &\geq\mathbb{E}[\mathbb{L}(\theta_{t+1},\omega_t)]+(\frac{\eta_{\omega}}{2}-\frac{l\eta_{\omega}^2}{2})\mathbb{E}\|\nabla_{\omega}\mathbb{L}(\theta_t,\omega_t)\|^2-(l\eta_{\theta}^2\eta_{\omega}(2\rho^2l^2+\rho l+1))\mathbb{E}\|\nabla_{\theta}\mathbb{L}(\theta_t,\omega_t)\|^2\\
    &\quad-(\frac{l\eta_{\omega}^2}{2}+\frac{l\eta_{\theta}^2\eta_{\omega}}{2}(5\rho^2l^2+2))\frac{\sigma^2}{M}.
    \end{aligned}
\end{equation}

Then we construct a potential function in the same way as~\cite{yang2022faster}:
$$V_t=V(\theta_t,\omega_t)=\mathbb{L}^*(\theta_t)+\alpha[\mathbb{L}^*(\theta_t)-\mathbb{L}(\theta_t,\omega_t)],$$
where $\alpha>0$ is a preset parameter. Then we come to evaluate the descending relationship of the potential function $V_t$.

Combining the above inequalities we can get the descending relationship of the potential function:
\begin{align}
    &\quad \mathbb{E}[V_{t+1}]-\mathbb{E}[V_t]  \notag\\
    &=(1+\alpha)(\mathbb{E}[\mathbb{L}^*(\theta_{t+1})]-\mathbb{E}[\mathbb{L}^*(\theta_t)])-\alpha(\mathbb{E}[\mathbb{L}(\theta_{t+1},\omega_{t+1})]-\mathbb{E}[\mathbb{L}(\theta_t,\omega_t)])  \notag\\
    &\leq(1+\alpha)\{-\frac{\eta_{\theta}}{2}(1-5\rho l-2L\eta_{\theta}(4\rho^2l^2+2\rho l+2))\mathbb{E}\|\nabla\mathbb{L}^*(\theta_t)\|^2  \notag\\
    &\quad+[\frac{\eta_{\theta}}{2}(1+\frac{1}{2}\rho l)+L\eta_{\theta}^2(4\rho^2 l^2+2\rho l+2)]\mathbb{E}\|\nabla\mathbb{L}^*(\theta_t)-\nabla_{\theta}\mathbb{L}(\theta_t,\omega_t)\|^2+(5\rho^2l^2+2)\frac{L\eta_{\theta}^2\sigma^2}{2M}\}  \notag\\
    &\quad-\alpha\{-(\frac{3\eta_{\theta}}{2}+\frac{l^2\rho^2\eta_{\theta}}{2}+l\eta_{\theta}^2(2\rho^2l^2+\rho l+1))\mathbb{E}\|\nabla_{\theta}\mathbb{L}(\theta_t,\omega_t)\|^2-(\frac{l^2\rho^2\eta_{\theta}}{2}+\frac{l\eta_{\theta}^2}{2}(5\rho^2l^2+2))\frac{\sigma^2}{M}  \notag\\
    &\quad+(\frac{\eta_{\omega}}{2}-\frac{l\eta_{\omega}^2}{2})\mathbb{E}\|\nabla_{\omega}\mathbb{L}(\theta_t,\omega_t)\|^2-(l\eta_{\theta}^2\eta_{\omega}(2\rho^2l^2+\rho l+1))\mathbb{E}\|\nabla_{\theta}\mathbb{L}(\theta_t,\omega_t)\|^2-(\frac{l\eta_{\omega}^2}{2}+\frac{l\eta_{\theta}^2\eta_{\omega}}{2}(5\rho^2l^2+2))\frac{\sigma^2}{M}\}  \tag{\stepcounter{equation}\theequation}\\
    &=-\frac{\eta_{\theta}}{2}(1+\alpha)(1-5\rho l-2L\eta_{\theta}(4\rho^2l^2+2\rho l+2))\mathbb{E}\|\nabla\mathbb{L}^*(\theta_t)\|^2  \notag\\
    &\quad+(1+\alpha)(\frac{\eta_{\theta}}{2}(1+\frac{1}{2}\rho l)+L\eta_{\theta}^2(4\rho^2 l^2+2\rho l+2))\mathbb{E}\|\nabla\mathbb{L}^*(\theta_t)-\nabla_{\theta}\mathbb{L}(\theta_t,\omega_t)\|^2  \notag\\
    &\quad+\alpha[(\frac{3\eta_{\theta}}{2}+\frac{l^2\rho^2\eta_{\theta}}{2}+l\eta_{\theta}^2(2\rho^2l^2+\rho l+1))+l\eta_{\theta}^2\eta_{\omega}(2\rho^2l^2+\rho l+1)]\mathbb{E}\|\nabla_{\theta}\mathbb{L}(\theta_t,\omega_t)\|^2  \notag\\
    &\quad-\alpha(\frac{\eta_{\omega}}{2}-\frac{l\eta_{\omega}^2}{2})\mathbb{E}\|\nabla_{\omega}\mathbb{L}(\theta_t,\omega_t)\|^2  \notag\\
    &\quad+[(1+\alpha)(5\rho^2l^2+2)\frac{L\eta_{\theta}^2}{2}+\alpha(\frac{l^2\rho^2\eta_{\theta}}{2}+\frac{l\eta_{\theta}^2}{2}(5\rho^2l^2+2))+\alpha(\frac{l\eta_{\omega}^2}{2}+\frac{l\eta_{\theta}^2\eta_{\omega}}{2}(5\rho^2l^2+2))]\frac{\sigma^2}{M}  \notag\\
    &\leq-\{\frac{\eta_{\theta}}{2}(1+\alpha)(1-5\rho l-2L\eta_{\theta}(4\rho^2l^2+2\rho l+2))-2\alpha[(\frac{3\eta_{\theta}}{2}+\frac{l^2\rho^2\eta_{\theta}}{2}+l\eta_{\theta}^2(2\rho^2l^2+\rho l+1))+l\eta_{\theta}^2\eta_{\omega}(2\rho^2l^2+\rho l+1)]\}\notag\\&\quad\mathbb{E}\|\nabla\mathbb{L}^*(\theta_t)\|^2  \notag\\
    &\quad+\{(1+\alpha)(\frac{\eta_{\theta}}{2}(1+\frac{1}{2}\rho l)+L\eta_{\theta}^2(4\rho^2 l^2+2\rho l+2))+2\alpha[(\frac{3\eta_{\theta}}{2}+\frac{l^2\rho^2\eta_{\theta}}{2}+l\eta_{\theta}^2(2\rho^2l^2+\rho l+1))+l\eta_{\theta}^2\eta_{\omega}(2\rho^2l^2+\rho l+1)]\}  \notag\\
    &\quad\mathbb{E}\|\nabla\mathbb{L}^*(\theta_t)-\nabla_{\theta}\mathbb{L}(\theta_t,\omega_t)\|^2  \notag\\
    &\quad-\alpha(\frac{\eta_{\omega}}{2}-\frac{l\eta_{\omega}^2}{2})\mathbb{E}\|\nabla_{\omega}\mathbb{L}(\theta_t,\omega_t)\|^2  \notag\\
    &\quad+[(1+\alpha)(5\rho^2l^2+2)\frac{L\eta_{\theta}^2}{2}+\alpha(\frac{l^2\rho^2\eta_{\theta}}{2}+\frac{l\eta_{\theta}^2}{2}(5\rho^2l^2+2))+\alpha(\frac{l\eta_{\omega}^2}{2}+\frac{l\eta_{\theta}^2\eta_{\omega}}{2}(5\rho^2l^2+2))]\frac{\sigma^2}{M}.
\end{align}

Since we have the following property according to Lemma \ref{le:L*smooth} and the PL condition \ref{as:pl}:
$$\|\nabla\mathbb{L}^*(\theta_t)-\nabla_{\theta}f(\theta_t,\omega_t)\|\leq l\|\omega^*(\theta_t)-\omega_t\|\leq\kappa\|\nabla_{\omega}f(\theta_t,\omega_t)\|.$$

So we can further the above inequality as follows:
\begin{equation}
    \begin{aligned}
    &\quad\mathbb{E}[V_{t+1}]-\mathbb{E}[V_t]\\
    &\leq-\{\frac{\eta_{\theta}}{2}(1+\alpha)(1-5\rho l-2L\eta_{\theta}(4\rho^2l^2+2\rho l+2))-2\alpha[(\frac{3\eta_{\theta}}{2}+\frac{l^2\rho^2\eta_{\theta}}{2}+l\eta_{\theta}^2(2\rho^2l^2+\rho l+1))+l\eta_{\theta}^2\eta_{\omega}(2\rho^2l^2+\rho l+1)]\}\\&\quad\mathbb{E}\|\nabla\mathbb{L}^*(\theta_t)\|^2\\
    &\quad-\{\alpha(\frac{\eta_{\omega}}{2}-\frac{l\eta_{\omega}^2}{2})-\kappa^2[(1+\alpha)(\frac{\eta_{\theta}}{2}(1+\frac{1}{2}\rho l)+L\eta_{\theta}^2(4\rho^2 l^2+2\rho l+2))\\&\quad+2\alpha[(\frac{3\eta_{\theta}}{2}+\frac{l^2\rho^2\eta_{\theta}}{2}+l\eta_{\theta}^2(2\rho^2l^2+\rho l+1))+l\eta_{\theta}^2\eta_{\omega}(2\rho^2l^2+\rho l+1)]]\}\mathbb{E}\|\nabla_{\omega}\mathbb{L}(\theta_t,\omega_t)\|^2\\
    &\quad+[(1+\alpha)(5\rho^2l^2+2)\frac{L\eta_{\theta}^2}{2}+\alpha(\frac{l^2\rho^2\eta_{\theta}}{2}+\frac{l\eta_{\theta}^2}{2}(5\rho^2l^2+2))+\alpha(\frac{l\eta_{\omega}^2}{2}+\frac{l\eta_{\theta}^2\eta_{\omega}}{2}(5\rho^2l^2+2))]\frac{\sigma^2}{M}.
    \end{aligned}
\end{equation}

Then we require the parameters satisfy: $\alpha=\frac{1}{16}$, $\rho l \leq\frac{1}{16}$, $\eta_{\theta}(2\rho l+1)^2\kappa l\leq\frac{1}{64}$, $\kappa^2\eta_{\theta}l\leq\frac{1}{128}$, $\rho\leq\frac{\eta_{\theta}}{2l}$ and $\eta_{\omega}\leq64\kappa^2\eta_{\theta}$.

So the inequality can be further simplified as:
\begin{equation}
    \begin{aligned}
    &\quad\mathbb{E}[V_{t+1}]-\mathbb{E}[V_t]\\
    &\leq-\frac{11}{80}\eta_{\theta}\mathbb{E}\|\nabla\mathbb{L}^*(\theta_t)\|^2-\frac{41}{32}\eta_{\theta}\kappa^2\mathbb{E}\|\nabla_{\omega}f(\theta_t,\omega_t)\|^2+129\kappa^4l\eta_{\theta}^2\frac{\sigma^2}{M}.
    \end{aligned}
\end{equation}

Telescoping the above inequality we can get:
\begin{equation}\label{eq:bound_1}
    \begin{aligned}
    \frac{1}{T}\sum_{t=0}^{T-1}\mathbb{E}\|\nabla\mathbb{L}^*(\theta_t)\|^2\leq\frac{80}{11\eta_{\theta}T}(\mathbb{E}[V_0]-\mathbb{E}[V_T])+960\kappa^4l\eta_{\theta}\frac{\sigma^2}{M}.
    \end{aligned}
\end{equation}

Further, we can evaluate the first term that:
\begin{equation*}
    \begin{aligned}
    \mathbb{E}[V_0]-\mathbb{E}[V_T]&\leq\mathbb{E}[V_0]-\min_{\theta,\omega}\mathbb{E}[V(\theta,\omega)]\\
    &\leq \mathbb{E}[\mathbb{L}^*(\theta_0)]-\min_{\theta}\mathbb{E}[\mathbb{L}^*(\theta)]+\frac{1}{16}(\mathbb{E}[\mathbb{L}^*(\theta_0)]-\mathbb{E}[\mathbb{L}(\theta_0,\omega_0)])\\
    &=\mathbb{E}[\mathbb{L}^*(\theta_0)]-\min_{\theta}\mathbb{E}[\mathbb{L}^*(\theta)]+\frac{1}{16}\Delta_0,
    \end{aligned}
\end{equation*}
where we denote the initial error as: $\Delta_0=\mathbb{E}[\mathbb{L}^*(\theta_0)]-\mathbb{E}[\mathbb{L}(\theta_0,\omega_0)].$

Therefore, the inequality \ref{eq:bound_1} can be further evaluated as:
\begin{equation}\label{eq:bound_2}
    \begin{aligned}
    \frac{1}{T}\sum_{t=0}^{T-1}\mathbb{E}\|\nabla\mathbb{L}^*(\theta_t)\|^2\leq\frac{80}{11\eta_{\theta}T}(\mathbb{E}[\mathbb{L}^*(\theta_0)]-\min_{\theta}\mathbb{E}[\mathbb{L}^*(\theta)])+\frac{5}{11\eta_{\theta}T}\Delta_0+960\kappa^4l\eta_{\theta}\frac{\sigma^2}{M},
    \end{aligned}
\end{equation}
when we select $\eta_{\theta}=\sqrt{\frac{M(\mathbb{E}[\mathbb{L}^*(\theta_0)]-\min_{\theta}\mathbb{E}[\mathbb{L}^*(\theta)])}{132T\kappa^4l\sigma^2}}$, and samples can be minibatch, the convergence can be bounded by:
\begin{equation}
    \frac{1}{T}\sum_{t=0}^{T-1}\mathbb{E}\|\nabla\mathbb{L}^*(\theta_t)\|^2\leq320\sqrt{\frac{3\kappa^4l(\mathbb{E}[\mathbb{L}^*(\theta_0)]-\min_{\theta}\mathbb{E}[\mathbb{L}^*(\theta)])\sigma^2}{11MT}}=\mathcal{O}(\frac{\kappa^2}{\sqrt{MT}}).
\end{equation}

\end{proof}

\begin{table*}
	\small
	\centering
	\caption{\small Quantitative comparisons on distribution-aware robust generalization setting. Averaged accuracy ($\%$) with standard deviations are computed over three independent trails.}
	\vspace{-3mm}
	\setlength{\tabcolsep}{1.8mm}
	\label{tab:appendix_distribution_aware}
	\begin{tabular}{lllcccccc}
		\toprule[1pt]
		\multirow{2}{*}{Dataset} & \multirow{2}{*}{Type} & \multirow{2}{*}{Method} & \multicolumn{6}{c}{Corruption Severity} \\
		&  &  & \multicolumn{1}{c}{0} & \multicolumn{1}{c}{1} & \multicolumn{1}{c}{2} & \multicolumn{1}{c}{3} & \multicolumn{1}{c}{4} & \multicolumn{1}{c}{5} \\ \midrule[0.6pt]
		\multirow{10}{*}{CIFAR10} & \multirow{5}{*}{Snow} & ERM & \multicolumn{1}{c}{$90.8\pm0.01$} & \multicolumn{1}{c}{$90.1\pm0.02$} & \multicolumn{1}{c}{$88.1\pm0.02$} & \multicolumn{1}{c}{$88.1\pm0.02$} & \multicolumn{1}{c}{$85.7\pm0.02$} & \multicolumn{1}{c}{$82.6\pm0.01$} \\
		&  & IRM & \multicolumn{1}{c}{$91.1\pm0.02$} & \multicolumn{1}{c}{$90.7\pm0.01$} & \multicolumn{1}{c}{$89.7\pm0.02$} & \multicolumn{1}{c}{$88.0\pm0.03$} & \multicolumn{1}{c}{$84.6\pm0.02$} & \multicolumn{1}{c}{$83.2\pm0.03$} \\
		&  & REx & \multicolumn{1}{c}{$91.8\pm0.02$} & \multicolumn{1}{c}{$\bm{91.9}\pm\bm{0.01}$} & \multicolumn{1}{c}{$88.4\pm0.01$} & \multicolumn{1}{c}{$88.3\pm0.01$} & \multicolumn{1}{c}{$88.6\pm0.01$} & \multicolumn{1}{c}{$83.0\pm0.02$} \\
		&  & GroupDRO & \multicolumn{1}{c}{$91.5\pm0.02$} & \multicolumn{1}{c}{$91.0\pm0.01$} & \multicolumn{1}{c}{$88.7\pm0.02$} & \multicolumn{1}{c}{$88.6\pm0.02$} & \multicolumn{1}{c}{$85.2\pm0.03$} & \multicolumn{1}{c}{$83.5\pm0.02$} \\
		&  & SharpDRO & \multicolumn{1}{c}{$\bm{93.1}\pm\bm{0.01}$} & \multicolumn{1}{c}{$91.8\pm0.01$} & \multicolumn{1}{c}{$\bm{90.5}\pm\bm{0.02}$} & \multicolumn{1}{c}{$\bm{90.8}\pm\bm{0.02}$} & \multicolumn{1}{c}{$\bm{87.9}\pm\bm{0.01}$} & \multicolumn{1}{c}{$\bm{84.3}\pm\bm{0.02}$} \\ \cline{2-9} 
		
		& \multirow{5}{*}{Shot} & ERM & \multicolumn{1}{c}{$\bm{92.5}\pm\bm{0.02}$} & \multicolumn{1}{c}{$91.1\pm0.02$} & \multicolumn{1}{c}{$89.9\pm0.01$} & \multicolumn{1}{c}{$85.6\pm0.03$} & \multicolumn{1}{c}{$85.7\pm0.01$} & \multicolumn{1}{c}{$78.8\pm0.01$} \\
		&  & IRM & \multicolumn{1}{c}{$90.4\pm0.01$} & \multicolumn{1}{c}{$90.3\pm0.02$} & \multicolumn{1}{c}{$89.4\pm0.02$} & \multicolumn{1}{c}{$86.3\pm0.01$} & \multicolumn{1}{c}{$84.3\pm0.02$} & \multicolumn{1}{c}{$79.1\pm0.02$} \\
		&  & REx & \multicolumn{1}{c}{$91.1\pm0.02$} & \multicolumn{1}{c}{$90.6\pm0.02$} & \multicolumn{1}{c}{$90.2\pm0.03$} & \multicolumn{1}{c}{$86.8\pm0.02$} & \multicolumn{1}{c}{$84.7\pm0.02$} & \multicolumn{1}{c}{$80.5\pm0.01$} \\
		&  & GroupDRO & \multicolumn{1}{c}{$92.2\pm0.01$} & \multicolumn{1}{c}{$\bm{91.4}\pm\bm{0.01}$} & \multicolumn{1}{c}{$89.4\pm0.02$} & \multicolumn{1}{c}{$84.0\pm0.01$} & \multicolumn{1}{c}{$84.7\pm0.02$} & \multicolumn{1}{c}{$78.3\pm0.01$} \\
		&  & SharpDRO & \multicolumn{1}{c}{$92.4\pm0.02$} & \multicolumn{1}{c}{$91.1\pm0.02$} & \multicolumn{1}{c}{$\bm{90.3}\pm\bm{0.02}$} & \multicolumn{1}{c}{$\bm{87.5}\pm\bm{0.02}$} & \multicolumn{1}{c}{$\bm{86.4}\pm\bm{0.02}$} & \multicolumn{1}{c}{$\bm{83.3}\pm\bm{0.02}$} \\
		\midrule[0.6pt]

		\multirow{10}{*}{CIFAR100} & \multirow{5}{*}{Snow} & ERM & $67.7\pm0.01$ & $68.1\pm0.01$ & $64.7\pm0.01$ & $63.1\pm0.01$ & $60.5\pm0.02$ & $57.3\pm0.01$ \\
		&  & IRM & $69.3\pm0.01$ & $67.5\pm0.02$ & $64.9\pm0.02$ & $61.0\pm0.01$ & $58.2\pm0.01$ & $55.1\pm0.01$ \\
		&  & REx & $66.4\pm0.01$ & $65.9\pm0.01$ & $62.4\pm0.01$ & $61.2\pm0.02$ & $57.5\pm0.03$ & $56.0\pm0.02$ \\
		&  & GroupDRO & $68.0\pm0.02$ & $68.2\pm0.01$ & $65.1\pm0.01$ & $60.9\pm0.03$ & $59.8\pm0.01$ & $58.1\pm0.02$ \\
		&  & SharpDRO & $\bm{71.5}\pm\bm{0.01}$ & $\bm{70.8}\pm\bm{0.03}$ & $\bm{67.5}\pm\bm{0.02}$ & $\bm{65.5}\pm\bm{0.01}$ & $\bm{62.3}\pm\bm{0.01}$ & $\bm{59.2}\pm\bm{0.03}$ \\ \cline{2-9}

		& \multirow{5}{*}{Shot} & ERM & $67.6\pm0.03$ & $65.1\pm0.01$ & $62.9\pm0.01$ & $56.0\pm0.01$ & $55.1\pm0.01$ & $47.3\pm0.01$ \\
		&  & IRM & $67.5\pm0.02$ & $65.7\pm0.01$ & $62.7\pm0.01$ & $59.5\pm0.01$ & $55.8\pm0.01$ & $48.3\pm0.01$ \\
		&  & REx & $65.7\pm0.01$ & $63.8\pm0.02$ & $61.9\pm0.01$ & $59.3\pm0.03$ & $53.8\pm0.01$ & $48.1\pm0.01$ \\
		&  & GroupDRO & $67.0\pm0.02$ & $65.8\pm0.01$ & $63.1\pm0.01$ & $58.9\pm0.01$ & $57.5\pm0.01$ & $49.3\pm0.01$ \\
		&  & SharpDRO & $\bm{69.2}\pm\bm{0.01}$ & $\bm{67.3}\pm\bm{0.02}$ & $\bm{65.4}\pm\bm{0.03}$ & $\bm{62.5}\pm\bm{0.01}$ & $\bm{57.7}\pm\bm{0.02}$ & $\bm{51.6}\pm\bm{0.01}$ \\ \midrule[0.6pt]

		\multirow{10}{*}{ImageNet30} & \multirow{5}{*}{Snow} & ERM & $86.7\pm0.03$ & $85.2\pm0.01$ & $83.4\pm0.01$ & $81.1\pm0.01$ & $75.3\pm0.01$ & $75.6\pm0.01$ \\
		&  & IRM & $85.6\pm0.01$ & $84.0\pm0.02$ & $82.1\pm0.03$ & $79.7\pm0.01$ & $75.0\pm0.01$ & $75.6\pm0.01$ \\
		&  & REx & $85.4\pm0.01$ & $84.6\pm0.02$ & $82.7\pm0.02$ & $80.5\pm0.03$ & $75.7\pm0.03$ & $75.9\pm0.03$ \\
		&  & GroupDRO & $86.7\pm0.01$ & $85.5\pm0.03$ & $83.4\pm0.01$ & $81.2\pm0.02$ & $76.3\pm0.01$ & $76.6\pm0.01$ \\
		&  & SharpDRO & $\bm{88.2}\pm\bm{0.02}$ & $\bm{88.2}\pm\bm{0.01}$ & $\bm{85.4}\pm\bm{0.02}$ & $\bm{81.9}\pm\bm{0.01}$ & $\bm{79.8}\pm\bm{0.03}$ & $\bm{79.5}\pm\bm{0.02}$ \\ \cline{2-9}

		& \multirow{5}{*}{Shot} & ERM & $86.9\pm0.01$ & $84.8\pm0.01$ & $83.6\pm0.01$ & $79.7\pm0.01$ & $75.4\pm0.01$ & $64.6\pm0.01$ \\
		&  & IRM & $86.8\pm0.01$ & $85.1\pm0.03$ & $81.5\pm0.01$ & $73.5\pm0.02$ & $68.5\pm0.03$ & $62.5\pm0.03$ \\
		&  & REx & $83.8\pm0.01$ & $86.3\pm0.03$ & $82.5\pm0.02$ & $73.9\pm0.01$ & $70.6\pm0.03$ & $64.0\pm0.02$ \\
		&  & GroupDRO & $86.7\pm0.01$ & $85.6\pm0.03$ & $84.5\pm0.01$ & $80.7\pm0.01$ & $76.2\pm0.04$ & $65.4\pm0.01$ \\
		&  & SharpDRO & $\bm{88.1}\pm\bm{0.01}$ & $\bm{87.2}\pm\bm{0.02}$ & $\bm{84.7}\pm\bm{0.01}$ & $\bm{82.2}\pm\bm{0.01}$ & $\bm{78.2}\pm\bm{0.01}$ & $\bm{67.9}\pm\bm{0.02}$ \\  \bottomrule[1pt]
	\end{tabular}
	\vspace{-2mm}
\end{table*}

\begin{table*}[t]
	\small
	\centering
	\caption{\small Quantitative comparisons on distribution-agnostic robust generalization setting. Averaged accuracy ($\%$) with standard deviations are computed over three independent trails.}
	\vspace{-3mm}
	\setlength{\tabcolsep}{1.8mm}
	\label{tab:appendix_distribution_agnostic}
	\begin{tabular}{lllcccccc}
		\toprule[1pt]
		\multirow{2}{*}{Dataset} & \multirow{2}{*}{Type} & \multirow{2}{*}{Method} & \multicolumn{6}{c}{Corruption Severity} \\
		&  &  & \multicolumn{1}{c}{0} & \multicolumn{1}{c}{1} & \multicolumn{1}{c}{2} & \multicolumn{1}{c}{3} & \multicolumn{1}{c}{4} & \multicolumn{1}{c}{5} \\ \midrule[0.6pt]
		\multirow{6}{*}{CIFAR10} & \multirow{3}{*}{Snow} & JTT & \multicolumn{1}{c}{$88.6\pm0.02$} & \multicolumn{1}{c}{$87.8\pm0.03$} & \multicolumn{1}{c}{$86.5\pm0.02$} & \multicolumn{1}{c}{$87.2\pm0.02$} & \multicolumn{1}{c}{$84.2\pm0.02$} & \multicolumn{1}{c}{$83.2\pm0.03$} \\
		&  & EIIL & \multicolumn{1}{c}{$88.3\pm0.02$} & \multicolumn{1}{c}{$87.8\pm0.01$} & \multicolumn{1}{c}{$85.6\pm0.02$} & \multicolumn{1}{c}{$87.3\pm0.03$} & \multicolumn{1}{c}{$85.2\pm0.04$} & \multicolumn{1}{c}{$82.3\pm0.01$} \\
		&  & SharpDRO & $\bm{91.6}\pm\bm{0.01}$ & $\bm{91.1}\pm\bm{0.02}$ & $\bm{90.8}\pm\bm{0.01}$ & $\bm{89.7}\pm\bm{0.02}$ & $\bm{86.2}\pm\bm{0.01}$ & $\bm{83.8}\pm\bm{0.02}$ \\ \cline{2-9}

		& \multirow{3}{*}{Shot} & JTT & \multicolumn{1}{c}{$91.3\pm0.02$} & \multicolumn{1}{c}{$90.5\pm0.03$} & \multicolumn{1}{c}{$89.3\pm0.01$} & \multicolumn{1}{c}{$86.5\pm0.02$} & \multicolumn{1}{c}{$83.1\pm0.02$} & \multicolumn{1}{c}{$79.8\pm0.02$} \\
		&  & EIIL & \multicolumn{1}{c}{$90.3\pm0.03$} & \multicolumn{1}{c}{$90.1\pm0.02$} & \multicolumn{1}{c}{$88.3\pm0.01$} & \multicolumn{1}{c}{$86.2\pm0.02$} & \multicolumn{1}{c}{$82.3\pm0.03$} & \multicolumn{1}{c}{$78.5\pm0.02$} \\
		&  & SharpDRO & $\bm{91.6}\pm\bm{0.01}$ & $\bm{90.5}\pm\bm{0.02}$ & $\bm{89.8}\pm\bm{0.02}$ & $\bm{88.7}\pm\bm{0.01}$ & $\bm{86.0}\pm\bm{0.02}$ & $\bm{81.7}\pm\bm{0.01}$ \\  \midrule[0.6pt]

		\multirow{6}{*}{CIFAR100} & \multirow{3}{*}{Snow} & JTT & $67.5\pm0.01$ & $68.1\pm0.02$ & $65.3\pm0.02$ & $64.3\pm0.02$ & $60.2\pm0.02$ & $57.8\pm0.02$ \\
		&  & EIIL & $68.2\pm0.03$ & $69.1\pm0.03$ & $65.2\pm0.02$ & $64.0\pm0.02$ & $61.0\pm0.04$ & $57.5\pm0.04$ \\
		&  & SharpDRO & $\bm{70.6}\pm\bm{0.02}$ & $\bm{69.9}\pm\bm{0.03}$ & $\bm{66.7}\pm\bm{0.03}$ & $\bm{64.4}\pm\bm{0.02}$ & $\bm{61.9}\pm\bm{0.03}$ & $\bm{60.7}\pm\bm{0.03}$ \\ \cline{2-9}

		& \multirow{3}{*}{Shot} & JTT & $66.3\pm0.02$ & $65.3\pm0.03$ & $63.4\pm0.02$ & $56.6\pm0.04$ & $55.5\pm0.04$ & $48.6\pm0.04$ \\
		&  & EIIL & $66.5\pm0.02$ & $65.3\pm0.03$ & $62.8\pm0.04$ & $57.5\pm0.02$ & $56.5\pm0.01$ & $49.5\pm0.01$ \\
		&  & SharpDRO & $\bm{68.9}\pm\bm{0.02}$ & $\bm{66.2}\pm\bm{0.03}$ & $\bm{64.9}\pm\bm{0.03}$ & $\bm{60.1}\pm\bm{0.02}$ & $\bm{58.4}\pm\bm{0.03}$ & $\bm{52.7}\pm\bm{0.02}$ \\  \midrule[0.6pt]

		\multirow{6}{*}{ImageNet30} & \multirow{3}{*}{Snow} & JTT & $86.0\pm0.04$ & $85.8\pm0.02$ & $82.3\pm0.03$ & $80.4\pm0.02$ & $74.6\pm0.02$ & $73.5\pm0.02$ \\
		&  & EIIL & $87.5\pm0.01$ & $85.4\pm0.02$ & $83.5\pm0.04$ & $\bm{81.6}\pm\bm{0.01}$ & $76.3\pm0.01$ & $75.8\pm0.02$ \\
		&  & SharpDRO & $\bm{87.5}\pm\bm{0.03}$ & $\bm{86.7}\pm\bm{0.02}$ & $\bm{85.4}\pm\bm{0.02}$ & $81.5\pm0.03$ & $\bm{78.9}\pm\bm{0.02}$ & $\bm{78.5}\pm\bm{0.03}$ \\ \cline{2-9}

		& \multirow{3}{*}{Shot} & JTT & $86.5\pm0.02$ & $85.4\pm0.03$ & $82.6\pm0.04$ & $79.6\pm0.04$ & $77.2\pm0.04$ & $65.0\pm0.01$ \\
		&  & EIIL & $85.5\pm0.01$ & $86.3\pm0.04$ & $81.6\pm0.02$ & $80.2\pm0.03$ & $75.3\pm0.02$ & $64.4\pm0.03$ \\
		&  & SharpDRO & $\bm{87.3}\pm\bm{0.02}$ & $\bm{87.2}\pm\bm{0.03}$ & $\bm{84.6}\pm\bm{0.03}$ & $\bm{83.2}\pm\bm{0.06}$ & $\bm{79.6}\pm\bm{0.03}$ & $\bm{68.0}\pm\bm{0.03}$ \\  \bottomrule[1pt]
	\end{tabular}
	\vspace{-2mm}
\end{table*}

\section{More Details}
\label{appendix_details}
In this section, we first give a practical implementation of our SharpDRO. Then, we provide more experimental details.

\subsection{Practical Implementation}
\label{appendix_implementation}
Our SharpDRO requires two backward phases, so the time complexity is twice as much as plain training, for efficient sharpness computation, please refer to~\cite{du2022efficient, du2022sharpness, zhang2022ga, zhao2022penalizing, zhao2022ss}. In the first step, we record the label prediction $p$ of each data during inference and simultaneously compute the loss $\mathcal{L}$. Additionally, in the first backward pass, we store the computed gradient $\nabla\mathcal{L}(\theta)$. Further, by adding $\epsilon^*$, we use the perturbed model to compute the second label prediction $\hat{p}$, which is further leveraged to compute the sharpness regularization $\mathcal{R}$. Moreover, in the distribution-agnostic setting, the predictions $p$ and $\hat{p}$ from two forward steps are used to compute the OOD score $\omega_i$. Then, we add the recorded gradient $\nabla\mathcal{L}(\theta)$ back to the model parameter and conduct sharpness minimization over the selected worst-case data. In this way, our SharpDRO can be correctly performed. 

\subsection{Experimental Details}
In our experiments, we choose Wide ResNet-28-2~\cite{zagoruyko2016wide} as our backbone model, using stochastic gradient descent with learning rate $3e-2$ as the base optimizer. The momentum and weight decay factor of the optimizer is set to $0.9$ and $5e-4$, respectively. We run all experiments for 200 epochs with three independent trials and report the average test accuracy with standard deviation.

\section{Additional Experiments}
\label{appendix_exp_results}
In the main paper, we have provided the results using ``Gaussian Noise'' corruption and ``JPEG compression'' corruption, here we conduct additional experiments to show the effectiveness of SharpDRO under ``Snow'' and ``Shot Noise'' corruptions. The results on CIFAR10, CIFAR100, and ImageNet30 datasets in both distribution-aware and distribution-agnostic scenarios are shown in Tables~\ref{tab:appendix_distribution_aware} and~\ref{tab:appendix_distribution_agnostic}. We can see that SharpDRO still performances effectively and surpasses other methods with large margin. Especially, on ImageNet30 dataset in both two problem settings, SharpDRO outperforms second-best method about $3\%$, which indicates the capability of SharpDRO on generalization against different corruptions.

\end{document}